\setlist[itemize]{leftmargin=*}
\newcommand{\commentout}[1]{}
\newcommand{\junk}[1]{}
\Crefname{corollary}{Corollary}{Corollaries}
\Crefname{proposition}{Proposition}{Propositions}
\Crefname{theorem}{Theorem}{Theorems}
\Crefname{definition}{Definition}{Definitions}
\Crefname{assumption}{Assumption}{Assumptions}
\Crefname{example}{Example}{Examples}
\Crefname{remark}{Remark}{Remarks}
\Crefname{setting}{Setting}{Settings}
\Crefname{lemma}{Lemma}{Lemmas}
\declaretheorem[name=Theorem,refname={Theorem,Theorems},Refname={Theorem,Theorems}]{theorem}
\declaretheorem[name=Lemma,refname={Lemma,Lemmas},Refname={Lemma,Lemmas},sibling=theorem]{lemma}
\declaretheorem[name=Proposition,refname={Proposition,Propositions},Refname={Proposition,Propositions},sibling=theorem]{proposition}
\declaretheorem[name=Example,refname={Example,Examples},Refname={Example,Examples}]{example}
\declaretheorem[name=Remark,refname={Remark,Remarks},Refname={Remark,Remarks}]{remark}
\newcommand{\cA}{\mathcal{A}}
\newcommand{\cC}{\mathcal{C}}
\newcommand{\cD}{\mathcal{D}}
\newcommand{\cE}{\mathcal{E}}
\newcommand{\cH}{\mathcal{H}}
\newcommand{\cN}{\mathcal{N}}
\newcommand{\cQ}{\mathcal{Q}}
\newcommand{\cS}{\mathcal{S}}
\newcommand{\cX}{\mathcal{X}}
\newcommand{\eps}{\varepsilon}
\newcommand{\realset}{\mathbb{R}}
\newcommand{\abs}[1]{\left|#1\right|}
\newcommand{\I}[1]{\mathds{1} \! \left\{#1\right\}}
\newcommand{\set}[1]{\left\{#1\right\}}
\newcommand{\T}{^\top}
\newcommand{\eg}{\textit{e.g.}}
\newcommand{\spe}{\texttt{SafePE}}
\DeclareMathOperator*{\E}{\mathbb{E}}
\mathchardef\mhyphen="2D
\newcommand{\safeod}{\ensuremath{\tt SafeOD}\xspace}
\newcommand{\ie}{\textit{i.e.}}
\newcommand{\pie}{\pi_{e}}
\renewcommand{\st}{\textit{s.t.}}
\renewcommand{\Pr}{\text{Pr}}
\newcommand{\regret}{\texttt{Regret}}
\begin{document}


\RUNAUTHOR{}

\RUNTITLE{Safe Exploration}

\TITLE{\Large Safe Data Collection for Offline and Online Policy Learning}

\ARTICLEAUTHORS{

\AUTHOR{Ruihao Zhu}
\AFF{Cornell University SC Johnson College of Business\\
	 \EMAIL{ruihao.zhu@cornell.edu}} 
 \AUTHOR{Branislav Kveton}
 \AFF{Amazon \\
 	\EMAIL{bkveton@amazon.com}} 
} 

\ABSTRACT{%
Motivated by practical needs of experimentation and policy learning in online platforms, we study the problem of \emph{safe data collection}. Specifically, our goal is to develop a \emph{logging policy} that efficiently explores different actions to elicit information while achieving competitive reward with a baseline \emph{production policy}. We first show that a common practice of mixing the production policy with randomized exploration, despite being safe, is sub-optimal in maximizing information gain. Then, we propose a safe optimal logging policy via a novel water-filling technique for the case when no side information about the actions' expected reward is available. We improve upon this design by considering side information and also extend our approaches to the linear contextual model to account for a large number of actions. 

Along the way, we analyze how our data logging policies impact errors in off(line)-policy learning and empirically validate the benefit of our design by conducting extensive numerical experiments with synthetic and MNIST datasets. To further demonstrate the generality of our approach, we also consider the safe online learning setting. By adaptively applying our techniques, we develop the Safe Phased-Elimination (\spe) algorithm that can achieve optimal regret bound with only logarithmic number of policy updates.
}%

\KEYWORDS{exploration, safety, policy learning, online platform}


\maketitle

%
\section{Introduction}
\label{sec:introduction}
Experimentation is used widely to identify new policies that efficiently allocate traffic to different actions. With ever-increasing demand for experiments, several companies have developed infrastructure to carry them out at scale (see, \eg, \cite{Optimizely,Google}). Among others, one of the most prevalent experimentation techniques is \emph{adaptive} online learning (\eg, multi-armed bandit \citep{ABF02,LS18}). In a nutshell, online learning dynamically adjusts the policy based on real-time feedback, and gradually allocates more and more traffic to better-performing actions. With its advantage in reducing experimentation cost, (near-)optimal online learning algorithms have been developed for many different applications, such as pricing \citep{KleinbergL03,KZ14}, hyperparameter tuning \citep{LiJDRT17,Quan21}, and recommendation \citep{LCLS10,FerreiraPS22}.

However, although a major progress has been made over the past decades, many challenges remain for an even broader adoption of online learning:
\begin{itemize}[leftmargin=*]
  \item \textbf{Challenge 1. Infrastructure:} To implement fully online learning algorithms in real world, it is necessary to collect responses and update traffic allocation in near real time, which poses significant challenges to the computational infrastructure \citep{GaoHRZ19,ChenCW20,LeviX21}.
  \item \textbf{Challenge 2. Logged-Data Estimation Error and Bias:} In many applications (\eg, ads design and webpage layout), it is important to understand the performance of every action, even those that under-perform \citep{Danilchik20}. However, due to its cost-minimizing nature, online learning algorithms adaptively allocate less traffic to actions with poor historical performance. Therefore, it is common to encounter a major estimation error when estimating their expected reward from the logged data. Even worse, existing works (see, \eg, \cite{NieTTZ17,ShinRR19}) showed that a direct application of maximum likelihood estimation to adaptively collected data can result in a significant bias. The de-biasing is challenging because the data-logging policy is adapted over time to the collected data.
  \item \textbf{Challenge 3. Safety:} Non-Bayesian online algorithms tend to explore extensively in the initial rounds. This can have a major impact on user experience and lead to early termination of the experiment \citep{WuSLS16,BastaniHPS21}.
\end{itemize}

To alleviate the workload of the infrastructure (Challenge 1), practitioners have proposed a refined experimentation scheme, which first collects data by deploying a \emph{static} logging policy, and then utilizes the logged data to learn new policies offline and/or to make un-biased inference if needed (Challenge 2). Finally, the policy is updated based on the learned knowledge (see \cref{fig:policy} for a illustration). In this scheme, the process of learning the policies offline is known as \emph{off-policy learning} \citep{DudikELL14,SwaminathanJ15}, where the performance of a policy is estimated (via the logged data) without deploying it. Depending on the application, this scheme (and its variants) may also be applied iteratively in an online fashion to further reduce experimentation cost with the benefit of low adaptivity (see \eg, \cite{LeviX21,GaoHRZ19}). 
\begin{figure}[!ht]
	\includegraphics[width=9.5cm,height=2.1cm]{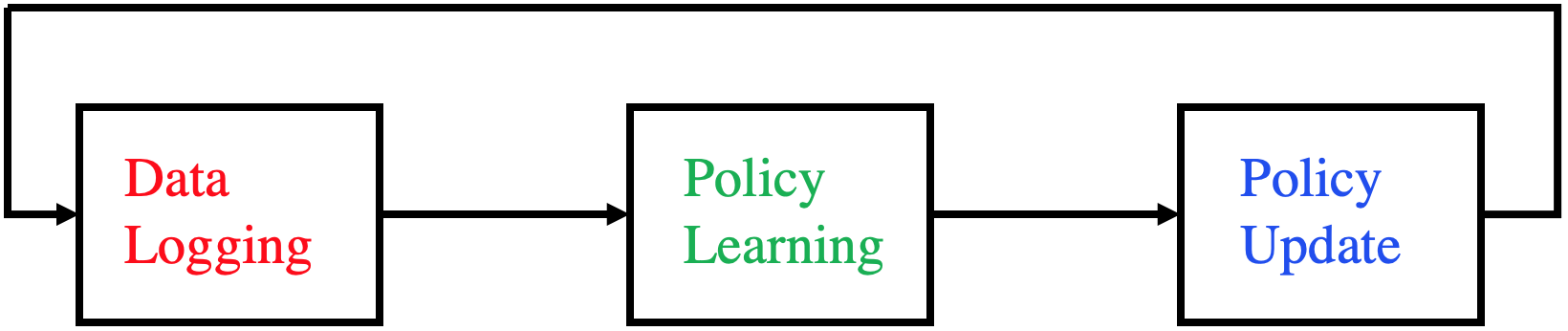}
	\centering
	\caption{Overview of the scheme of developing new policies.}
	\label{fig:policy}
\end{figure}

Off-policy learning crucially relies on sufficiently explored logged data to drawn conclusions about candidate policies. When the data are collected, it is typically necessary to satisfy \emph{safety constraints}, which prohibit excess exploration and too costly experimentation (Challenge 3). To strike the balance, a common practice in the industry is to mix a baseline \emph{production policy} with randomized actions. This results in a logging policy that explores, as it allocates traffic to all actions, but is also safe because the production policy is followed frequently. As an example, if the logging policy has to perform as well as 95\% of the production policy, then 95\% of the traffic is allocated to the production policy, while the rest is randomly allocated to all actions. Whether the logging policy is statistically efficient and suitable for the goal is rarely questioned. Ironically, the performance of this scheme critically depends on the quality of the logged data. This raises an important question of \emph{how to design a logging policy that is both safe and collects high-quality data}.

\subsection{Main Contributions}
 In this work, we make the following contributions:
 
\begin{itemize}
	\item \textbf{Modeling:} To quantify the quality of logged dataset, we study this question through the lens of \emph{G-optimal design} (\ie, globally-optimal design \citep{KieferW60}). In the G-optimal design, the goal is to design a data logging policy that minimizes (a proxy of) the maximal variance in estimating each and every action's expected reward. Motivated by practical safety considerations in experimentation \citep{WuSLS16}, we instantiate the safety constraint as follows: the expected reward of the logging policy is at least an $\alpha$ fraction of that of the production policy. 
	\item \textbf{Optimal Solution:} We first show, perhaps surprisingly, that the common practice of mixing production policy with uniform exploration is sub-optimal. Then, we propose a water-filling algorithm that solves our problem optimally when no side information about the actions' expected reward is available. We improve upon this design by considering side information, and also extend both approaches to a large number of actions with a linear reward model. 
	\item \textbf{Applications in Offline and Online Policy Learning:} In \cref{sec:ope}, we first apply our approach to off-policy evaluation and optimization, and show that our logging policy can provide performance guarantees for the asymptotically optimal inverse propensity score (IPS) estimator \citep{DudikELL14}. We conduct extensive numerical experiments with both synthetic and MNIST \citep{lecun10mnist} datasets to demonstrate the performance of our approaches. In \cref{sec:explore}, to further demonstrate the generality of our approach, we also apply it adaptively to the setting of safe online learning \citep{WuSLS16}, where our goal is to maximize the expected cumulative reward while respecting the safety constraint. Specifically, we propose a novel Safe Phased-Elimination (\spe) algorithm that achieves optimal (up to logarithmic factors) regret, but only with a logarithmic number of policy updates. This helps to significantly reduce the frequency of policy updates compared to prior works, and thus the workload of the infrastructure.
	\item\textbf{Preliminary Version:} A preliminary version \citep{ZhuK22} of this work was published at the 25th International Conference on Artificial Intelligence and Statistics. Compared to \cite{ZhuK22}, in this work
	\begin{enumerate}
	    \item We improve the off-policy learning guarantee in \cref{lemma:ips_error_bound} by further removing a factor of $\sqrt{K},$ where $K$ is the number of actions. This bound appears to be novel and attains state-of-the-art convergence rate;
	    \item We also conduct off-policy evaluation experiments with the MNIST dataset in \cref{sec:mnist dataset};
	    \item More importantly, in \cref{sec:explore}, we additionally apply our approach to safe online learning where more efficient data collection and policy learning is made possible through adaptivity. For this setting, we recover the optimal regret bound, but only using a low number of policy updates.
	\end{enumerate} 
\end{itemize} 
\subsection{Additional Related Works}
\label{sec:related work}
In this section, we review the connections between exploration, learning, and operations as well as prior works in safe exploration. 

\noindent\textbf{Exploration and Learning in Operations:} Online learning algorithms have been widely used in pricing and revenue management. For instance, \cite{KZ14,BK18,ZhuZ20} developed phased exploration type algorithms when the demand model is parametric. In \cite{WangCSL21,ChenG21}, the authors considered pricing problems with non-parametric demand models. More recently, a couple of works have also tried to devise optimal learning algorithms to solve the problem of pricing and inventory control jointly \citep{ChenCA19,LiZ20,ChenWZ20,KeskinLS21}. We remark that the above mentioned works mainly focus on minimizing regret, and safety was not considered (Challenge 3). More importantly, as discussed above (Challenge 2), estimation based on the dataset collected by online learning methods is prone to large estimation error and bias. In contrast, our (forthcoming) formulation and solution address these two concerns.

Another line of works focuses on pure exploration, where the goal is to explore different actions as much as possible in order to infer the reward of them. Among others, \cite{BubeckMS10,JamiesonN14,XiongABI19} considered exploration in stationary environments. More recently, \cite{WuZZZW22} studied how to explore in a constantly changing and thus non-stationary environment. Compared to these works, our safe data collection setting and the corresponding solution similarly benefit the downstream inference tasks by mitigating estimation bias with non-adaptively collected data. But critically, we further introduce the safety consideration, which would lead to drastically different solution.

\noindent\textbf{Safety in Learning:} \citet{WuSLS16} proposed a bandit algorithm that conservatively improves upon a default action. The key idea is to take the default action $\alpha$ fraction of time and improve it over time, with provably better actions with a high probability. This work was generalized to linear bandits by \citet{NIPS2017_bdc4626a} and to combinatorial action spaces, such as in online learning to rank, by \citet{pmlr-v115-li20b}. In a recent work, \cite{XUXBB21} extended the framework to episodic reinforcement learning. Our work is similar to these works by considering a similar safety constraint. However, there are two critical differences: 1) In developing the safe optimal design (in the forthcoming \cref{sec:simplex} and \cref{sec:general}), we learn the most exploratory policy under a safety constraint that collects useful data for future off-policy estimation and optimization; 2) When applied to safe online learning, our \spe~algorithm only requires a logarithmic number of policy changes, which is exponentially fewer than in the existing works. This makes it more suitable for practical use as it does not impose demanding requirements on the infrastructure.

Another popular problem is off-policy optimization with a safety constraint, where the learned policy improves over a logging policy with a high probability \citep{pmlr-v37-thomas15,pmlr-v97-laroche19a}. These works solve an orthogonal problem to ours. They learn policies with enough support to improve over the logging policy, while we explore to collect better data for future off-policy estimation and optimization.

\section{Problem Formulation}
\label{sec:optimal_designs}
\noindent\textbf{Notations:} Let $\cA = [K]: = \set{1, \dots, K}$ be a tabular \emph{action set}. When action $a \in \cA$ is taken, we observe its stochastic reward with (initially) unknown mean $\bar{r}(a)\in[0,1].$ A \emph{policy} $\pi:\cA\to[0,1]$ is a probability distribution on $\cA$ and we denote by $\Pi$ the set of all possible policies. Following Section 21.1 of \cite{LS18}, we use the terms ``policy" and ``design" interchangeably.  To simplify notation, we use $\bar{r}$ and $\pi$ to denote the vectorized expected reward and the policy, \ie, $\bar{r}=(\bar{r}(1), \dots, \bar{r}(K))\T$ and $\pi= (\pi(1), \dots, \pi(K))\T.$ The expected reward of policy $\pi$ is thus $V(\pi;\bar{r})=\pi^{\top}\bar{r}$ (when the context is clear enough, we may also suppress the explicit dependence on $\bar{r}$). For any $p\geq 0,$ we define $\|\cdot\|_p$ as the $p$-norm and $\Delta_{k-1}=\{x\in\realset^{k}:x\geq 0,\|x\|_1= 1\}$ as the $k$-dimensional simplex. For any $c\in\realset,$ we use  $c\bm{1}_k$ to denote the $k$-dimensional vector with all entries equal to $c$ (when $c=0,$ we write this as $\bm{0}_k$). We use $\I{\cdot}$ to denote the indicator function. For two vectors $a,b$ with same dimension, we use $a\circ b$ to denote their Hadamard (coordinate-wise) product. A random variable $X$ is $\sigma^2$-sub-Gaussian if $\E[X] = 0$ and its moment generating function satisfies $\E[\exp(sX)]\leq\exp(s^2\sigma^2/2)$ for all $s\in\realset.$

\noindent\textbf{Tabular Safe Data Collection Setup:} To overcome the challenges posed by online learning-based experimentation (see \cref{sec:introduction}), we deploy a static exploratory \emph{data logging policy} $\pie$ for a certain time interval. For each time step of this interval, we randomly select an action according to $\pie$ and observe the corresponding realized random reward. We seek to leverage the collected data to estimate each action's expected reward $\bar{r}(\cdot)$ and to further identify the optimal policy $\pi_*=\argmax_{\pi\in\Pi}V(\pi)$ offline (see \cref{sec:ope}) or to maximize our cumulative reward (see \cref{sec:explore}). Before formally introducing our objective, we first describe our criteria in developing $\pie:$
\begin{itemize}
\item\textbf{Information Gain:} The quality of our logging policy $\pie$ is measured by $g(\pi) = \max_{a\in\cA}1/\pi(a).$ Intuitively, $g(\pi)$ is a variance proxy of the data collected by $\pi$ and it is proportional to the maximum width of a (high-probability) confidence interval over $a \in \cA$ (see \eg, Section 21.1 of \cite{LS18}). Thus it measures how well we can estimate the unknown expected reward and compute a near-optimal policy. A sensible objective is to find $\pie$ that minimizes $g(\pie)$. Note that $g(\pi)$ is a special case of the G-optimal design objective \citep{KieferW60} and without any constraint, we can set $\pie(a) = 1 / K$ for all $a\in\cA$ to maximize information gain.
\item\textbf{Safety:} To avoid a potentially high cost in deploying $\pie$, we demand that $\pie$'s expected reward is at least $\alpha \in[0,1]$ of that of a \emph{baseline production policy} $\pi_0$ for any instance of expected reward $\bar{r}.$ Specifically, $V(\pie;\bar{r})\geq\alpha V(\pi_0;\bar{r})$ for all $\bar{r},$ where $\alpha\in[0,1]$ is a \emph{safety parameter}. We remark that 
\begin{enumerate}
    \item The safety constraint could be defined alternatively as that $\pie$'s expected reward is at most $\alpha$ less than that of $\pi_0$'s, \ie, $V(\pie;\bar{r}) \geq V(\pi_0;\bar{r}) - \alpha$;
    \item We do not impose that $\pi_0$ is deterministic as it might have to satisfy additional constraints (\eg, resource capacity, fairness, etc.) and cannot keep selecting a single action.
\end{enumerate} 
Nevertheless, our forthcoming results could easily incorporate the above two points as well.
\end{itemize}

\noindent\textbf{Objective:} Formally, we want to design $\pie$ that simultaneously collects high-quality data to maximize information gain and ensures safety. Therefore, our problem is
\begin{align}
  \min \ &g(\pie) \nonumber \\
  \mathrm{s.t.} \
  & \pie\in\Delta_{K-1} \nonumber \\ 
  &\min_{\bar{r}}V(\pie;\bar{r})-\alpha V(\pi_0;\bar{r})
  \geq0 \,.
  \label{eq:tabular_no_side}
\end{align}
To instantiate the second constraint of \eqref{eq:tabular_no_side}, we distinguish two cases based on prior information about $\bar{r}$:
\begin{itemize}
    \item \textbf{No Side Information:} When a brand new experiment is carried out, we have no information about $\bar{r}.$ In this case, we assume no extra information about $\bar{r}$ except for being bounded, \ie, $\bar{r}\in[0,1]^K.$
    \item \textbf{Side Information:} Thanks to historical data from past experiments, prior information about $\bar{r}$ is often available in the form of probabilistic prior \citep{BastaniSLZ21,KvetonKZH21,SimchowitzTK21} or confidence intervals \citep{ZhangJM20}. In this case, we assume that side information about $\bar{r}$ is given as confidence intervals (as this can also be constructed with a given prior), \ie, $\forall a\in\cA,$ $\bar{r}(a)\in[L(a),U(a)]~(\subset[0,1])$. We remark that, for now, we treat this constraint as deterministic except. A high-probability treatment would be analogous except that the claims would hold in with high probability. We demonstrate this in \cref{sec:explore}, where we show that by properly incorporating side information, one can achieve optimal regret in safe online learning.
\end{itemize}

\subsection{Mixing with Uniform Exploration is Sub-Optimal in General}
\label{sec:mixing}
We first show that even a simple variant of our problem has an interesting structure. Specifically, we take the \emph{no side information} case as an example, and show that mixing of the uniform exploration distribution with the production policy is generally sub-optimal.

\noindent\textbf{Mixing with Uniform Exploration:} As indicated by its name, this heuristic would follow $\pi_0$ for $\beta$ fraction of the time while uniformly sample all the actions otherwise. Formally, the policy is defined as 
\begin{align}\label{eq:mixture}
{\pi}_{\beta}:=\beta\pi_0+\frac{(1-\beta)\mathbf{1}_K}{K}.
\end{align} This is a commonly used strategy for multi-armed bandit (see \eg, Section 1.2.1 of \cite{Slivkins19}), reinforcement learning (see \eg, Section 2.2 of \cite{SuttonB18}), and conservative online exploration \citep{WuSLS16,YangWZG21}. 

\noindent\textbf{Balance the Amount of Exploration:} Suppose w.l.o.g. that $\pi_0(1)\leq\ldots\leq\pi_0(K).$ For any $\beta\in[0,1],$ it is easy to verify that $\pi_{\beta}(1)\leq\ldots\leq\pi_{\beta}(K)$ and $g(\pi_{\beta})=\pi_{\beta}(1)^{-1}.$ Since $\pi_0(1)\leq 1/K,$ it is evident that a smaller $\beta$ would lead to a smaller $g(\pi_{\beta}).$ However, we may not be able to set $\beta=0$ due to the safety constraint. To satisfy the safety constraint, we need to enforce that
\begin{align}\label{eq:safety_equivalent}
\pi_{\beta}(a)\geq \alpha \pi_0(a)\qquad \forall a\in\cA.
\end{align} 
This is because if there exists an action $a\in\cA$ such that $\pi_{\beta}(a)<\alpha \pi_0(a)$, then the safety constraint can be easily violated by setting $\bar{r}(a')=0$ for all $a'\in\cA\setminus\{a\}$. To this end, by solving the inequalities $(1-\beta)\pi_0(a)+\beta/K\geq\alpha\pi_0(a)$ for all $a,$ we get that $$\beta\geq\beta_*:=\max\left\{\frac{\alpha-(K\pi_0(K))^{-1}}{1-(K\pi_0(K))^{-1}},0\right\}.$$
It is evident that $\beta_*$ depends on $\pi_0$ through $\pi_0(K).$ Intuitively, this is because as we decrease $\beta$, the safety constraint is violated first for the most frequently taken action. At that point, we know that $\alpha \pi_0(K) = \beta_* \pi_0(K) +(1- \beta_*)/K$. Based on this construction, the following example shows that $\pi_{\beta_*}$ is not always optimal.

\begin{example}
Let $K=3$, the production policy be $\pi_0=(0.1,0.3,0.6)^{\top},$ and the safety parameter be $\alpha=0.8$. Then $\beta_*=0.55$ and $\pi_{\beta_*}=(0.205,0.315,0.48)^{\top}.$ Now consider the policy $\pi=(0.26,0.26,0.48)^{\top}.$ We can verify that the safety constraint is satisfied as $\pi(a)\geq\alpha \pi_0(a)$ for all $a\in\cA.$ But we have $$g(\pi_{\beta_*})={0.205}^{-1}>{0.26}^{-1}=g(\pi),$$ and thus $\pi_{\beta_*}$ is sub-optimal.
\end{example}

The above example shows that mixing of the production policy with a uniform distribution yields a sub-optimal logging policy. In \cref{sec:supplement}, we show that $\pi_{\beta}$ would be sub-optimal if $\pi_0$ takes more than two values and $\alpha$ is above a certain threshold (\ie, when the safety constraint is not too loose) while it would be optimal otherwise.

\section{Tabular Safe Optimal Design}
\label{sec:simplex}
Motivated by our discussions in \cref{sec:mixing}, we introduce our solutions based on safe optimal designs with and without side information. We start with the so-called tabular case.

\subsection{Safe Optimal Design Without Side Information}
\label{sec:simplex_no_side}
We note that $\pi_{\beta}$ in \cref{eq:mixture} of \cref{sec:mixing} is sub-optimal because the peeled-off probability mass from $\pi_0$ is added uniformly to all actions instead of those with the lowest probabilities, so as to reduce $g$ maximally. Thus we consider a more direct \emph{water-filling} method that first takes $(1-\alpha)$ portion mass off from each $\pi_0(a)$ to form $\pi'(a)$ without violating the safety constraint, \ie, $\pi'=\alpha\pi_0,$ and then re-allocate the peeled-off mass to $\pi'$ in a greedy manner. That is, as shown in \cref{fig:wf}, it successively increases the probability mass of the actions with the lowest probabilities in $\pi_0$ until all the $(1-\alpha)$ probability mass is exhausted. 

\begin{figure}[!ht]
\includegraphics[width=9cm,height=3.5cm]{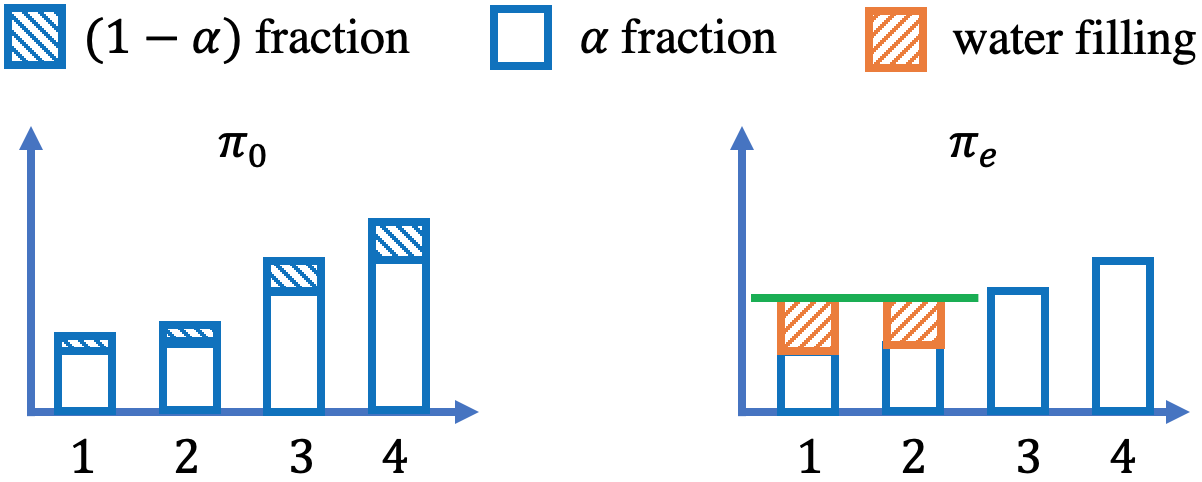}
\centering
\caption{Water-filling method}
\label{fig:wf}
\end{figure}

\noindent\textbf{Water-Filling Method:} Assuming w.l.o.g. that $\pi'(1)\leq\pi'(2)\leq\ldots\leq\pi'(K),$ 
the algorithm searches for the largest $k\in[K]$ such that 
$k\cdot\pi'(k)+\sum_{i=k+1}^K\pi'(i)\leq 1,$ and then sets
$\pie(i)=(1-\sum_{i=k+1}^K\pi'(i))/k$ for all $i\leq k$ and $\pie(i)=\pi'(i)$ for all $i > k.$ Now we establish that the water-filling method is optimal.

\begin{theorem}\label{thm:greedy}
For any policy $\pi$ that satisfies the safety constraint, we have $\min_{a}\pi_e(a)\geq\min_{a}{\pi}(a)$ in the no side information case, \ie, $\bar{r} \in [0, 1]^K$.\end{theorem}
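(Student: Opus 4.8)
\emph{Step 1 (Reduce the safety constraint to a box constraint).} My first move is to replace the functional constraint $\min_{\bar r\in[0,1]^K}\bigl(V(\pie;\bar r)-\alpha V(\pi_0;\bar r)\bigr)\ge 0$ by the equivalent coordinatewise condition $\pie(a)\ge\alpha\pi_0(a)$ for all $a\in\cA$. Sufficiency is immediate because $\bar r\ge\bm{0}_K$; necessity follows, exactly as argued in \cref{sec:mixing} for $\pi_\beta$, by plugging in $\bar r$ equal to the $a$-th standard basis vector. Consequently the feasible set is $\cF=\{\pi\in\Delta_{K-1}:\pi\ge\alpha\pi_0\}$, and since $g(\pi)=\max_a 1/\pi(a)=1/\min_a\pi(a)$, proving \cref{thm:greedy} is the same as showing that the water-filling policy attains $\max_{\pi\in\cF}\min_a\pi(a)$ (so that it also solves \eqref{eq:tabular_no_side}).

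\emph{Step 2 (Pin down the water-filling value).} Write $\pi'=\alpha\pi_0$, sorted so that $\pi'(1)\le\dots\le\pi'(K)$, and note $\|\pi'\|_1=\alpha\le1$. Let $k$ be the largest index with $k\,\pi'(k)+\sum_{i=k+1}^K\pi'(i)\le1$; this set of indices is nonempty because $k=1$ always works. Put $\lambda=\bigl(1-\sum_{i=k+1}^K\pi'(i)\bigr)/k$, so $\pie(i)=\lambda$ for $i\le k$ and $\pie(i)=\pi'(i)$ for $i>k$. I would then verify two things. First, $\pie\in\cF$: it sums to $k\lambda+\sum_{i>k}\pi'(i)=1$ by construction, and $\pie\ge\pi'$ coordinatewise since for $i\le k$ we have $\lambda\ge\pi'(k)\ge\pi'(i)$, the inequality $\lambda\ge\pi'(k)$ being precisely the defining inequality for $k$. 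Second, $\min_a\pie(a)=\lambda$: when $k<K$, maximality of $k$ gives $(k+1)\pi'(k+1)+\sum_{i>k+1}\pi'(i)>1$, which rearranges to $\pi'(k+1)>\lambda$, so every unchanged coordinate exceeds $\lambda$ while every leveled coordinate equals $\lambda$.

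\emph{Step 3 (Optimality by counting).} Let $\pi\in\cF$ be arbitrary and suppose, for contradiction, that $\min_a\pi(a)>\lambda$. Splitting the mass into the bottom $k$ and top $K-k$ coordinates,
\[
1=\sum_{i=1}^K\pi(i)=\sum_{i=1}^k\pi(i)+\sum_{i=k+1}^K\pi(i)>k\lambda+\sum_{i=k+1}^K\pi'(i)=1,
\]
where the strict inequality uses $\pi(i)>\lambda$ for the first $k\ (\ge1)$ terms and $\pi(i)\ge\pi'(i)$ for $i>k$ by feasibility of $\pi$. This contradiction gives $\min_a\pi(a)\le\lambda=\min_a\pie(a)$, which is exactly the claim.

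\emph{Where the difficulty lies.} The argument is genuinely short once Step 1 is in hand, so I expect no deep obstacle: the only real care is the bookkeeping in Step 2 — confirming that the index $k$ is well defined and that $\lambda$ lands in the window $[\pi'(k),\pi'(k+1))$, so that $\pie$ is an honest ``leveling'' of the low coordinates and $\min_a\pie(a)=\lambda$ — together with a uniform treatment of the degenerate cases $\alpha=0$ (then $\pi'=\bm{0}_K$, $k=K$, $\pie=\bm{1}_K/K$) and $k=K$, for which the second block in the Step 3 display is empty. The conceptual content, which also explains why the uniform mixture of \eqref{eq:mixture} is suboptimal, is just the counting observation: any feasible policy whose coordinates all strictly exceed $\lambda$ would carry total mass above $1$, whereas water-filling spends the free mass $1-\alpha$ only on coordinates still below the eventual floor $\lambda$.
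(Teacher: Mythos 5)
Your proof is correct and follows essentially the same route as the paper's: reduce the safety constraint to the coordinatewise condition $\pi\geq\alpha\pi_0$, exploit the structure of the water-filling output (low coordinates leveled to a common value $\lambda=\min_a\pie(a)$, the rest left at $\alpha\pi_0$), and derive a contradiction from $\min_a\pi(a)>\lambda$. The only difference is cosmetic: the paper locates the first coordinate where $\pi$ falls below $\pie$ and case-splits on whether it lies in the leveled or untouched group, whereas you sum the mass directly to get $1>1$; your version is arguably cleaner and also makes explicit the feasibility and well-definedness checks (existence of $k$, $\lambda\in[\pi'(k),\pi'(k+1))$) that the paper leaves implicit.
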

\begin{proof}[Proof Sketch] We prove this claim by contradiction. By virtue of water filling, the actions in $\cA$ could be divided into two groups: those that satisfy $\pie(a)=\alpha\pi_0(a),$ \ie, they do not receive any additional mass during the water-filling step; and those that receive additional mass during the water-filling step and their probabilities become the same. Critically, probabilities of all actions in the second group are equal to $\min_{a\in\cA}\pie(a)$. Now, for a policy $\pi$ to have $\min_{a\in\cA}\pi(a)>\min_{a\in\cA}\pie(a),$ it must be $\pi(a)<\alpha\pi_0(a)$ for some $a$ in the first group (because if $\pi(a)<\pie(a)$ for some $a$ in the second group, $\min_{a\in\cA}\pi(a)<\min_{a\in\cA}\pie(a)$), which would violate the safety constraint (when $\bar{r}(i)=\I{i=a}$). The complete proof is provided in \cref{sec:thm:greedy}.
\end{proof}

\subsection{Safe Optimal Design With Side Information}
\label{sec:simplex_side}
Now we turn to the case with side information. The side information gives us more flexibility in satisfying the safety constraint. Notably, now $\pie(a)<\alpha\pi_0(a)$ can happen for some actions $a$ as long as $\pie$ allocates enough probability to actions with high expected reward to compensate for this deficit. The water-filling method in \cref{sec:simplex_no_side} does not solve this problem optimally anymore. Instead, we formulate the problem of finding the optimal policy $\pie$ as 
\begin{align*}
  \text{P}_1(L,U,\pi_0):\quad\max\ &\gamma\quad\\
  \mathrm{s.t.} \
  &\pie \geq \gamma\mathbf{1}_K\,,\ \pie \in \Delta_{K - 1} \\
  &\min_{\bar{r}\in[L,U]}(\pie - \alpha\pi_0)\T \bar{r} \geq 0\,.
\end{align*}
Here $\gamma$ is a tight lower bound for $\min_a \pie(a)$ and by maximizing $\gamma,$ we equivalently minimize $g(\pie).$ The last constraint enforces that $V(\pie)\geq\alpha V(\pi_0)$ holds for all possible $\bar{r}\in[L,U] \subseteq [0, 1]^K.$ Note that when $[L,U]=[0,1]^K,$ we can recover the solution of the water-filling method for the no side information case (\cref{sec:simplex_no_side}).

One challenge posed by $\text{P}_1(L,U,\pi_0)$ is that its last constraint implicitly contains infinitely many constraints. These constraints can be satisfied incrementally using the cutting-plane method (see, \eg, Chapter 6.3 of \cite{BertsimasT97}). More elegantly though, motivated by robust optimization \citep{TalGN09}, we consider the following sub-optimization problem  based on the last constraint
\begin{align*}
    \text{P}_2(L,U,\pi_0,\pie):\quad\min\ &(\pie - \alpha \pi_0)\T \bar{r} \\
    \mathrm{s.t.} \ &L\leq\bar{r}\leq U
\end{align*}
and its dual 
\begin{align*}
   \text{D}_2(L,U,\pi_0,\pie):\quad\max\ &L^{\top}z_1-U^{\top}z_2\quad \\
   \mathrm{s.t.} \ &z_1-z_2=\pie-\alpha\pi_0\,,\ z_1,z_2\geq \bm{0}\,,
\end{align*} 
where $z_1$ and $z_2$ are $K$-dimensional vectors serving as dual variables. Since $\text{P}_2(L,U,\pi_0,\pie)$ has a finite optimal value, by strong duality (see \eg, chapter 4 of \cite{BertsimasT97}), we have that the optimal objective values of $\text{P}_2(L,U,\pi_0,\pie)$ and $\text{D}_2(L,U,\pi_0,\pie)$ are the same. Thus $\text{P}_1(L,U,\pi_0)$ can be equivalently written as 
\begin{align*}
  \text{P}_3(L,U,\pi_0):\quad\max\ &\gamma\quad\\
  \mathrm{s.t.} \
  &\pie \geq \gamma\mathbf{1}_K\,,\ \pie \in \Delta_{K - 1}\\ &L^{\top}z_1-U^{\top}z_2 \geq 0\,,\ z_1-z_2=\pie-\alpha\pi_0\,,\
  z_1,z_2\geq \bm{0}\,.
\end{align*}
Intuitively, using the duality between $\text{P}_2(L,U,\pi_0,\pie)$ and $\text{D}_2(L,U,\pi_0,\pie),$ we translate the minimization problem in the last constraint of $\text{P}_1(L,U,\pi_0)$ to a maximization problem. As a consequence, instead of checking whether $\pie$ satisfies $(\pie-\alpha\pi_0)^{\top}\bar{r}\geq0$ for all possible $\bar{r}\in[L,U],$ one only needs to find a single pair $z_1,z_2$ that satisfies the last three constraints in $\text{P}_3(L,U,\pi_0)$. Therefore, $\text{P}_3(L,U,\pi_0)$ is a linear program that can be solved directly. 

Following the duality argument above, the equivalence of $\text{P}_1(L,U,\pi_0)$ and  $\text{P}_3(L,U,\pi_0)$ can be established. For completeness, we include the proof of the following theorem in \cref{sec:thm:duality}.
\begin{theorem}\label{thm:duality}
The optimal value of $\text{P}_1(L,U,\pi_0)$ is equal to the optimal value of $\text{P}_3(L,U,\pi_0).$
\end{theorem}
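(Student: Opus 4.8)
The plan is to show that the two programs have the same feasible region once projected onto the shared variables $(\gamma,\pie)$; since both maximize $\gamma$, equality of the optimal values follows at once. Throughout, fix an arbitrary candidate policy $\pie\in\Delta_{K-1}$ and note that the semi-infinite constraint $\min_{\bar r\in[L,U]}(\pie-\alpha\pi_0)^\top\bar r\ge 0$ appearing in $\text{P}_1(L,U,\pi_0)$ is, by definition, nothing but the assertion that the optimal value of the linear program $\text{P}_2(L,U,\pi_0,\pie)$ is nonnegative. Because $[L,U]\subseteq[0,1]^K$ is nonempty and compact, $\text{P}_2(L,U,\pi_0,\pie)$ is feasible with a finite optimal value, so LP strong duality gives $\mathrm{val}\bigl(\text{P}_2(L,U,\pi_0,\pie)\bigr)=\mathrm{val}\bigl(\text{D}_2(L,U,\pi_0,\pie)\bigr)$.

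Next I would observe that $\text{D}_2(L,U,\pi_0,\pie)$ is always feasible: taking $z_1=\pospart{\pie-\alpha\pi_0}$ and $z_2=\negpart{\pie-\alpha\pi_0}$ (coordinatewise positive and negative parts) yields $z_1-z_2=\pie-\alpha\pi_0$ and $z_1,z_2\ge\bm{0}_K$. Since $\text{D}_2$ is a maximization over a nonempty feasible set, $\mathrm{val}\bigl(\text{D}_2(L,U,\pi_0,\pie)\bigr)\ge 0$ if and only if there exists a feasible pair $(z_1,z_2)$ with $L^\top z_1-U^\top z_2\ge 0$. Chaining this with strong duality, the semi-infinite safety constraint on $\pie$ holds if and only if there exist $z_1,z_2$ with $z_1-z_2=\pie-\alpha\pi_0$, $z_1,z_2\ge\bm{0}_K$, and $L^\top z_1-U^\top z_2\ge 0$ — which is exactly the last three constraints of $\text{P}_3(L,U,\pi_0)$, in which $z_1,z_2$ enter only as auxiliary variables and therefore play the role of the existential quantifier ``$\exists\,z_1,z_2$'' produced by the duality step.

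Finally I would assemble the two inclusions. Given any $(\gamma,\pie)$ feasible for $\text{P}_1(L,U,\pi_0)$, the argument above produces $(z_1,z_2)$ for which $(\gamma,\pie,z_1,z_2)$ is feasible for $\text{P}_3(L,U,\pi_0)$ with the same objective $\gamma$; conversely, given any $(\gamma,\pie,z_1,z_2)$ feasible for $\text{P}_3(L,U,\pi_0)$, the pair $(z_1,z_2)$ is feasible for $\text{D}_2(L,U,\pi_0,\pie)$ and attains a nonnegative value, hence $\mathrm{val}(\text{D}_2)\ge 0$, hence $\mathrm{val}(\text{P}_2)\ge 0$ by strong duality, so $(\gamma,\pie)$ is feasible for $\text{P}_1(L,U,\pi_0)$. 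Thus the projections of the two feasible sets onto $(\gamma,\pie)$ coincide, and the optimal values are equal.

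I expect the only genuine subtleties to be (i) the clean justification that strong duality applies to $\text{P}_2(L,U,\pi_0,\pie)$ for every admissible $\pie$ — which reduces to $[L,U]$ being nonempty (so feasible) and the objective being bounded on a compact box (so finite optimal value) — and (ii) the bookkeeping point that introducing $z_1,z_2$ as free decision variables in $\text{P}_3(L,U,\pi_0)$ is logically equivalent to the existential statement obtained from duality; everything else is routine. One mild technical caveat worth a sentence in the write-up: the equivalence holds for the exact (deterministic) box $[L,U]$; the high-probability version alluded to earlier in the paper would then follow by conditioning on the event that $\bar r\in[L,U]$.
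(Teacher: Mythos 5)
Your proposal is correct and follows essentially the same route as the paper: invoke strong LP duality between $\text{P}_2(L,U,\pi_0,\pie)$ and $\text{D}_2(L,U,\pi_0,\pie)$ to replace the semi-infinite safety constraint by the existence of a dual-feasible pair $(z_1,z_2)$, then match feasible sets in both directions. If anything, you are more careful than the paper's write-up, which only spells out one inclusion and asserts the converse; your observations that $\text{D}_2$ is always feasible (via positive/negative parts) and that the converse inclusion needs only weak duality are worth keeping.
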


\begin{remark}\label{remark:tabular_side}
An alternative way of solving $\text{P}_1(L,U,\pi_0)$ follows from the observation that, in the last constraint of $\text{P}_1(L,U,\pi_0),$ the minimum is attained at either $\bar{r}(a)=L(a)$ (if $\pie(a)-\alpha\pi_0(a)\geq0$) or $U(a)$ (if $\pie(a)-\alpha\pi_0(a)<0$). We can thus introduce a variable $z\in\realset^K$ to serve as a coordinate-wise lower bound for $(\pie-\alpha\pi_0)\circ L$ and $(\pie-\alpha\pi_0)\circ U,$ and mandate that $z^{\top}\mathbf{1}_K\geq0$ to ensure $\min_{\bar{r}\in[L,U]}(\pie-\alpha\pi_0)^{\top}\bar{r}\geq0.$ Consequently, we can rewrite $\text{P}_1(L,U,\pi_0)$ as $\text{P}'_1(L,U,\pi_0):$
\begin{align*}\max\ &\gamma \quad\\
  \mathrm{s.t.} \
  &\pie \geq \gamma\mathbf{1}_K\,, \pie \in \Delta_{K - 1} \\ 
  &z^{\top}\mathbf{1}_K\geq0\,,\ (\pie-\alpha\pi_0)\circ L\geq z\,, \
  (\pie-\alpha\pi_0)\circ U\geq z\,.
\end{align*} 
The equivalence of $\text{P}_1(L,U,\pi_0)$ and $\text{P}'_1(L,U,\pi_0)$ is formally established in \cref{sec:remark:tabular_side}. 
\end{remark}

\section{Linear Safe Optimal Design}\label{sec:general}
So far we assumed a tabular action set $\cA,$ where the expected reward of actions are unrelated. While this setting is suitable for a small number of actions, the performance (\ie, the objective function $g$) would quickly deteriorate if $|\cA|$ was large. The reason is that if no correlations exist among the expected reward, $\min_{a\in\cA}\pi(a)\leq1/|\cA|$ as $\sum_{a\in\cA}\pi(a)=1$, and hence $g(\pi)\geq |\cA|$ even without any safety constraints. This essentially implies that if we apply our tabular methods to a large action set, the collected dataset would lead to major estimation error for policy learning (see the forthcoming \cref{sec:ope} and \cref{lemma:ips_error_bound} therein). Even worse, in practice, $|\cA|$ is expected to be large in many popular applications, such as the large pool of ads in online advertising \citep{LCLS10,CLRS11} or the combinatorial action space in online recommendations \citep{SwaminathanKAD17,VlassisCGK21}. 

To address the challenge of large action sets, prior works relied on features \citep{AYPS11,SwaminathanKAD17} or exploits the structures of the action sets \citep{RadlinskiKJ08,KvetonSWA15}. Here, we take the former approach and follow the linear function approximation scheme, where the expected reward of each action is linear in the action's features and an underlying shared reward parameter \citep{AYPS11,SwaminathanKAD17}. We adopt this approach and generalize our results to the linear function approximation. Let $\cA \subset \realset^d$ be the action set that contains a collection of $d$-dimensional feature vectors with $\|a\|_2\leq 1~\forall a\in\cA.$ For any logging policy $\pi:\cA\to[0,1],$ we generalize $g(\pi)$ in \cref{sec:optimal_designs} to
\begin{align}
  g(\pi)
  = \max_{a\in\cA}
  a\T G(\pi)^{-1} a\,,
  \label{eq:g-optimal design}
\end{align}
where $G(\pi)=\sum_{a\in\cA} \pi(a) a a\T.$ We remark that in the tabular case, $\cA$ would be the standard Euclidean basis.

Similarly to the tabular case, $g$ is proportional to the maximum width of a high-probability confidence interval over $a \in \cA$ (see \eg, Section 21.1 of \cite{LS18}). Our goal is to design a logging policy $\pie$ that minimizes $g(\pie)$, so as to minimize our estimation error. In \emph{absence} of the safety constraint, this is the general form of the G-optimal design \citep{KieferW60}, which is a convex optimization problem that can be solved efficiently by the Frank-Wolfe algorithm (see \eg, \cite{Fedorov72}). To describe the safety constraint, we let $\theta_*\in\cS_{d-1} = \{\theta\in\realset^d:\|\theta\|_2\leq 1\}$ be an unknown parameter vector and  $\bar{r}(a)=a\T\theta_*~(\in[0,1])$ be the expected reward of action $a.$ Then the safety constraint would require that $ V(\pie)\geq\alpha V(\pi_0)$ for all $\theta_*\in\cS_{d-1}.$  

\noindent\textbf{Side Information:} In linear models, confidence intervals on $\theta_*$ are often given in the form of ellipsoids (see, \eg, \cite{AYPS11,BK18} or Chapter 20 of \cite{LS18}). We consider this generalization here, by assuming that the unknown parameter $\theta_*$ falls in a confidence ellipsoid (possibly with a high probability)
$$
  \Theta:
  = \set{\theta \in \realset^d: (\theta - \bar{\theta})\T
  \bar{\Sigma}^{-1} (\theta - \bar{\theta}) \leq1}.
$$
Here $\bar{\theta}\in\realset^d$ is the center of the ellipsoid and $\bar{\Sigma}^{-1}\in\realset^{d\times d}$ is a positive definite matrix whose eigenvectors are the directions of the principal semi-axes of the ellipsoid and whose eigenvalues are the reciprocals of the squares of the lengths of the semi-axes.
\begin{remark}
We point out that if we consider no side information (\cref{sec:simplex_no_side}) or the coordinate-wise side information (\cref{sec:simplex_side}), we can apply the results from \cref{sec:simplex} to compute the optimal designs.
\end{remark}

To ease exposition, we assume that $\Theta\subset\cS_{d-1}$ and define a $d \times K$ matrix $A=(a^{(1)},\ldots,a^{(K)}),$ where $a^{(i)}$ is the $i$-th action in $\cA$. Then the safety constraint can be written as
$
  \pie^{\top} A^{\top} \theta_*
  \geq \alpha\pi_{0}^{\top} A^{\top} \theta_*~~\forall \theta_* \in \Theta,$
and the problem of finding the optimal logging policy that satisfies the safety constraint is 
\begin{align}\label{eq:linear_side}
  \nonumber\text{P}_4(\Theta,\pi_0):\quad\min\ 
 &g(\pie)\quad\\\st \ \nonumber&\pie\in\Delta_{K-1}\\
 &\min_{\theta_*\in\Theta}(\pie-\alpha\pi_0)^{\top}A^{\top}\theta_*\geq0\,.
\end{align} 

\subsection{Exact Solution and Computational Bottleneck}\label{sec:duality_linear}
As in the tabular case, the last constraint of $\text{P}_4(\Theta,\pi_0)$ also requires the inequality to hold for a continuum of $\theta_*$, and hence implicitly consists of infinitely many constraints. We could follow the duality approach in \cref{sec:simplex_side}, to convert $\text{P}_4(\Theta,\pi_0)$ to a convex optimization problem with a quadratic constraint. Specifically, since $\Theta$ is an ellipsoid, we derive a closed-form expression for this constraint via the Lagrangian multiplier method (included in \cref{lemma:lagrangian} of \cref{sec:auxiliary} for completeness) as
\begin{align}
	\nonumber\min_{\theta_*\in\Theta}(\pie-\alpha\pi_0)^{\top}A^{\top}\theta_*=-\max_{\theta_*\in\Theta}(\alpha\pi_0-\pie)^{\top}A^{\top}\theta_*=(\pie-\alpha\pi_0)^{\top}A^{\top}\bar{\theta}-\sqrt{(\pie-\alpha\pi_0)\T A\T\bar{\Sigma}A(\pie-\alpha\pi_0)}\,.
\end{align}
Then the optimization problem $\text{P}_4(\Theta,\pi_0)$ can be simplified as
\begin{align}
	\nonumber\text{P}_5(\Theta,\pi_0):\quad&\min\ g(\pie)\\
	\nonumber\quad&\st \ \pie\in\Delta_{K-1}\\
	\nonumber&\sqrt{(\pie-\alpha\pi_0)\T A\T\bar{\Sigma}A(\pie-\alpha\pi_0)}\leq(\pie-\alpha\pi_0)\T A\T\bar{\theta}\,.
\end{align}
It is now evident that both the objective and constraints are convex in $\pie$. Therefore, one way to solving this problem would be to apply the idea of online convex optimization followed by the trick of online-to-batch conversion (see, \eg, chapter 3 of \cite{Orabona19}). Note that $\text{P}_5(\Theta,\pi_0)$ has a quadratic constraint, which implies solving it directly via conventional iterative convex optimization algorithm (\eg, gradient descent) would still be computationally challenging. This is because we would need a computationally expensive projection step, which casts the updated intermediate solution back to the feasible region, upon each update.

\subsection{Frank-Wolfe with a Cutting Plane Method}
\label{sec:frank-wolfe}
We solve problem $\text{P}_4(\Theta,\pi_0)$ without projections by using a Frank-Wolfe algorithm \citep{FrankW56} with the cutting-plane method (see, \eg, chapter 6.3 of \cite{BertsimasT97}).

The algorithm is iterative and we denote its output after iteration $i$ by ${\pi}^{(i)}$. The initial solution ${\pi}^{(0)}$ is set to $\pi_0$. In each iteration, the Frank-Wolfe algorithm proceeds by minimizing a linear approximation of the objective function and sets $\pi^{(i+1)}$ to its minimizer. 

\noindent\textbf{Linearization:} More formally, we use $H(\pi)$ to denote the gradient of $\max_{a\in\cA}a^{\top}G(\pi)^{-1}a$ at $\pi.$ At the beginning of each iteration $i,$ the Frank-Wolfe algorithm considers the following linear program 
\begin{align*}
 \text{P}_6(\Theta,\pi_0):\quad \min \quad &\mathring{\pi}^{\top}H\left(\pi^{(i-1)}\right)\quad\\
   \mathrm{s.t.} \  &\mathring{\pi}\in\Delta_{K-1}\\ &\min_{\theta_*\in\Theta}(\mathring{\pi}-\alpha\pi_0)^{\top}A^{\top}\theta_*\geq0\,.
\end{align*}
Let $\mathring{\pi}^{(i)}$ be the optimal solution to the above linear program. Then we set $${\pi}^{(i)}=\pi^{(i-1)}+\eta(\mathring{\pi}^{(i)}-\pi^{(i-1)}),$$ where $\eta\in[0,1]$ is chosen (\eg, via line search) such that $\max_{a\in\cA}a^{\top}G({\pi}^{(i)})^{-1}a$ is minimized. The final output of this algorithm is $\pie.$
\begin{remark}
	We note that a different version of the Frank-Wolfe algorithm is implemented in chapter 21.2 of \cite{LS18} (see note 3) and its $\eta$ is computed in a closed form. This is because in the unconstrained setting, \ie, without safety constraint, one can choose to increase $\pie(a)$ for an arbitrary $a\in\cA$ without violating the safety constraint. However, in our case, changing $\pie(a)$ in this way could violate the safety constraint.
\end{remark}

\noindent\textbf{Computing the Gradient:} To work out $H(\pi),$ we compute the partial derivative of the objective function w.r.t. $\pi(a)$ as
\begin{align*}
	& \frac{\partial \max_{a \in \cA} a\T G\left({\pi}\right)^{-1} a}{\partial \pi(a)} 
	  = a_{\max}^{\top}\frac{\partial G(\pi)^{-1}}{\partial \pi(a)} a_{\max}
	  = - a_{\max}^{\top} G(\pi)^{-1} a a\T G(\pi)^{-1} a_{\max} = - \left(a_{\max}^{\top} G(\pi)^{-1} a\right)^2\,,
\end{align*}  
where $a_{\max} = \argmax_{a \in \cA} a\T G(\pi)^{-1} a$ is the action that achieves the maximum for a given policy $\pi$. The first equality follows from the fact that $a_{\max}$ is the maximizer under $\pi$. The second equality combines the derivative of matrix inverse with the fact that $G(\pi)$ is linear in $\pi$. Consequently,
$$H(\pi)= - \big(\left(a_{\max}^{\top} G(\pi)^{-1} a^{(1)}\right)^2, \ldots, \left(a_{\max}^{\top} G(\pi)^{-1} a^{(K)}\right)^2\big)^{\top}.$$ 
Here, we recall that $a^{(i)}$ is the $i$-th action in $\cA$.

\noindent\textbf{Dealing with Infinitely Many Constraints:} As before, the last constraint of $\text{P}_6(\Theta,\pi_0)$ implicitly includes infinitely many constraints. To address this, we generate the constraints incrementally using the cutting-plane method in each iteration $i$. Specifically, we start with $S$ as the empty set and denote by $\mathring{\pi}^{(i)}_S$ the corresponding optimal solution to $\text{P}_6(S,\pi_0)$. For a given $S$, we find the most violated constraint in $\Theta$ using
\begin{align*}
  \theta_S
  & = \argmin_{\theta_* \in \Theta} (\mathring{\pi}^{(i)}_S-\alpha\pi_0)^{\top}A^{\top}\theta_* = \bar{\theta}+\frac{\bar{\Sigma}A(\alpha\pi_0-\mathring{\pi}^{(i)}_S)}{\sqrt{(\alpha\pi_0-\mathring{\pi}^{(i)}_S)^{\top}A^{\top}\bar{\Sigma}A(\alpha\pi_0-\mathring{\pi}^{(i)}_S)}}\,.
\end{align*} 
The above closed-form solution follows from the fact that this problem is equivalent to maximizing a linear function on an ellipsoid; and we prove this in \cref{sec:auxiliary}. Then $S$ is updated to $S \cup \{\theta_S\}$, and we repeat this until no constraint is violated, \ie, $\min_{\theta_* \in \Theta} (\mathring{\pi}^{(i)}_S-\alpha\pi_0)^{\top}A^{\top}\theta_*\geq0.$

\section{Application I: Off-Policy Evaluation and Optimization}\label{sec:ope}
In this section, we apply our method to off-policy learning, where we use data collected by a logging policy to estimate the expected reward of another policy $\pi$ without ever deploying it. Previously, to ease exposition, we omitted dependence on contextual information in the definition of the reward function. In this section, we consider the more practical contextual setting \citep{LiCLW11,DudikELL14}.

\subsection{Tabular Off-Policy Evaluation and Optimization}
\label{sec:simplex_ope}

\noindent\textbf{Additional Notation and Setup:} Following \cref{sec:optimal_designs}, we consider the tabular action set. To model the contextual information, we assume that there exists a finite set of contexts $\cX$. A policy $\pi: \cX \to \Delta_{K - 1}$ is a mapping from a context to a probability distribution over actions, \ie, $\pi(a \mid x)$ is the probability of taking action $a \in \cA$ given context $x \in \cX$. We assume that the random reward for taking action $a$ under context $x$ is a $[0,1]$-valued random variable with mean $\bar{r}(x,a).$ We collectively denote $\bar{r}(x,\cdot)=(\bar{r}(x,1),\ldots,\bar{r}(x,K))^{\top}$ and $\bar{r}=(\bar{r}(x,\cdot))_{x\in\cX}.$ In what follows, we treat $\bar{r}$ as a $K \times \abs{\cX}$ matrix. We let $\cC$ be the distribution of the context. Let $V(\pi;\bar{r})=\sum_{x\in\cX} \cC(x)V(\pi(\cdot\mid x))$ and $V(\pi(\cdot\mid x);\bar{r})=\sum_{a\in\cA}\pi(a\mid x)\bar{r}(x,a)$ be the expected and conditional (on context $x$) expected reward, respectively, of policy $\pi.$ With some abuse of notation, we let $\pi(\cdot\mid x)= (\pi(1\mid x), \dots, \pi(K\mid x))\T$ be a vectorized policy $\pi$ conditioned on context $x$ and $g(\pie)=\max_{x\in\cX,a\in\cA}1/\pie(a\mid x).$

Our logging policy $\pie,$ whose expected reward is at least $\alpha$ of that of the production policy $\pi_0$, samples actions for $n$ times and collects a dataset $\cD = \set{(x_t, a_{t}, r_t)}_{t = 1}^n$ of size $n.$ Here $r_t\in[0,1]$ is a stochastic reward of action $a_t$ under context $x_t$ in round $t,$ with mean $\bar{r}(x_t,a_t).$

\noindent\textbf{Inverse Propensity Score (IPS) Estimator:} To estimate the expected reward of any policy $\pi$ from $\cD$, we use the asymptotically optimal and unbiased IPS estimator \citep{RosenbaumR83,WangAD17} as an example. The IPS estimator computes that value as
\begin{align}
	\hat{V}(\pi)
	= \frac{1}{n}\sum_{t = 1}^n \frac{\pi(a_t \mid x_t)}{\pie(a_t \mid x_t)} r_t\,.
	\label{eq:ips estimator}
\end{align}
Since $\pi(a_t\mid x_t),r_t\in[0,1],$ we know that each individual term in the IPS estimator is bounded in $[0,g(\pie)]$ and hence, $g(\pie)^2/4$-sub-Gaussian. 
Therefore, by Hoeffding's inequality (see \eg, equations (5.6) and (5.7) of \cite{LS18}), for any fixed policy $\pi$, $$|\hat{V}(\pi)-V(\pi)|\leq g(\pie)\sqrt{\frac{\log(2/\delta)}{2n}}$$ holds with probability at least $1-\delta.$ Intuitively, this means that we get a better estimator of $V(\pi)$ by minimizing $g(\pie).$  In what follows, we show how our prior results can help here. Specifically, we extend our results developed in \cref{sec:simplex} to the contextual setting to derive the logging policy $\pie$ that optimally minimizes $g(\pie).$ Then, we discuss how an optimized $g(\pie)$ can provide improved estimation guarantee for every possible policy through off-policy learning, and hence, benefit the downstream policy learning task. 

\noindent\textbf{No Side Information:} We begin by discussing how to optimize $g(\pie)$ when no side information about $\bar{r}$ is available. In this case, even if we have full access to the context distribution $\cC,$ we need to enforce $V(\pie(\cdot\mid x))\geq\alpha V(\pi_0(\cdot\mid x))$ across all $x\in\cX$ to ensure $V(\pie)\geq\alpha V(\pi_0),$ Otherwise, suppose that there exists $x\in\cX$ such that $V(\pie(\cdot\mid x))<\alpha V(\pi_0(\cdot\mid x)).$ Then one could set $\bar{r}(x',a)=0$ for all $x'\neq x$ to violate the safety constraint. In this case, we implement the water-filling method for each context $x\in\cX$ separately to minimize $\max_{a\in\cA}1/\pie(a\mid x)$, which subsequently minimizes $g(\pie)$ without violating the safety constraint.

\noindent\textbf{With Side Information:} In this case, we have access to side information $\bar{r}\in[L,U].$ To further incorporate the distribution of $\cX,$ we note that $V(\pie(\cdot \mid x))<\alpha V(\pi_0(\cdot\mid x))$ could possibly occur for some $x$ as long as $\pie$ performs better in other contexts. To this end, we formulate the optimization jointly over all $x\in\cX,$ \ie, 
\begin{align*}
	\max\ &\gamma\quad\\
	\mathrm{s.t.} \
	&\pie \geq \gamma\mathbf{1}_{K\times |\cX|}\,, \
	\pie(\cdot\mid x) \in \Delta_{K - 1}~\forall x\in\cX \\
	&\min_{\bar{r}\in[L,U]}\sum_{x\in\cX}\cC(x)(\pie(\cdot\mid x) - \alpha\pi_0(\cdot\mid x))\T \bar{r}(x,\cdot) \geq 0\,.
\end{align*} 
This optimization problem can be solved using the same duality trick as in \cref{sec:simplex_side}. We remark that if $L=\mathbf{0}_{K\times|\cX|}$ and $U=\mathbf{1}_{K\times|\cX|},$ this recovers the no side information case and we get the same solution as water-filling applied separately to each context.

\noindent\textbf{Performance Guarantee:} Recall that $g(\pie)$ is exactly the minimized objective in the above optimization problem. Now we are ready to show how safe optimal experimental design improves off-policy evaluation that further benefits the downstream optimization task.
\begin{theorem}
	\label{lemma:ips_error_bound} Let $\hat{V}(\pi)$ be the IPS estimate for the value of policy $\pi$ in \eqref{eq:ips estimator}. Then with probability at least $1-\delta,$ $$\max_{\pi} |\hat{V}(\pi) - V(\pi)|
	\leq 7g(\pie)\sqrt{\frac{|\cX|\log(4K|\cX|n/\delta)}{2n}}.$$ 
	Also let $\hat{\pi} = \argmax_\pi \hat{V}(\pi)$ and $\pi_* = \argmax_\pi V(\pi)$. Then
	$$V(\pi_*) - V(\pi)
	\leq 14g(\pie)\sqrt{\frac{|\cX|\log(4K|\cX|n/\delta)}{2n}}$$
	holds with probability at least $1 - \delta$.
\end{theorem}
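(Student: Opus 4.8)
The plan is to convert the ``for all $\pi$'' bound into a union bound over finitely many \emph{deterministic} policies, using that the IPS estimate $\hat{V}(\pi)$ is affine in $\pi$. Condition on the data $\cD$ and write $\hat{q}(x,a)=\frac{1}{n}\sum_{t=1}^{n}\frac{\I{x_t=x,\,a_t=a}}{\pie(a\mid x)}r_t$, so that $\hat{V}(\pi)=\sum_{x\in\cX}\sum_{a\in\cA}\pi(a\mid x)\hat{q}(x,a)$ and likewise $V(\pi)=\sum_{x\in\cX}\sum_{a\in\cA}\pi(a\mid x)\cC(x)\bar{r}(x,a)$; hence $\hat{V}(\pi)-V(\pi)$ is a linear function of the policy matrix $(\pi(a\mid x))_{x,a}$, which ranges over the product of simplices $\prod_{x\in\cX}\Delta_{K-1}$. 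Both $\max_\pi(\hat{V}(\pi)-V(\pi))$ and $\min_\pi(\hat{V}(\pi)-V(\pi))$ are therefore attained at vertices of this polytope, and each optimization splits independently across contexts; the vertices are precisely the $K^{|\cX|}$ deterministic policies $\pi:\cX\to\cA$. Consequently $\max_\pi|\hat{V}(\pi)-V(\pi)|=\max_{\pi\text{ deterministic}}|\hat{V}(\pi)-V(\pi)|$.

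Next, fix a deterministic policy $\pi$ and set $Z_t=\frac{\I{a_t=\pi(x_t)}}{\pie(\pi(x_t)\mid x_t)}r_t$, so that $\hat{V}(\pi)=\frac{1}{n}\sum_{t=1}^{n}Z_t$. The $Z_t$ are i.i.d., take values in $[0,g(\pie)]$ (since $r_t\in[0,1]$ and $1/\pie(\cdot\mid\cdot)\le g(\pie)$ by definition of $g$), and have mean $V(\pi)$ by unbiasedness of IPS. Hoeffding's inequality then gives $|\hat{V}(\pi)-V(\pi)|\le g(\pie)\sqrt{\log(2/\delta')/(2n)}$ with probability at least $1-\delta'$. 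Taking a union bound over all $K^{|\cX|}$ deterministic policies with $\delta'=\delta/K^{|\cX|}$, and using the crude estimate $\log(2K^{|\cX|}/\delta)=\log 2+|\cX|\log K+\log(1/\delta)\le|\cX|\log(4K|\cX|n/\delta)$ (valid since $|\cX|,n\ge 1$), yields $\max_\pi|\hat{V}(\pi)-V(\pi)|\le g(\pie)\sqrt{|\cX|\log(4K|\cX|n/\delta)/(2n)}$ on an event of probability at least $1-\delta$; the constant $7$ in the statement is comfortable slack.

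The policy-optimization claim follows on the same event from the standard three-term decomposition $V(\pi_*)-V(\hat{\pi})=\big(V(\pi_*)-\hat{V}(\pi_*)\big)+\big(\hat{V}(\pi_*)-\hat{V}(\hat{\pi})\big)+\big(\hat{V}(\hat{\pi})-V(\hat{\pi})\big)$, in which the middle term is $\le 0$ because $\hat{\pi}=\argmax_\pi\hat{V}(\pi)$ and the two outer terms are each at most the uniform deviation above; this gives the factor $14=2\cdot 7$.

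The one genuinely important step is the first: noticing that $\hat{V}$ is affine in $\pi$, so that controlling the $K^{|\cX|}$ deterministic policies — one Hoeffding bound apiece — suffices. This is exactly what avoids a covering/discretization argument over $\prod_{x}\Delta_{K-1}$, which would introduce a spurious $\sqrt{K}$ and is presumably the source of the improvement over the preliminary version; everything after it is bookkeeping. Two small points should be verified: that the triples $(x_t,a_t,r_t)$ are i.i.d.\ so Hoeffding applies to $\sum_t Z_t$ directly (in which case $\E Z_t=V(\pi)$ exactly); alternatively, if one conditions on the realized contexts — a more conservative route that would explain the $\log n$ inside the logarithm of the stated bound — one must add the standard $\ell_1$ concentration of the empirical context distribution to pass from $\frac{1}{n}\sum_t\bar{r}(x_t,\pi(x_t))$ back to $V(\pi)$.
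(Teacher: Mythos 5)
Your proof is correct, and it reaches the stated bound by a genuinely different (and cleaner) route than the paper's. The paper works context by context: it introduces the conditional estimator $\hat{V}(\pi(\cdot\mid x))$ normalized by the empirical count $n_x$, applies Hoeffding plus a union bound over the $K$ singleton policies, all $x\in\cX$, and all possible values of $n_x$ (which is what puts the $n$ inside the logarithm), extends to arbitrary $\pi(\cdot\mid x)$ by convexity within each context, and then must reassemble the global value $\hat{V}(\pi)=\sum_x \tfrac{n_x}{n}\hat{V}(\pi(\cdot\mid x))$ against $V(\pi)=\sum_x\cC(x)V(\pi(\cdot\mid x))$ — costing a Cauchy--Schwarz step ($\sum_x\sqrt{n_x}\le\sqrt{n|\cX|}$) and an $\ell_1$ deviation bound for the empirical context frequencies (the inequality of Weissman et al.), which together produce the constant $7$. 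You instead observe that $\hat{V}(\pi)-V(\pi)$ is linear in the policy matrix over the product of simplices, so its maximum absolute value is attained at one of the $K^{|\cX|}$ vertices (global deterministic policies), and a single Hoeffding bound per vertex on the i.i.d.\ sum $\tfrac1n\sum_t Z_t$ — whose mean is exactly $V(\pi)$ with the expectation taken over the random contexts as well — plus a union bound with $\log(K^{|\cX|})=|\cX|\log K$ finishes the job with constant $1$ rather than $7$. Your route entirely sidesteps the empirical-context-distribution correction and is valid because the paper's model does draw $x_t\sim\cC$ i.i.d.; what the paper's conditional decomposition buys in exchange is the intermediate per-context guarantee \eqref{eq:simplex_ope_new7}, which is of some independent use. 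Both arguments deliver the same $g(\pie)\sqrt{|\cX|\log(\cdot)/n}$ rate and both avoid the spurious $\sqrt{K}$ that a covering argument over $\prod_x\Delta_{K-1}$ would introduce; your final caveat about the conditional alternative correctly identifies exactly the extra step the paper needs. The three-term decomposition for the optimization claim matches the paper's.
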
 
\begin{proof}[Proof Sketch]
	The complete proof is provided in \cref{sec:lemma:ips_error_bound}.
	
	Although it is straightforward to show that the confidence interval holds for a single fixed policy $\pi,$ directly applying the union bound over the entire policy space $\Pi$ would not lead to the desired statement because $\Pi$ contains infinitely many policies (see, \eg, corollary 11 of \cite{AgarwalK19}).

To overcome this challenge, we make use of the singleton/deterministic policies $\{\pi^{(k)}(\cdot\mid x)\}$ (\ie, $\pi^{(k)}(\cdot\mid x)$ assign probability 1 to action $k,$ and 0 to the rest) for each context, \ie,
\begin{align*}
    \pi^{(k)}(a\mid x)=\begin{cases}
  1  &  \text{when } a=k \\
  0 &  \text{otherwise.}
\end{cases}
\end{align*} 
We also extend the IPS estimator to the conditional expected reward, \ie,
\begin{align*}
    \hat{V}(\pi(\cdot\mid x))=\frac{1}{n_x}\sum_{t=1}^n\frac{\pi(a_t\mid x_t)\I{x_t=x}}{\pie(a_t\mid x_t)}r_t,
\end{align*}
where $n_x=\sum_{t=1}^n\I{x_t=x}$ is the number of times that the context $x$ is recorded. Through standard concentration inequality arguments (\ie, Hoeffding's inequality and union bound), we have with probability at least $1-\delta,$ for every possible $x\in\cX,$
\begin{align}\label{eq:simplex_ope_new6}
    \max_{k\in[K]}\left|\hat{V}(\pi^{(k)}(\cdot\mid x))-V(\pi^{(k)}(\cdot\mid x))\right|=O\left( g(\pie)\sqrt{\frac{\log(K|\cX|n/\delta)}{\max\{1,n_x\}}}\right).
\end{align}

To this end, we make two critical observations. First, any policy $\pi(\cdot\mid x),$ it can be expressed as convex combination of $\{\pi^{(k)}(\cdot\mid x)\}.$ Therefore, \eqref{eq:simplex_ope_new6} implies that
for every context $x$ and every policy $\pi(\cdot\mid x),$ with probability at least $1-\delta,$
\begin{align}\label{eq:simplex_ope_new7}
    \left|\hat{V}(\pi(\cdot\mid x))-  V(\pi(\cdot\mid x))\right|
    \leq& O\left(g(\pie)\sqrt{\frac{\log(K|\cX|n/\delta)}{\max\{1,n_x\}}}\right)\,.
\end{align}
Further, a policy $\pi$ can be viewed as a collection of policies for each individual context $x.$ Therefore, \eqref{eq:simplex_ope_new7} implies that with probability at least $1-\delta,$ 
\begin{align*}
     \max_{\pi}\left|\hat{V}(\pi)-V(\pi)\right|=O\left(g(\pie)\sqrt{\frac{|\cX|\log(K|\cX|n/\delta)}{n}}\right).
\end{align*}

The second claim follows directly from $\hat{V}(\pi)$ being close to $V(\pi)$ for any policy $\pi$ with a high probability.
\end{proof}
\begin{remark}[\textbf{Sharpness of the Bound}]
We note that according to corollary 11 of \cite{AgarwalK19}, it is shown that for a off-policy evaluation task with $N$ potential policies and $n$ pieces of logged data, the error bound would be of order $O(g(\pie)\sqrt{\log(N/\delta)/n}).$ Hence, it is easy to verify that our bound matches this up to logarithmic factors as $N=\Theta(K^{|\cX|})$ even if we only consider singleton/deterministic policies. 
\end{remark}

\subsection{Linear Off-Policy Evaluation and Optimization}
\label{sec:ope_linear}
Similarly to \cref{sec:simplex_ope}, we apply our results to contextual off-policy evaluation and optimization.

\noindent\textbf{Additional Notation:} We follow most of the notation in \cref{sec:simplex_ope} and recall that $\cA$ is the set of all actions, $a$ represents the individual elements in $\cA$ and $A$ is the matrix with each column being an element in $\cA.$ But for now, the reward parameter conditioned on context $x$ is $\theta_{*}(x)$ and $g(\pi)=\max_{x\in\cX,a\in\cA}
a\T G(\pi(\cdot\mid x))^{-1} a.$ The side information is defined as follows: for every $x\in\cX,$ $\theta_{*,x}\in\Theta_x=\{\theta\in\realset^d:(\theta-\bar{\theta}_x)^{\top}\bar{\Sigma}_x^{-1}(\theta-\bar{\theta}_x)\leq 1\}.$ We collectively denote $\theta_*=(\theta_{*,x})_{x\in\cX}$ and $\Theta=(\Theta_x)_{x\in\cX}.$ 

\noindent\textbf{Pseudo-Inverse (PI) Estimator:} To leverage the linear structure in the reward function, \citet{SwaminathanKAD17} proposed the PI estimator, which generalizes the IPS estimator, to estimate the expected reward of a policy $\pi.$ Specifically, let $G(\pi(\cdot\mid x))=\sum_{a\in\cA}\pi(a\mid x)aa^{\top},$ the PI estimator is 
\begin{align}
	\label{eq:pi_estimator} \hat{V}(\pi)=\frac{1}{n}\sum_{t=1}^n r_t\left(A\pi(\cdot\mid x_t)\right)^{\top}G(\pie(\cdot \mid x_t))^{-1}a_{t}\,,
\end{align} 
where $A \pi(\cdot\mid x)$ is the average action feature vector under $\pi(\cdot \mid x).$ Here we slightly overload our notation and use $G^{-1}$ as the pseudo-inverse of $G.$ \citet{SwaminathanKAD17} showed in Proposition 1 that $\hat{V}(\pi)$ is an unbiased estimator of $V(\pi)$. From the triangle and Cauchy-Schwarz inequalities, we have that
\begin{align}
	\nonumber\left|\left(A\pi(\cdot\mid x_t)\right)^{\top}G(\pie(\cdot \mid x_t))^{-1}a_t\right|=&\left|\sum_{a\in\cA}\pi(a\mid x_t)a^{\top}G(\pie(\cdot\mid x_t))^{-1}a_t\right|\\
	\nonumber\leq&\sum_{a\in\cA}\pi(a\mid x_t)\left|a^{\top}G(\pie(\cdot\mid x_t))^{-1}a_t\right|\\
	\nonumber\leq&\sum_{a\in\cA}\pi(a\mid x_t)\sqrt{a^{\top}G(\pie(\cdot\mid x_t))^{-1}a}\sqrt{a^{\top}_{t}G(\pie(\cdot\mid x_t))^{-1}a_t}\\
	\nonumber=&g(\pie)\,.
\end{align}
Therefore, each of the terms in the summand of \eqref{eq:pi_estimator} is $g(\pie)^2/4$-sub-Gaussian. 

\noindent\textbf{Safe Optimal Design with Side Information:} To incorporate the distribution of $\cX$ and the side information, we consider the following optimization problem
\begin{align*}
	\min\ g(\pie)\quad\st \ \pie(\cdot\mid x)\in\Delta_{K-1}~\forall x\in\cX\,,\min_{\theta_{*}\in\Theta}\sum_{x\in\cX}\cC(x)(\pie-\alpha\pi_0)^{\top}A^{\top}\theta_{*,x}\geq0\,.
\end{align*} 
This optimization problem can be solved analogously to that in \cref{sec:frank-wolfe}.

\noindent\textbf{Performance Guarantee:} As in the tabular case, $g(\pie)$ is exactly the minimized objective in the above optimization problem. We are now ready to link it to off-policy evaluation and optimization guarantees.

\begin{theorem}
	\label{lemma:pi_error_bound} Let $\lambda_{*}(x)$ be the minimum non-zero eigenvalue of $G(\pie(\cdot\mid x)),$ $\lambda_{*}=\min_{x\in\cX}\lambda_{*}(x),$ and $\hat{V}(\pi)$ be the PI estimate for the value of policy $\pi$ in \eqref{eq:pi_estimator}. Then with probability at least $1-\delta,$
	$$\max_{\pi}\left|\hat{V}(\pi)-V(\pi)\right|\leq3 g(\pie)\sqrt{\frac{d|\cX|\log(n/(\delta\min\{1,\sqrt{\lambda_*}\}))}{n}}.$$  Furthermore, let $\hat{\pi} = \argmax_\pi \hat{V}(\pi)$ and $\pi_* = \argmax_\pi V(\pi).$ Then with probability at least $1 - \delta,$
	$$V(\pi_*) - V(\pi)
	\leq 6g(\pie)\sqrt{\frac{d|\cX|\log(n/(\delta\min\{1,\sqrt{\lambda_*}\}))}{n}}.$$
\end{theorem}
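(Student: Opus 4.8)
The plan is to follow the template behind the proof of \cref{lemma:ips_error_bound}, but to replace the reduction to the $K$ singleton policies per context by a covering argument over a $d$-dimensional convex body; this substitution is exactly what converts the $K$-factors of the tabular bound into the dimension $d$ and the eigenvalue $\lambda_*$. The first step is the fixed-policy estimate: as already observed in the text following \eqref{eq:pi_estimator}, every summand of the PI estimator lies in $[-g(\pie),g(\pie)]$ and is therefore $g(\pie)^2/4$-sub-Gaussian, and $\hat V(\pi)$ is unbiased (Proposition~1 of \citet{SwaminathanKAD17}); hence, conditioning on the realized context sequence $x_1,\dots,x_n$ so that the per-round terms become independent, Hoeffding's inequality gives $|\hat V(\pi)-V(\pi)|\le g(\pie)\sqrt{\log(2/\delta)/(2n)}$ with probability $1-\delta$ for any fixed $\pi$. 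The real work is to upgrade this to a bound uniform over the infinitely many policies.

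\textbf{Per-context decomposition.} As in the tabular sketch, introduce the conditional PI estimator $\hat V(\pi(\cdot\mid x))=\tfrac{1}{n_x}\sum_{t:\,x_t=x}r_t\,(A\pi(\cdot\mid x))\T G(\pie(\cdot\mid x))^{-1}a_t$, where $n_x=|\{t:x_t=x\}|$, so that $\hat V(\pi)=\sum_{x\in\cX}\tfrac{n_x}{n}\hat V(\pi(\cdot\mid x))$. Then $\hat V(\pi)-V(\pi)=\sum_{x}\tfrac{n_x}{n}\big(\hat V(\pi(\cdot\mid x))-V(\pi(\cdot\mid x))\big)+\sum_{x}\big(\tfrac{n_x}{n}-\cC(x)\big)V(\pi(\cdot\mid x))$. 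The last sum is a lower-order fluctuation: as $|V(\pi(\cdot\mid x))|\le1$, it is at most $\|\hat\cC-\cC\|_1$ for the empirical context distribution $\hat\cC$, which is $O(\sqrt{|\cX|/n}+\sqrt{\log(1/\delta)/n})$ with high probability and is dominated by the principal term (the one carrying the extra $g(\pie)\sqrt d$ factor, using that $g(\pie)\ge d\ge1$). For the principal sum, fix $x$ and write $\hat V(\pi(\cdot\mid x))-V(\pi(\cdot\mid x))=\langle A\pi(\cdot\mid x),\zeta_x\rangle$ with $\zeta_x:=\tfrac{1}{n_x}\sum_{t:\,x_t=x}r_t\,G(\pie(\cdot\mid x))^{-1}a_t-\theta_{*,x}$ mean-zero conditionally on the contexts (everything read within $\range{G(\pie(\cdot\mid x))}$, which the pseudo-inverse supplies). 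The key structural point is that $A\pi(\cdot\mid x)$ ranges over $M_x:=\convexhull{\cA}\subseteq\{v:\|v\|_2\le1\}$, a convex body of affine dimension at most $d$, so $\sup_\pi|\hat V(\pi)-V(\pi)|$ reduces, up to the fluctuation, to $\sum_{x\in\cX}\tfrac{n_x}{n}\sup_{v\in M_x}|\langle v,\zeta_x\rangle|$.

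\textbf{Covering.} For each $x$, take an $\eps$-net $\cN_x\subseteq M_x$ of $M_x$ with $|\cN_x|\le(5/\eps)^d$. Since every $u\in\cN_x$ is a convex combination of action features, convexity of the quadratic form $v\mapsto v\T G(\pie(\cdot\mid x))^{-1}v$ yields $u\T G(\pie(\cdot\mid x))^{-1}u\le\max_{a\in\cA}a\T G(\pie(\cdot\mid x))^{-1}a\le g(\pie)$, so by Cauchy--Schwarz $|u\T G(\pie(\cdot\mid x))^{-1}a_t|\le g(\pie)$ and the terms of $\langle u,\zeta_x\rangle$ are still $g(\pie)^2/4$-sub-Gaussian; Hoeffding plus a union bound over $\cN_x$ and over $x$ gives $\max_{u\in\cN_x}|\langle u,\zeta_x\rangle|\lesssim g(\pie)\sqrt{(d\log(1/\eps)+\log(|\cX|/\delta))/n_x}$ simultaneously, with probability $\ge1-\delta$. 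The discretization gap is at most $\eps\|\zeta_x\|_2$, and the identity $\|G(\pie(\cdot\mid x))^{-1}a\|_2^2=a\T G(\pie(\cdot\mid x))^{-2}a\le\lambda_*^{-1}a\T G(\pie(\cdot\mid x))^{-1}a\le g(\pie)/\lambda_*$ gives $\|\zeta_x\|_2\le2\sqrt{g(\pie)/\lambda_*}$; choosing $\eps$ of order $\sqrt{\lambda_*}/n$ (capped at order $1/n$ when $\lambda_*>1$) makes the gap $O(g(\pie)/n)$, lower-order, and turns $\log(1/\eps)$ into $\log(n/\min\{1,\sqrt{\lambda_*}\})$ — this is precisely where $\lambda_*$ enters the logarithm.

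\textbf{Aggregation and the main obstacle.} Summing the per-context bounds and using Cauchy--Schwarz through $\tfrac1n\sum_{x}\sqrt{n_x}\le\sqrt{|\cX|/n}$ collapses the $|\cX|$ per-context contributions into a single factor $\sqrt{|\cX|}$ and yields $\sup_\pi|\hat V(\pi)-V(\pi)|\lesssim g(\pie)\sqrt{d|\cX|\log(n/(\delta\min\{1,\sqrt{\lambda_*}\}))/n}$ after absorbing lower-order logarithmic factors; tracking absolute constants produces the stated factor $3$. The optimization bound is then routine: $V(\pi_*)-V(\hat\pi)=\big(V(\pi_*)-\hat V(\pi_*)\big)+\big(\hat V(\pi_*)-\hat V(\hat\pi)\big)+\big(\hat V(\hat\pi)-V(\hat\pi)\big)$ with the middle difference $\le0$ because $\hat\pi$ maximizes $\hat V$, so $V(\pi_*)-V(\hat\pi)\le2\sup_\pi|\hat V(\pi)-V(\pi)|$, hence the factor $6$. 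I expect the covering step to be the main obstacle: one must keep the net entirely inside $M_x$ so the sub-Gaussian constant stays $g(\pie)^2/4$ rather than degrading like $1/\lambda_*$, and one must control the discretization gap through $\lambda_*$ (via $\|G^{-1}a\|_2\le\sqrt{g(\pie)/\lambda_*}$) so that a net of polynomial fineness, costing only $O(d\log(n/\sqrt{\lambda_*}))$ in the exponent, suffices; the randomness of the counts $n_x$ (handled by conditioning on the context sequence) and the pseudo-inverse (handled by working inside $\range{G(\pie(\cdot\mid x))}$ throughout) are the remaining technical points.
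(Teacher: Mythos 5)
Your proposal is correct and rests on the same core idea as the paper's proof: an $\eps$-net of a $d$-dimensional set at resolution $\min\{1,\sqrt{\lambda_*}\}/n$, with the discretization error controlled through $\|G(\pie(\cdot\mid x))^{-1}a\|_2\le\sqrt{g(\pie)/\lambda_*}$, followed by the standard triangle-inequality decomposition and the two-sided argument giving the factor $6$ for the optimization claim. The one organizational difference is where the union bound lives: the paper covers the product space $\cS^{|\cX|}$ directly, union-bounds over the $|\cQ|^{|\cX|}$ joint deterministic policies $h:\cX\to\cQ$ in a single global Hoeffding step (which is where $d|\cX|$ enters the logarithm), and never decomposes by context; you instead decompose per context as in the tabular proof of \cref{lemma:ips_error_bound}, cover each $\convexhull{\cA}$ separately, and aggregate via $\tfrac1n\sum_x\sqrt{n_x}\le\sqrt{|\cX|/n}$, which forces you to also control the empirical-context-distribution fluctuation $\sum_x(\tfrac{n_x}{n}-\cC(x))V(\pi(\cdot\mid x))$ and to union-bound over the random counts $n_x$ — both handled correctly (the fluctuation is dominated since $g(\pie)\ge d\ge 1$), at the cost of slightly more bookkeeping. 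Your version is in one respect more careful than the paper's: by keeping the net inside $\convexhull{\cA}$ you preserve the $g(\pie)^2/4$ sub-Gaussian constant for every net point via convexity of $v\mapsto v\T G(\pie(\cdot\mid x))^{-1}v$, whereas the paper's net lives in the full unit ball, for which that bound requires an extra (unstated) argument. Both routes yield the stated rate.
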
 
\begin{proof}[Proof Sketch]
	The complete proof is provided in \cref{sec:lemma:pi_error_bound}. We also comment on this bound in the upcoming \cref{remark:pi_error_bound}.
	
	Different than \ref{lemma:ips_error_bound}, we can no longer consider the singleton policies (as they might not even exist in $\cA$). To deal with the potentially large action set, we consider a discretization over the space of $\cS^{|\cX|}$ where $\cS=\{a\in\realset^d:\|a\|_2\leq1\}.$ Specifically, we let $\cQ$ be the $\min\{1,\sqrt{\lambda_*}\}/n$-cover of $\cS$ (\ie, for any $a\in\cS,$ there exists $a'\in\cQ$ such that $\|a-a'\|_2\leq \min\{1,\sqrt{\lambda_*}\}/n$), then we know that $\left|\cQ\right|\leq(3\min\{1,\sqrt{\lambda_*}\}/n)^d$ (which implies $\left|\cQ^{|\cX|}\right|\leq(3\min\{1,\sqrt{\lambda_*}\}/n)^{d|\cX|}$). To proceed, we define $A_{\cQ}$ to be the matrix that contains each element of $\cQ$ as its column. With slight abuse of notations, we define for the set of all possible deterministic policies that maps a context $x$ to an action in $\cQ$ as $\cH=\{h:\cX\to\cQ\}$, we have, from Hoeffding's inequality (see \eg, equations (5.6) and (5.7) of \cite{LS18}) and union bound, that with probability at least $1-\delta,$
	\begin{align}
		\label{eq:linear_ope2}\max_{h\in\cH}\left|\hat{V}(h)-V(h)\right|\leq g(\pie)\sqrt{\frac{d|\cX|\log(n/(\delta\min\{1,\sqrt{\lambda_*}\}))}{2n}}\,.
	\end{align}
	For any policy $\pi\in\Pi,$ we find the policy $h_{\pi}\in\cH$ such that for every $x\in\cX,$ $\|A\pi(\cdot\mid x)-A_{\cQ}h_{\pi}(\cdot|x)\|_2\leq \min\{1,\sqrt{\lambda_*}\}/n$ (This step is possible as $A\pi(\cdot\mid x)\in\cS$). Then we have
	\begin{align}
		\nonumber\left|\hat{V}(\pi)-V(\pi)\right|=&\left|\hat{V}(\pi)-\hat{V}(h_{\pi})+\hat{V}(h_{\pi})-V(h_{\pi})+V(h_{\pi})-V(\pi)\right|\\
		\label{eq:linear_ope3}\leq &\left|\hat{V}(\pi)-\hat{V}(h_{\pi})\right|+\left|\hat{V}(h_{\pi})-V(h_{\pi})\right|+\left|V(h_{\pi})-V(\pi)\right|\,,
	\end{align}
	where the second step follows from the triangle inequality. Now for the first term and third term, we can upper bound them due to the closeness between $\pi$ and $h_{\pi}$ (see \cref{sec:lemma:pi_error_bound} for more details), \ie,
	\begin{align}
		\nonumber\left|\hat{V}(\pi)-\hat{V}(h_{\pi})\right|\leq \frac{\sqrt{g(\pie)}}{n}\,, \quad \left|V(h_{\pi})-V(\pi)\right|\leq\frac{1}{n}\,.
	\end{align}
	The second term of \eqref{eq:linear_ope3} can be easily upper bounded as follows using \eqref{eq:linear_ope2}
	\begin{align*}
		\left|\hat{V}(h_{\pi})-V(h_{\pi})\right|\leq O\left( g(\pie)\sqrt{\frac{d|\cX|\log(n/(\delta\min\{1,\sqrt{\lambda_*}\}))}{2n}}\right)\,.
	\end{align*}
	
	Combining the above, we have
	\begin{align*}
		\max_{\pi\in\Pi}\left|\hat{V}(\pi)-V(\pi)\right|\leq O\left( g(\pie)\sqrt{\frac{d|\cX|\log(n/(\delta\min\{1,\sqrt{\lambda_*}\}))}{2n}}\right)\,.
	\end{align*}
	We note that the proof of the second claim is very similar to that of Lemma \ref{lemma:ips_error_bound} and is thus omitted.
\end{proof}
\begin{remark}[\textbf{Sharpness of the Bound}]\label{remark:pi_error_bound}
Compared to the performance guarantee in \cref{lemma:ips_error_bound}, the error bound in  \cref{lemma:pi_error_bound} is worse by a factor of $\tilde{O}(\sqrt{d}),$ this is somewhat expected as $\cA$ no longer permits simple basis (\ie, the singleton policy used in the proof of \cref{lemma:ips_error_bound}). Instead, the ``complexity" (under a suitable metric) of the $\cA$ is of order $O(2^d).$ Intuitively, plugging this into the result in \cref{lemma:ips_error_bound} would lead to a error bound of the same order as here.
\end{remark}

\subsection{Numerical Experiments}
\label{sec:experiments}

We conduct three experiments. In \cref{sec:illustrative example}, we illustrate the basic properties of our approach on a simple example. We evaluate it on a diverse set of problems in \cref{sec:synthetic problems} and on the MNIST dataset in \cref{sec:mnist dataset}. We note that all of these experiments are for the linear safe optimal design.

Our approach is implemented as described in \cref{sec:frank-wolfe} and we call it \safeod, which is an abbreviation for safe optimal design for ease of exposition. We compare it with two baselines. The first baseline is the G-optimal design $\pi_g$. The G-optimal design can be viewed as an unsafe variant of \safeod, \ie, $\alpha=0$ in \cref{eq:linear_side}. The second baseline is a mixture policy $\pi_\mathrm{mix} = \alpha \pi_0 + (1 - \alpha) \mathbf{1}_K / K$. This policy is guaranteed to satisfy the safety constraint but may not maximize information gain.

All logging policies $\pi$ are evaluated by three criteria:
\begin{itemize}
	\item The first is the \emph{design width} $\sqrt{g(\pi)}$, which is defined in \eqref{eq:g-optimal design} and reflects how well $\pi$ minimizes uncertainty over all actions. Lower values are better. 
	\item The second criterion is the \emph{safety violation} $\max_{\theta_* \in \Theta} (\alpha\pi_0 - \pi)\T A\T \theta_*$ (\cref{sec:general}), which measures how much $\pi$ violates the safety constraint for being close to the production policy $\pi_0$. Specifically, a positive value means that the safety constraint is violated; while a negative value means that the constraint is satisfied. Lower values implies a better safety performance. 
	\item The last metric is the \emph{off-policy gap}, which measures the suboptimality of the best off-policy estimated action on data collected by $\pi$. This metric is computed as follows. First, we drawn $\theta_* \in \Theta$, uniformly at random, and find the best action $a_*$ under $\theta_*$. Second, we collect a dataset $\cD$ of size $n = 10 d$, were the noisy observation of action $a$ is $a\T \theta_* + \varepsilon$ for $\varepsilon \sim \cN(0, 1)$. Finally, we compute the MLE of $\theta_*$ from $\cD$, which we denote by $\hat{\theta}$, and find the best action $\hat{a}$ under $\hat{\theta}$. The off-policy gap is $(a_* - \hat{a})\T \theta_*$ and we estimate it from $1\,000$ random runs for any given logging policy, as described above.
\end{itemize} 
\subsubsection{Illustrative Example}
\label{sec:illustrative example}

In the first example, $\cA = \set{(1, 0), (0, 1)}$, $\pi_0 = (0.2, 0.8)$, and $\alpha = 0.9$; and $\Theta$ is given by $\bar{\theta} = (1, 2)$ and $\bar{\Sigma} = 0.1 I_2$. In this case, for any $\theta_* \in \Theta$, $\pi_0$ takes the most rewarding action with a high probability of $0.8$. Therefore, \safeod cannot differ much from $\pi_0$ and is ${\pie} = (0.330, 0.670)$. This design satisfies the safety constraint and its width is $1.74$. In comparison, the G-optimal design is $\pi_g = (0.5, 0.5)$ and obviously violates the safety constraint. For instance, even at $\bar{\theta}$, the constraint violation is $0.9 \cdot (0.2 \cdot 1 + 0.8 \cdot 2) - 0.5 \cdot 3 = 0.12$. However, its width is only $1.414$. The mixture policy $\pi_\mathrm{mix}$ satisfies the safety constraint but its width is $2.085$, about $15\%$ higher than in \safeod.

In the second example, we set $\bar{\theta} = (2, 1)$. In this case, for any $\theta_* \in \Theta$, $\pi_0$ takes the least rewarding action with a high probability of $0.8$. Thus \safeod can depart significantly from $\pi_0$ and is $\pie = (0.5, 0.5)$. This design satisfies the safety constraint and its width is $1.141$. The G-optimal design coincides with \safeod, \ie, $\pi_g = {\pie}$. The mixture policy $\pi_\mathrm{mix}$ also satisfies the safety constraint but its width is $2.085$.

In summary, \safeod combines the best properties of $\pi_g$ and $\pi_\textrm{mix}$. When the safety constraint is strict, \safeod satisfies it. When it is not, \safeod has a low width, similarly to the G-optimal design.

\subsubsection{Synthetic Problems}
\label{sec:synthetic problems}

We also experiment with the following randomly generated problems. The number of actions is $K = 100$ and their feature vectors are drawn uniformly from a $d$-dimensional unit sphere. The production policy $\pi_0$ is drawn uniformly from a $(K - 1)$-dimensional simplex. The set $\Theta$ is defined by $\bar{\Sigma} = I_d$ and $\bar{\theta}$, where the latter is drawn uniformly from a $d$-dimensional hypercube $[1, 2]^d$. We vary $d$ and $\alpha$, and have $50$ independent experiments for each setting.

In \cref{fig:d=4 alpha=0.900}, we report results for $d = 4$ and $\alpha = 0.9$. We observe that the G-optimal designs have low widths but also violate the safety constraint. On the other hand, the mixture policy always satisfies the safety constraint but has high design widths. \safeod strikes the balance between the two objectives, by minimizing the design width under the safety constraint. In all cases, design widths correlate with off-policy gaps, which means that the optimized objective translates to improvements in off-policy optimization.
\begin{figure}[!ht]
	\vspace{-0.1in}
	\centering
	\includegraphics[width=5.6in]{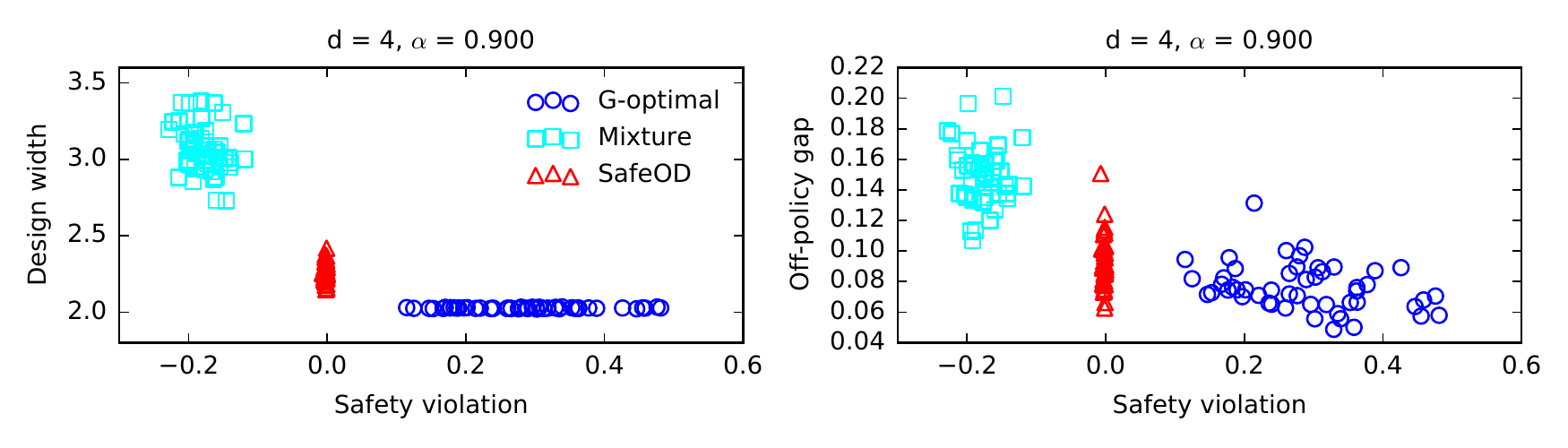} \\
	\vspace{-0.2in}
	\caption{Comparison of \safeod to the G-optimal optimal design and the mixture policy in $50$ random runs.}
	\label{fig:d=4 alpha=0.900}
\end{figure}

In \cref{fig:d}, we fix $\alpha = 0.9$ and vary $d$; while in \cref{fig:alpha}, we fix $d = 4$ and vary $\alpha$. In general, we observe that \safeod performs similarly to the G-optimal design whenever it is easy to satisfy the safety constraint, when the number of features $d$ is large or the safety parameter $\alpha$ is small. In all other cases, \safeod produces designs of higher widths and off-policy gaps in exchange for satisfying the safety constraint. The mixture policy always satisfies the safety constraint but has high design widths and off-policy gaps.
\begin{figure}[!ht]
	\vspace{-0.1in}
	\centering
	\includegraphics[width=5.6in]{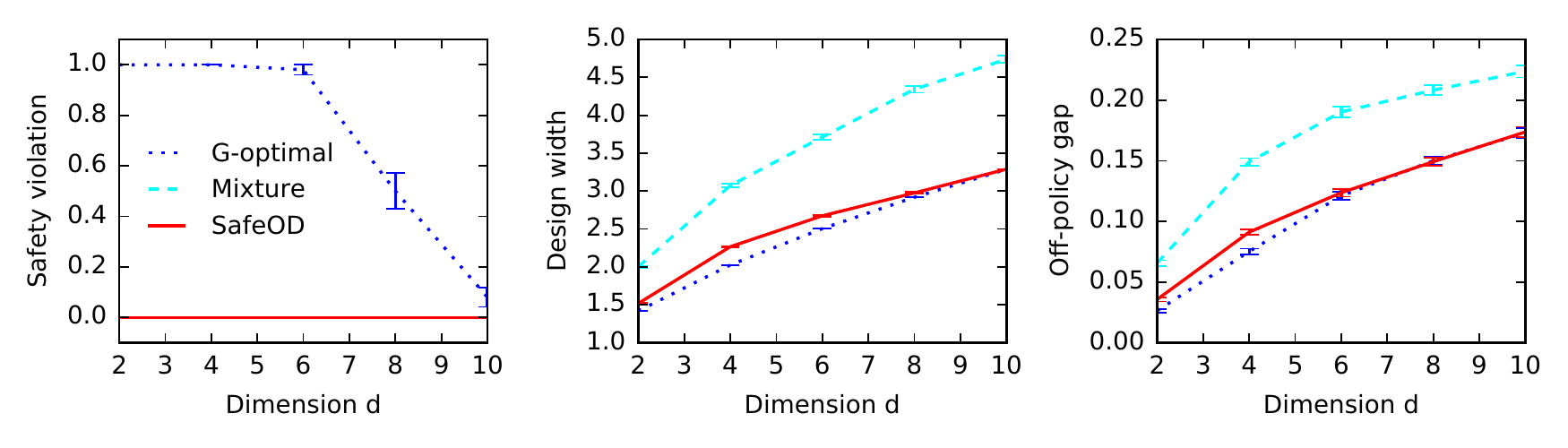} \\
	\vspace{-0.2in}
	\caption{Comparison of \safeod to the G-optimal optimal design and the mixture policy. We fix the safety parameter at $\alpha = 0.9$ and vary $d$. The safety violation is the fraction of violated safety constraints in $50$ runs.}
	\label{fig:d}
\end{figure}
\begin{figure}[!t]
	\vspace{-0.1in}
	\centering
	\includegraphics[width=5.6in]{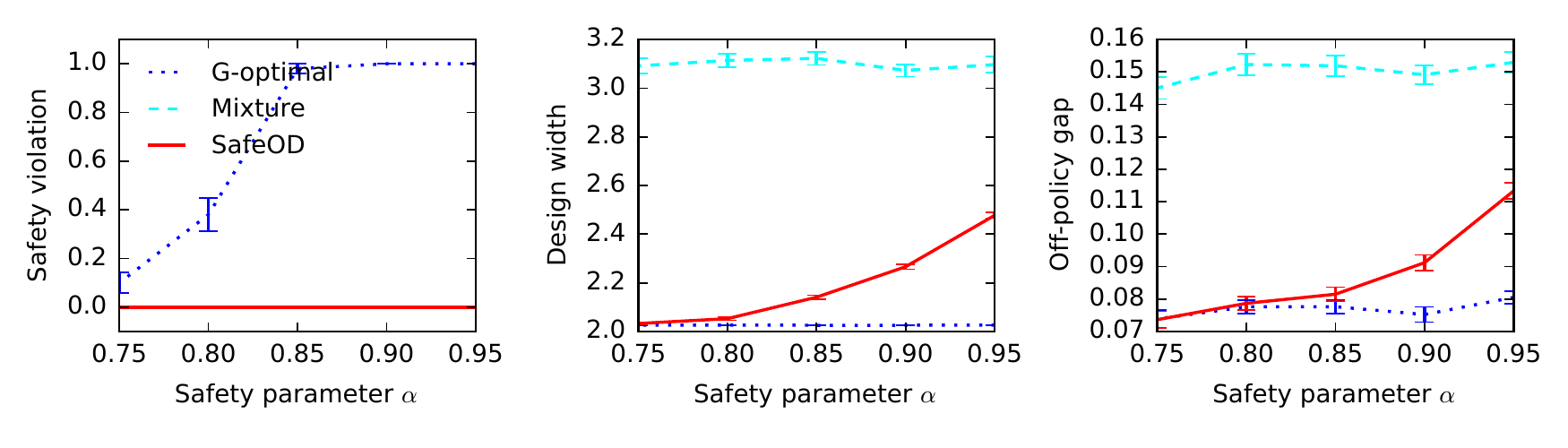} \\
	\vspace{-0.25in}
	\caption{Comparison of \safeod to the G-optimal optimal design and the mixture policy. We fix $d = 4$ and vary the safety parameter $\alpha$. The safety violation is the fraction of violated safety constraints in $50$ runs.}
	\label{fig:alpha}
\end{figure}
\subsubsection{MNIST Dataset}
\label{sec:mnist dataset}

The last experiment is conducted on the MNIST dataset \citep{lecun10mnist}. This experiment is a more realist variant of that in \cref{sec:synthetic problems},  where the actions $\cA$ and the safety ellipsoid $\Theta$ are estimated from a real-world dataset.

\begin{figure*}[t]
	\centering
	\includegraphics[width=5.6in]{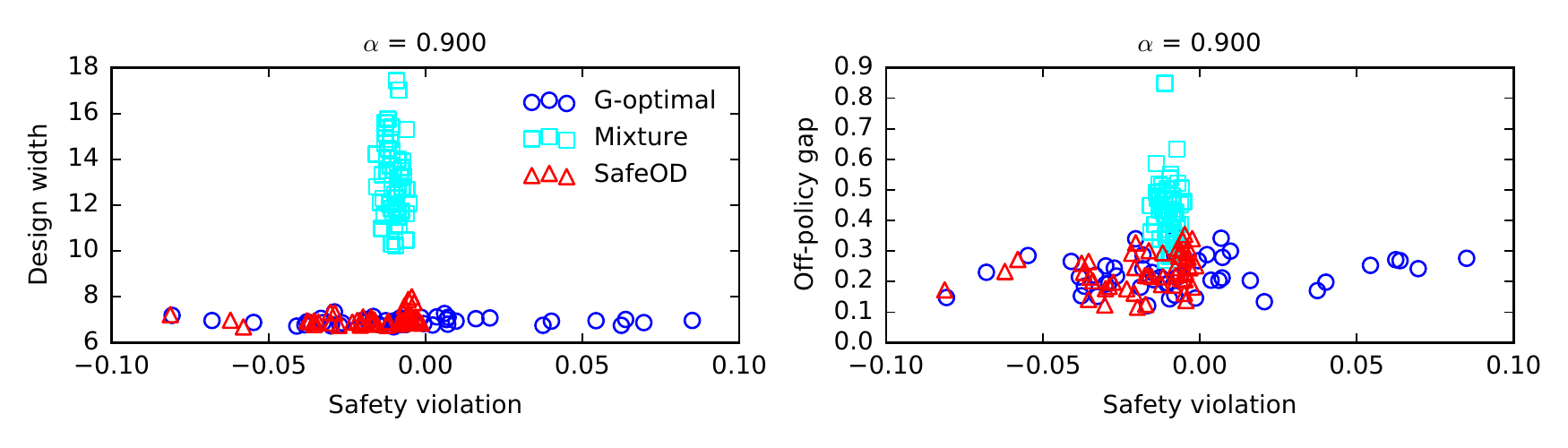}
	\caption{Comparison of \safeod to the G-optimal optimal design and the mixture policy on the MNIST dataset.}
	\label{fig:mnist alpha=0.900}
\end{figure*}

This experiment is conducted as follows. First, we choose a random digit and assign it reward one. All remaining digits have reward zero. Second, we take flattened images of digits as their feature vectors (see, \eg, \cite{Olah14}) and learn a least-squares regressor on these data. We set $\bar{\theta}$ and $\bar{\Sigma}$ to its weights and the inverse covariance matrix, respectively. These two quantities define $\Theta$, an ellipsoid that is likely to contain the optimal unknown weights $\theta_*$. Finally, we choose feature vectors of $K = 100$ random digit images as the action set $\cA$. The production policy $\pi_0$ is drawn uniformly from a $(K - 1)$-dimensional simplex. This is repeated $50$ times and our results are reported in \cref{fig:mnist alpha=0.900}.

\begin{figure*}[!t]
	\centering
	\includegraphics[width=5.6in]{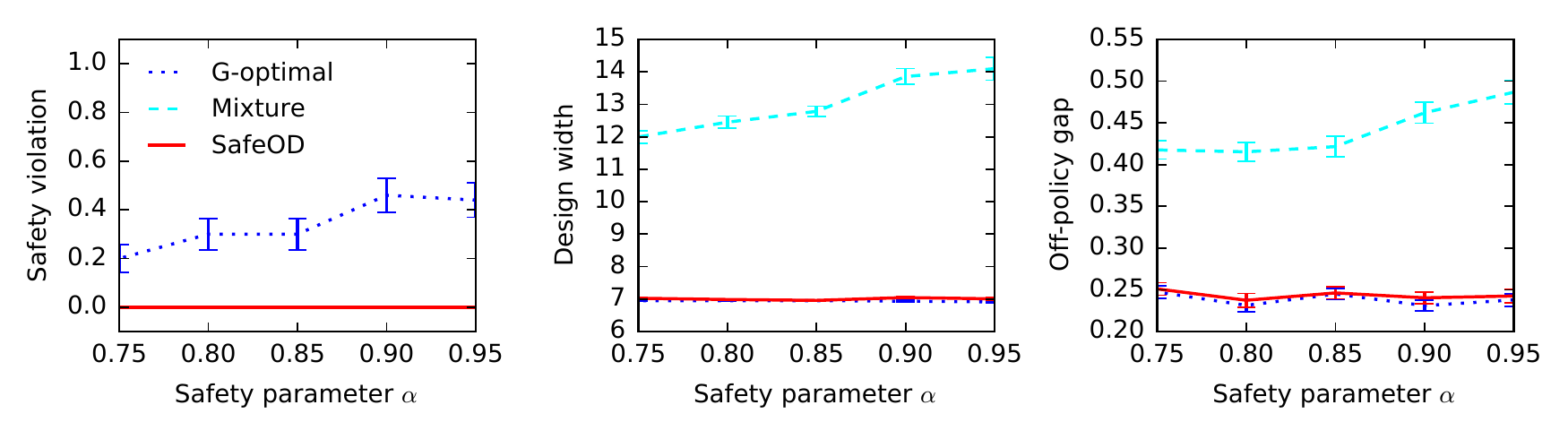}
	\caption{Comparison of \safeod to the G-optimal optimal design and the mixture policy on the MNIST dataset. We vary the safety parameter $\alpha$. The safety violation is the fraction of violated safety constraints in $50$ runs.}
	\label{fig:mnist alpha}
\end{figure*}

We observe similar trends to \cref{fig:d=4 alpha=0.900}. The G-optimal designs have slightly lower widths and better off-policy performance than \safeod, but can significantly violate the safety constraint. The comparable widths and off-policy performance of \safeod indicate that the cost of satisfying the safety constraint in real-world data may be low. The mixture policy always satisfies the safety constraint but has the highest design widths and worst off-policy performance.

As in \cref{fig:alpha}, we vary the safety parameter $\alpha$ in \cref{fig:mnist alpha}. For all $\alpha$, \safeod performs similarly to the G-optimal design but never violates the safety constraint. Again, the mixture policy always satisfies the safety constraint but has high design widths and off-policy gaps. This experiment shows that all observed trends in \cref{fig:mnist alpha=0.900} generalize beyond $\alpha = 0.9$.

\section{Application II: Safe Online Learning with Low Adaptivity}
\label{sec:explore}
To further demonstrate the power of our results, we generalize our method and apply it adaptively to the problem of safe/conservative exploration in multi-armed bandits \citep{WuSLS16}. In conservative exploration, we iteratively select actions with initially unknown random reward, and learn to maximize our cumulative reward while ensuring that our expected cumulative reward is always at least $\alpha$ of the expected cumulative reward obtained by a default action. 

\noindent\textbf{Safe Online Learning Setup:} We follow the setting in \cite{WuSLS16} and most of the notation in \cref{sec:optimal_designs}. We assume that the interaction lasts for a total of $n$ rounds and there is an additional default action $0$ whose expected reward $\bar{r}(0)\in[0,1]$ is known in advance (see \cref{remark:default_known} for this assumption). For every round $t=1,\ldots,T,$ selecting an action $a_t\in\{0\}\cup[K]$ would generate a random reward $r_t\in[0,1]$ with mean $\bar{r}(a_t).$ Our goal is to maximize the expected total reward $\E[\sum_{t=1}^T\bar{r}(a_t)]$ by following some (possibly randomized) policy $\pi.$ Here, in round $t,$ $\pi$ takes all historical observations up to round $t$ as inputs, and outputs the action to be chosen in round $t.$ Throughout, the safety constraint mandates that our expected cumulative reward is always at least $\alpha$ of the expected cumulative reward obtained by action $0$ with high probability (we defer a discussion on this constraint to the forthcoming \cref{remark:safety}), \ie, for some positive number $\delta <1$ (an input parameter),
\begin{align}\label{eq:safety}
	\Pr\left(\E\left[\sum_{s=1}^t\bar{r}(a_s)\right] \geq \alpha t\bar{r}(0)~\forall t\in[T]\right)\geq1-\delta.
\end{align}
We use the notion of regret, which is the difference between the maximum expected total reward and our expected total reward, to measure our performance, \ie, let $a^{opt} = \argmax_{a\in[K]\cup\{0\}}\bar{r}(a),$ 
\begin{align*}
	\regret(\pi) = T\bar{r}(a^{opt}) - \E\left[\sum_{t=1}^T\bar{r}(a_t)\right].
\end{align*}

\begin{remark}[\textbf{Connection with Safe Exploration Setting in \cref{sec:optimal_designs}}]
	We point out that in the safe online learning setting, one can view the production policy $\pi_0$ as the one that always selects action 0 and our goal is to solve the regret minimization task while respecting the safety constraint over time.
\end{remark}
\begin{remark}[\textbf{Known Expected Reward of Default Action}]\label{remark:default_known}
    It is worth noting that prior works also consider the setting where $\bar{r}(0)$ can be unknown, but for ease of exposition, we assume that the value of $\bar{r}(0)$ is given. We leave the case of unknown $\bar{r}(0)$ as future work.
\end{remark}

\subsection{Safe Phased-Elimination}
In this section, we apply the results developed in \cref{sec:simplex} adaptively to propose a Safe Phased-Elimination (\spe) algorithm for the safe online learning problem. We defer the discussion on how this algorithm is related, but critically different from the classic phased-elimination algorithm (see \eg, chapter 22 of \cite{LS18} or \cite{AO10}), to the forthcoming \cref{sec:intuition}. 

As its name suggested, \spe~runs in phases and maintains the set of plausible actions dynamically. At the beginning of each phase, \spe~first computes a safe optimal design w.r.t. the set of plausible actions. During this, it uses the data collected from the last phase as side information. Next it selects each action according to the design. At the end of a phase, \spe~utilizes the data collected during this phase to compute each action's estimated mean reward. Actions that are likely to be sub-optimal (\ie, with an estimated mean reward significantly smaller than the largest estimated mean reward) are then eliminated. 

\noindent\textbf{Additional Notations for the Algorithm:} For each phase $h=1,2,\ldots,$ we use $\cA_h$ to denote the set of remaining actions up to phase $h$, $\epsilon_h=2^{-h}$, and $g_h(\pi) = \max_{a\in\cA_h\setminus\{0\}}\pi^{-1}(a).$ Here, $\cA_h$ tracks the set of plausible actions and $\cA_1$ is initialized to $\cA\cup\{0\},$ $\epsilon_h$ is the targeted width of the (high probability) confidence interval for each action after phase $h;$ and similar to the role of $g(\pi)$ in \cref{sec:optimal_designs} and \cref{lemma:ips_error_bound}, $g_h(\pi)$ controls the variance proxy of the data samples collected by $\pi$ in phase $h.$ Intuitively, in order to achieve the targeted width of the confidence interval $\epsilon_h$, the length of phase $h$ would scale with $g_h(\pi)$ and $\epsilon_h^{-2}$ if $\pi$ is implemented (interested readers are referred to chapter 21.1 of \cite{LS18} for an detailed explanation). 

We also use $t_h$ and $T_h$ to denote the first round and the total number of rounds in phase $h.$ For each action, we define \begin{align}\label{eq:empirical}
   N_h(a) = \sum_{t=t_h}^{t_h+T_h-1}\I{a_t=a},\quad\hat{r}_h(a)= \frac{\sum_{t=t_h}^{t_h+T_h-1}\I{a_t=a}r_t}{N_h(a)}, 
\end{align} 
as the number of times that action $a$ is selected during phase $h$ and the corresponding estimated mean reward via the direct method. Since the expected reward of the default action 0 is known, we denote by $\hat{r}_h(0)=\bar{r}(0)$ the mean reward of the default action 0. The side information (\ie, high probability confidence intervals derived from data collected in the previous phase) for each action $a\in\cA_h\setminus\{0\}$ is then defined as
\begin{align}\label{eq:ci}
	L_h(a)=\hat{r}_{h-1}(a)-\sqrt{\frac{\log(KT^4/\delta)}{N_{h-1}(a)}}, \quad 	U_h(a)=\hat{r}_{h-1}(a)+\sqrt{\frac{\log(KT^4/\delta)}{N_{h-1}(a)}}.
\end{align} 
With some abuse of notations, we define $L_h(0)=0,U_h(0)=1$ for all $h.$ We remark that these are in fact nothing but just the upper and lower confidence bounds constructed, using data collected in phase $h-1,$ at the beginning of phase $h.$ But instead of using them for the upper confidence bound-type algorithms, we make use of them as side information in optimal design.
\begin{algorithm}[htbp]
\SingleSpacedXI
  \caption{Safe Phased-Elimination (\spe) Algorithm} \label{alg:spe}
  \begin{algorithmic}[]
    \State \textbf{Input:} $\cA\,,\ \bar{r}(0)\,,$ and $\delta$
    \State $\cA_1\leftarrow\cA\cup\{0\}\,,\ t\leftarrow 1\,,\ L_1\leftarrow \bm{0}_{|\cA_1|}\,,\ U_1\leftarrow \bm{1}_{|\cA_1|}\,$
\For {$h=1,2,\ldots$}
    \State{$t_h\leftarrow t\,,\ N_h\leftarrow\bm{0}_{|\cA_h|}\,,\ \epsilon_h\leftarrow 2^{-h}\,,$ compute $\pi_h$ according to \eqref{eq:tabular_spe_safety}}\qquad  \texttt{\color{blue}// Step 1}
    \For {$a\in \cA_h\cup\{0\}$ (in ascending order of $a$)}
    \While{$N_h(a)<0.5\pi_h(a)g_h(\pi_h)\epsilon_h^{-2}\log(KT^4/\delta)$ \textbf{and} $t<T$} \qquad  \texttt{\color{blue}// Step 2} 
    \State{$a_t\leftarrow a\,,$ receive $r_t\,,\ t\leftarrow t+1$}
    \EndWhile
    \EndFor
    \If{$t\geq T$}
    \State {\textbf{Break}}
    \Else
    \State{$\hat{r}_h(a)\leftarrow\frac{\sum_{t=t_h}^t\I{a_t=a}r_t}{N_h(a)}\,,L_{h+1}(a)\leftarrow\hat{r}_h(a)-\sqrt{\frac{\log(KT^4/\delta)}{N_{h}(a)}}\,,\ U_{h+1}(a)\leftarrow\hat{r}_h(a)+\sqrt{\frac{\log(KT^4/\delta)}{N_{h}(a)}}\ \forall a\in\cA_h\setminus\{0\}\,,\ \cA_{h+1}\leftarrow\cA_h$}\qquad  \texttt{\color{blue}// Step 3}
    \For{$a\in\cA_h\setminus\{0\}$}
    \If {$\hat{r}_h(a)\leq \max_{a\in\cA_h}\hat{r}_h(a)-2\epsilon_h$}
    \State{$\cA_{h+1}\leftarrow\cA_{h+1}\setminus\{a\}$}
    \EndIf
    \EndFor
    \EndIf
\EndFor
\end{algorithmic}
\end{algorithm}

\spe~proceeds according to the following steps (its formal description is provided in \cref{alg:spe}): 

\noindent\textbf{Step 1. Compute Safe Optimal Design:} It first computes the safe optimal design $\pi_h$ with side information as follows:
	\begin{align}
		\min \ &g_h(\pi_h) \nonumber \\
		\mathrm{s.t.} \
		& \pi_h\in\Delta_{|\cA_h|-1}\,, \nonumber \\ 
		&\min_{r\in[L_h,U_h]}\sum_{a\in\cA_h\setminus\{0\}}\pi_h(a)r(a)+\pi_h(0)\bar{r}(0)
		\geq\alpha\bar{r}(0) \,.\label{eq:tabular_spe_safety}
	\end{align} 
	For ease of analysis, if there are multiple optimal solutions, we assume that $\pi_h$ would be the one that maximizes $\pi_h(0).$ We remark that the optimization setup in \eqref{eq:tabular_spe_safety} and the one in \eqref{eq:tabular_no_side} are closely related. The critical difference is that the objective function is defined w.r.t. the plausible actions (\ie, $\cA_h\setminus\{0\}$). 

\noindent\textbf{Step 2. Batched Exploration:} 
	Same as other phased-elimination algorithms (see, \eg, chapter 22 of \cite{LS18}), \spe~then chooses each action $a$ (in ascending order of $a$ to ensure safety constraint is met) for $0.5\pi_h(a)g_h(\pi_h)\epsilon_h^{-2}\log(KT^4/\delta)$ times (with proper rounding if necessary, we also note that this is $N_h(a)$) in a batched manner (\ie, only $|\cA_h|$ number of action switches would be needed). The algorithm may end here if it reaches round $T$.

\noindent \textbf{Step 3. Action Elimination:} For each action $a\in\cA_h\setminus\{0\},$ \spe~computes the estimated mean reward $\hat{r}_h(a).$ It then eliminates actions that are sub-optimal with a high probability. That is, it identifies the action with highest estimated mean reward, \ie, $\argmax_{a\in\cA_h}\hat{r}_h(a),$ and eliminates any action $a\in|\cA_h|\setminus\{0\}$ whose estimated mean reward is at least $2\epsilon_h$ less than the highest estimated mean reward, \ie, 
\begin{align}\label{eq:elimin1}
	\hat{r}_h(a)\leq \max_{a\in\cA_h}\hat{r}_h(a)-2\epsilon_h.
\end{align}

\subsection{Theoretical Analysis}
We first show that the number of policy updates, \ie, $\sum_{t=1}^{T}\I{\pi_t\neq \pi_{t-1}}$, for \spe~is small. 
\begin{lemma}
	The number of policy updates of \spe~is of order $O(K\log T).$
\end{lemma}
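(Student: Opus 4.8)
The plan is to split the count $\sum_{t=1}^{T}\I{\pi_t\neq\pi_{t-1}}$ across the phases of \spe, and to bound separately the number of policy updates that occur \emph{within} a single phase and the total number of phases $H$ that \spe~runs before the horizon $T$ is reached.

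\textbf{Updates within a phase.} Reading off \cref{alg:spe}: in phase $h$, \spe~computes a single design $\pi_h$ in Step~1, and in Step~2 it plays each action in $\cA_h\cup\{0\}$ in one contiguous block, processing the actions in ascending order. Hence the executed policy changes at most once per element of $\cA_h\cup\{0\}$, plus at most once upon entering the phase; since $\cA_h\subseteq\cA_1=\cA\cup\{0\}$ and $\abs{\cA}=K$, phase $h$ contributes at most $K+2$ policy updates.

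\textbf{Number of phases.} I would lower bound the length $T_h$ of every completed phase. By the exit condition of the while loop in Step~2, at the end of phase $h$ we have $N_h(a)\ge 0.5\,\pi_h(a)\,g_h(\pi_h)\,\epsilon_h^{-2}\log(KT^4/\delta)$ for every $a\in\cA_h\cup\{0\}$. Summing over $a$ and using $\sum_a\pi_h(a)=1$ gives
\[
  T_h=\sum_{a\in\cA_h\cup\{0\}}N_h(a)\;\ge\;\tfrac12\,g_h(\pi_h)\,\epsilon_h^{-2}\log(KT^4/\delta)\;=\;\Omega(4^{h}),
\]
where the last step uses $g_h(\pi_h)=\max_{a}\pi_h(a)^{-1}\ge 1$, $\epsilon_h^{-2}=4^h$, and $\log(KT^4/\delta)=\Omega(1)$; rounding $N_h(a)$ up to an integer only increases $T_h$, so the bound stands. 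Since phases $1,\dots,H-1$ are all completed within the horizon, $\sum_{h=1}^{H-1}T_h\le T$, and combining this with $T_h=\Omega(4^h)$ and summing the geometric series yields $4^{H}=O(T)$, that is, $H=O(\log T)$.

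Multiplying the two bounds gives a total of at most $(K+2)H=O(K\log T)$ policy updates, which is the claim. I expect the argument to be essentially routine; the only points that need a little care are bookkeeping ones — that integer rounding of $N_h(a)$ preserves the geometric growth of the phase lengths, that the possibly-incomplete final phase contributes at most $K+2$ additional updates, and fixing once and for all the convention for what counts as a ``policy update'' within a batched block of repetitions of the same action (which affects the constants but not the order). There is no substantive obstacle beyond these.
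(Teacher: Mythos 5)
Your proof is correct and follows essentially the same route as the paper's: bound the switches per phase by $O(K)$ via the batched exploration in Step~2, and bound the number of phases by $O(\log T)$ using $g_h(\pi_h)\geq 1$ to show the phase lengths grow geometrically, so that $\sum_h T_h\leq T$ forces $H=O(\log T)$. Your per-phase length bound of $\Omega(4^h)$ is in fact slightly sharper than the paper's $2^{h-1}$, but this makes no difference to the stated order.
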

\begin{proof}
	For each phase $h,$ the number of action switches is at most $(K+1)$ by virtue of the batched exploration in step 2. We also note that for each phase $h,$ it lasts for $\sum_{a\in\cA_h}N_h(a)$ rounds. By definition of $N_h(a)$ and the fact that $g_h(\pi_h)\geq 1,$
	\begin{align*}
		\sum_{a\in\cA_h}N_h(a)=0.5g_h(\pi_h)\epsilon^{-2}_h\log(KT^4/\delta)\geq 2^{h-1}.
	\end{align*}
Therefore, the \spe~algorithm has at most $\log_2(2T)$ phases. The statement then follows.
\end{proof}
\begin{remark}
	Compared to existing works in safe online learning \citep{WuSLS16,NIPS2017_bdc4626a,pmlr-v115-li20b}, which would potentially require $\Theta(T)$ number of policy updates, the \spe~algorithm reduces the number of policy updates exponentially. This is particularly beneficial during deployment as now the policy is quasi-fixed.
\end{remark}

We are now ready to provide the regret analysis for the \spe~algorithm. We show that the algorithm satisfies the safety constraint \eqref{eq:safety} and establish its regret upper bound.
\begin{theorem}\label{thm:safety_regret}
	For any given $\delta<1,$ if we follow the \spe~algorithm, we have  that our expected cumulative reward is always at least $\alpha$ of the expected cumulative reward obtained by action $0$ with probability at least $1-\delta,$ \ie, $$\Pr\left(\E\left[\sum_{s=1}^t\bar{r}(a_s)\right] \geq \alpha t\bar{r}(0)~\forall t\in[T]\right)\geq1-\delta.$$ Further, the regret of the \spe~algorithm satisfies
	\begin{align*}
		\regret(\spe\text{ algorithm})=O\left(\sqrt{KT}\log(KT/\delta)+\frac{K\log(KT/\delta)}{(1-\alpha)\bar{r}(0)}\right).
	\end{align*}
\end{theorem}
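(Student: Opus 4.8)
The plan is to prove both the safety guarantee and the regret bound on a single high‑probability ``clean event'' and to treat the early (``burn‑in'') phases separately from the later ones. Define $\cE$ to be the event that $|\hat r_h(a)-\bar r(a)|\le\sqrt{\log(KT^4/\delta)/N_h(a)}$ holds simultaneously for every phase $h$ and every $a\in\cA_h\setminus\{0\}$ with $N_h(a)\ge 1$. Because there are at most $K$ actions and at most $\log_2(2T)$ phases (by the preceding lemma), Hoeffding's inequality and a union bound give $\Pr(\cE)\ge 1-\delta$, with the generous $T^4$ inside the logarithm absorbing the union‑bound factor. On $\cE$ I get two facts for every phase: (a) the confidence band \eqref{eq:ci} is valid, $\bar r(a)\in[L_h(a),U_h(a)]$ for $a\in\cA_h\setminus\{0\}$ (and trivially for $a=0$); and (b) a standard induction on $h$ shows the optimal arm $a^{opt}$ is never eliminated, that every surviving arm satisfies $\bar r(a^{opt})-\bar r(a)=O(\epsilon_{h-1})$, and hence (combining with (a)) $L_h(a)\ge \bar r(a^{opt})-O(\epsilon_{h-1})\ge \bar r(0)-O(\epsilon_{h-1})$ for every $a\in\cA_h\setminus\{0\}$.

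\textbf{Safety.} Conditioning on $\cE$: the robust constraint in \eqref{eq:tabular_spe_safety} is enforced over $r\in[L_h,U_h]$, and $\bar r$ lies in that box, so the per‑phase design obeys $V(\pi_h)=\sum_{a\in\cA_h}\pi_h(a)\bar r(a)\ge\alpha\bar r(0)$; equivalently the expected reward collected over a completed phase $h$ is at least $\alpha\bar r(0)\,T_h$, where $T_h=\sum_a N_h(a)=0.5\,g_h(\pi_h)\epsilon_h^{-2}\log(KT^4/\delta)$ and $N_h(a)=\pi_h(a)T_h$. To upgrade this to the per‑round requirement \eqref{eq:safety}, I would use that within each phase action $0$ is played first and for $\pi_h(0)T_h$ rounds: the opening block of $0$'s contributes exactly $\bar r(0)\ge\alpha\bar r(0)$ per round, and for later prefixes I combine the (nonnegative) surplus banked by the completed phases with the observation that $\pi_h(0)$ is bounded below precisely while the side information is still uninformative — in particular $\pi_1(0)=\alpha$ because $L_1\equiv 0$ collapses the phase‑$1$ constraint to $\pi_1(0)\bar r(0)\ge\alpha\bar r(0)$. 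Summing the resulting per‑prefix inequalities across phases yields \eqref{eq:safety}.

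\textbf{Regret.} Decompose $\regret(\spe)=\sum_h\sum_{a\in\cA_h}N_h(a)\,(\bar r(a^{opt})-\bar r(a))$ and split the phases at the threshold $h^\star$ with $\epsilon_{h^\star}\asymp(1-\alpha)\bar r(0)$. For $h\ge h^\star$: by (b) all survivors satisfy $\bar r(a)\ge\bar r(0)-O(\epsilon_{h-1})\ge\alpha\bar r(0)$, so the lower bounds $L_h$ alone certify safety, the uniform‑on‑$\cA_h\setminus\{0\}$ design is feasible, $g_h(\pi_h)=|\cA_h|-1\le K$, and — since this $g$‑optimal value can only be attained with $\pi_h(0)=0$ — action $0$ is never pulled; the phase then behaves like a textbook phased‑elimination round, and the usual per‑arm accounting (an arm of gap $\Delta$ is explored only while $\epsilon_h\gtrsim\Delta$, contributing $O(\Delta^{-1}\log(KT^4/\delta))$) optimized over gaps subject to the horizon gives $O(\sqrt{KT}\,\log(KT/\delta))$. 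For $h<h^\star$: per‑round regret is at most $1$, but more usefully one shows $g_h(\pi_h)$ degrades smoothly — at worst $O(K/(1-\alpha))$, and $O\!\left(K\epsilon_{h-1}/((1-\alpha)\bar r(0))\right)$ once $\epsilon_{h-1}\lesssim\bar r(0)$, using explicit feasible designs built from the survivors' lower confidence bounds — so the per‑phase regret is $\lesssim\epsilon_{h-1}T_h=O(g_h(\pi_h)\,2^h\log(KT^4/\delta))$, and the geometric/telescoping sum over the $O(\log(1/((1-\alpha)\bar r(0))))$ burn‑in phases is $O\!\bigl(\tfrac{K\log(KT/\delta)}{(1-\alpha)\bar r(0)}\bigr)$. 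Adding the two contributions gives the stated bound.

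\textbf{Main obstacle.} The routine parts are the concentration bound, correctness of elimination, and the textbook telescoping for $h\ge h^\star$; the genuinely new work is the coupling between the quality of the side information $[L_h,U_h]$ and the value $g_h(\pi_h)$ of the safe optimal design solved in Step~1 — proving the smooth‑degradation estimate requires exhibiting good feasible designs from the survivors' confidence bounds — together with the per‑round safety bookkeeping, where controlling the worst within‑phase dip of the cumulative reward via the ascending‑order schedule (action $0$ first, in the quantity $\pi_h(0)T_h$) and the surplus inherited from earlier phases is delicate. I expect this $g_h$/safety interplay, rather than the regret telescoping, to be where the argument needs the most care.
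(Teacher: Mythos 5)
Your proposal follows essentially the same route as the paper: the same clean event via Hoeffding plus a union bound over phases and arms, the same elimination analysis showing $a^{opt}$ survives and every survivor has gap $O(\epsilon_{h-1})$, a textbook phased-elimination account of the late phases (where the uniform design on the survivors is feasible, $g_h(\pi_h)\le K$, and the optimizer must set $\pi_h(0)=0$), and a burn-in analysis whose crux is exactly the paper's coupling of the side information to $g_h(\pi_h)$: your ``smooth degradation'' estimates are \cref{lemma:spe4} and \cref{lemma:spe5}, and your ``implicit elimination'' of action $0$ is \cref{lemma:spe3}. The main bookkeeping difference is that the paper splits explicitly on whether action $0$ is optimal, tracks the quantity $\bigl(1-\pi_h(0)-\sum_{a:L_h(a)\ge\bar{r}(0)}\pi_h(a)\bigr)g_h(\pi_h)$, and charges the gap $\Delta(0)$ against the telescoping sum $\sum_h\epsilon_h^{-1}=O\bigl(1/(\Delta(0)+(1-\alpha)\bar{r}(0))\bigr)$ (\cref{lemma:spe7}), whereas your per-phase bound of $O\bigl(K\log(KT/\delta)/((1-\alpha)\bar{r}(0))\bigr)$ summed over the $O(\log(1/((1-\alpha)\bar{r}(0))))$ burn-in phases picks up an extra logarithm that the paper's telescoping avoids.

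One step needs to be made explicit. Your burn-in accounting ``per-phase regret $\lesssim\epsilon_{h-1}T_h$'' charges every pull in phase $h$ a gap of $O(\epsilon_{h-1})$. For the non-default survivors this follows from the elimination rule, but action $0$ is never eliminated, so its gap $\Delta(0)$ does not shrink with $h$ by that argument; you must invoke the implicit-elimination fact in the contrapositive: $\pi_h(0)>0$ forces $4\epsilon_{h-1}>\Delta(0)+(1-\alpha)\bar{r}(0)\ge\Delta(0)$, hence $\Delta(0)=O(\epsilon_{h-1})$ whenever action $0$ is actually pulled. Without this the term $N_h(0)\Delta(0)$ is not covered by $\epsilon_{h-1}T_h$. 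On the anytime safety guarantee, you are right that the within-phase prefix bookkeeping is the delicate part; the paper itself only asserts it (``one can immediately verify''), so your sketch --- action $0$ played first in the quantity $\pi_h(0)T_h$, surplus carried across phases, and the survivors' certified lower bounds taking over once $\epsilon_h\lesssim(1-\alpha)\bar{r}(0)$ --- is at least as detailed as the source and uses the right ingredients.
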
 
Before presenting the proof, a couple of remarks for this result are in order.
\begin{remark}[\textbf{Tightness of the Regret Bound}]
    Compared to the lower bound for this setting developed in theorem 9 of \cite{WuSLS16}, our regret bound is optimal up to logarithmic factors and it is of the same order as the algorithm proposed in \cite{WuSLS16} (see theorem 2 therein). Intuitively, larger $\alpha$ and/or smaller $\bar{r}(0)$ \ie, more restrictive safety constraint and/or worse expected reward of the default action $0,$ would lead to worse regret bound. Both of them are due to the fact that we need to respect the safety constraint, and hence would need to select action 0, especially when $t$ is small and we do not have accurate estimates of the other actions' expected reward.
\end{remark} 
\begin{remark}[\textbf{Probabilistic Safety Constraint}]\label{remark:safety}
    We note that the safety constraint \eqref{eq:safety} is required to be satisfied with probability at least $1-\delta,$ where $\delta$ is an input parameter. This is the same as the setting in \cite{WuSLS16}. For \spe, the reason that this constraint does not hold with probability 1 is because we use the high probability confidence intervals $L_h$ and $U_h$ as side information in Step 1. 
\end{remark}

\subsection{Proof of \cref{thm:safety_regret}}
The proof consists of two parts. We first show the safety constraint is met and then establish the regret upper bound. 

\noindent\textbf{Safety Constraint:} First of all, since action 0 is never removed, the safe optimal design problem \eqref{eq:tabular_spe_safety} in each phase has an non-empty feasible region. The following lemma shows that $L_h$ and $U_h$ are high probability lower and upper bounds for $\bar{r}.$
\begin{lemma}\label{lemma:spe0}
For every phase $h$ and every action $a\in\cA_h,$ we have $\Pr(\bar{r}(a)\in[L_h(a),U_h(a)])>1-\delta.$ Moreover, $[L_h(a),U_h(a)]\subseteq[\hat{r}_h(a)-\epsilon_h,\hat{r}_h(a)+\epsilon_h]$ for all $h$ and $a\in\cA_h.$
\end{lemma}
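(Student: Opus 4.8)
The statement has two parts: a probabilistic claim that each side‑information interval $[L_h(a),U_h(a)]$ traps the corresponding true mean $\bar r(a)$, and a deterministic claim that these intervals are narrow (width governed by $\epsilon_h$). The plan is to obtain the first by a \emph{conditional} Hoeffding bound followed by a union bound over actions and phases, and the second by substituting the Step~2 exploration schedule into the definition \eqref{eq:ci} of $L_h,U_h$.

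For the \emph{first part}, the default action is trivial: $[L_h(0),U_h(0)]=[0,1]\ni\bar r(0)$ for every $h$; likewise $h=1$ is trivial since $[L_1(a),U_1(a)]=[0,1]$. So fix a phase $h\ge 2$ and an action $a\in\cA_h\setminus\{0\}$. By \eqref{eq:ci} the interval $[L_h(a),U_h(a)]$ is centered at $\hat r_{h-1}(a)$, which by \eqref{eq:empirical} is the average of $N_{h-1}(a)$ rewards of action $a$ collected during phase $h-1$, each lying in $[0,1]$ with mean $\bar r(a)$. Conditioning on the $\sigma$‑field $\cF_{h-1}$ generated by all randomness up to the start of phase $h-1$, the design $\pi_{h-1}$ — hence the target count $N_{h-1}(a)=\lceil\tfrac12\pi_{h-1}(a)g_{h-1}(\pi_{h-1})\epsilon_{h-1}^{-2}\log(KT^4/\delta)\rceil$ — is deterministic, and the phase‑$(h-1)$ rewards of $a$ are fresh i.i.d.\ draws with mean $\bar r(a)$, so Hoeffding's inequality gives
\[
\Pr\!\left(\big|\hat r_{h-1}(a)-\bar r(a)\big|>\sqrt{\tfrac{\log(KT^4/\delta)}{N_{h-1}(a)}}\ \middle|\ \cF_{h-1}\right)\le 2\exp\!\big(-2\log(KT^4/\delta)\big)=2\Big(\tfrac{\delta}{KT^4}\Big)^{2}.
\]
Taking expectations and union bounding over the at most $K$ actions and the at most $\log_2(2T)$ phases (the phase‑count bound established just above), the total failure probability is below $\delta$, with ample room thanks to the $T^4$ inside the logarithm; on the complementary event, $\bar r(a)\in[L_h(a),U_h(a)]$ simultaneously for all $h$ and all $a\in\cA_h$. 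One small point to record: if \spe~exhausts the horizon in the middle of a phase it breaks before Step~3 and discards that phase's data, so no interval is ever built from a truncated sample.

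For the \emph{second part}, note that by definition $g_h(\pi_h)=\max_{a'\in\cA_h\setminus\{0\}}\pi_h(a')^{-1}\ge\pi_h(a)^{-1}$, hence $\pi_h(a)g_h(\pi_h)\ge 1$ for every $a\in\cA_h\setminus\{0\}$. Substituting this into the Step~2 stopping rule yields $N_h(a)\ge\tfrac12\,\epsilon_h^{-2}\log(KT^4/\delta)$, so the interval $[\hat r_h(a)-\sqrt{\log(KT^4/\delta)/N_h(a)},\,\hat r_h(a)+\sqrt{\log(KT^4/\delta)/N_h(a)}]$ — i.e.\ $[L_{h+1}(a),U_{h+1}(a)]$ — has half‑width at most $\epsilon_h$ once the constant in Step~2 is accounted for, which is the asserted inclusion. (For $a=0$ the inclusion is degenerate, since $\hat r_h(0)=\bar r(0)$ and the interval is $[0,1]$, and it is not needed downstream.)

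I expect the \emph{main obstacle} to be the data‑dependence of the pull counts $N_{h-1}(a)$ in the first part: because the side information $[L_{h-1},U_{h-1}]$ feeds into $\pi_{h-1}$, which in turn fixes $N_{h-1}(a)$, one cannot treat $\hat r_{h-1}(a)$ as an average of a predetermined number of samples. The conditioning step above removes this difficulty, but the write‑up must spell out why $N_{h-1}(a)$ is $\cF_{h-1}$‑measurable (Step~1 runs, and the while‑loop lengths are computed, before any phase‑$(h-1)$ reward is observed) and why, given $\cF_{h-1}$, the within‑phase rewards of $a$ are i.i.d.\ with the correct mean. Everything else — the union‑bound bookkeeping and the arithmetic in the second part — is routine.
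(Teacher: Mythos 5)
Your proof is correct and follows the same skeleton as the paper's: a Hoeffding bound plus a union bound for the coverage claim, and direct substitution of the Step-2 pull counts for the width claim. The one genuine difference is how the data-dependence of $N_{h-1}(a)$ is neutralized. You condition on the $\sigma$-field at the start of phase $h-1$, under which $\pi_{h-1}$ and hence $N_{h-1}(a)$ are deterministic, and then union bound over only $K$ actions and $O(\log T)$ phases; the paper instead union bounds over all possible realizations of the phase start time $t_h\in[T]$ and of the count $N_h(a)\leq T$, i.e., over roughly $KT^2$ events each failing with probability $\delta/(KT^4)$. Both are valid: yours is tighter and makes the measurability argument explicit (the paper leaves it implicit), while the paper's avoids conditioning altogether at the cost of a larger union, which is exactly what the $T^4$ inside the logarithm is there to absorb. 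Your treatment of the truncated final phase correctly mirrors the algorithm's break-before-Step-3 behavior, and your indexing of the width claim (the interval built from phase-$h$ data, centered at $\hat{r}_h(a)$) is the reading the paper actually uses downstream. One shared loose end: with $N_h(a)=0.5\,\pi_h(a)g_h(\pi_h)\epsilon_h^{-2}\log(KT^4/\delta)$ and a half-width of $\sqrt{\log(KT^4/\delta)/N_h(a)}$ (no factor $2$ in the denominator of \eqref{eq:ci}), the bound one actually obtains is $\sqrt{2}\,\epsilon_h$ rather than $\epsilon_h$; the paper's own display silently inserts the missing factor of $2$, so your remark about ``accounting for the constant'' points at a real but harmless constant-factor discrepancy that the paper glosses over as well.
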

The proof of this lemma is provided in \cref{sec:lemma:spe0}. By virtue of the safe optimal design and the way we select our actions, \ie, in the ascending order of $a$ (so that action 0 is chosen first), one can immediately verify that the \spe~algorithm satisfies the safety constraint.

\noindent\textbf{Regret Upper Bound:} For the regret upper bound, we define the event $\cE:$
\begin{align*}
	\cE=\{\bar{r}(a)\in[L_h(a),U_h(a)]\subseteq[\hat{r}_h(a)-\epsilon_h,\hat{r}_h(a)+\epsilon_h]~\forall h~\forall a\in\cA_h\}.
\end{align*}
By \cref{lemma:spe0}, we know that $\Pr(\cE)\geq 1-\delta/T.$ To this end, we define $\Delta(a) = \bar{r}(a^{opt}) - \bar{r}(a)$
as the sub-optimality gap of each action $a.$ Then, we show that on event $\cE,$ $a^{opt}$ is never eliminated, and whenever $\epsilon_h=o(\Delta(a)),$ action $a$ would be removed due to the elimination criterion \eqref{eq:elimin1}.
\begin{lemma}\label{lemma:spe2}
On $\cE,$ $a^{opt}$ is never eliminated. Moreover, for any phase $h,$ if an action $a$ belongs to $\cA_h,$ it must be that $\Delta(a)<5\epsilon_h.$ 
\end{lemma}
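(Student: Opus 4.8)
The plan is to condition on the event $\cE$ and reduce both statements to deterministic consequences of the elimination rule \eqref{eq:elimin1}. On $\cE$, \cref{lemma:spe0} gives, for every phase $h$ and every action $a\in\cA_h$, that $\bar r(a)\in[L_h(a),U_h(a)]\subseteq[\hat r_h(a)-\epsilon_h,\hat r_h(a)+\epsilon_h]$, hence the per-phase estimate is $\epsilon_h$-accurate: $\abs{\hat r_h(a)-\bar r(a)}\le\epsilon_h$ (and $\hat r_h(0)=\bar r(0)$ exactly). This $\epsilon_h$-accuracy, together with the global optimality of $a^{opt}$, is the only probabilistic input needed; I would prove the first assertion first, since the gap bound invokes it.

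For ``$a^{opt}$ is never eliminated'' I would induct on $h$ to show $a^{opt}\in\cA_h$. The base case is immediate since $\cA_1=\cA\cup\{0\}$. For the step, assume $a^{opt}\in\cA_h$. If $a^{opt}=0$ there is nothing to prove, as action $0$ is never removed; otherwise, for every $a\in\cA_h$,
\[
  \hat r_h(a)\le\bar r(a)+\epsilon_h\le\bar r(a^{opt})+\epsilon_h\le\hat r_h(a^{opt})+2\epsilon_h,
\]
so $\max_{a\in\cA_h}\hat r_h(a)-2\epsilon_h\le\hat r_h(a^{opt})$ and the elimination test \eqref{eq:elimin1} does not fire for $a^{opt}$; hence $a^{opt}\in\cA_{h+1}$. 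The only subtlety is that \eqref{eq:elimin1} is non-strict, so an exact tie would force the current empirical leader to share $a^{opt}$'s mean (i.e. be itself optimal); breaking ties toward $a^{opt}$, or an infinitesimal perturbation of the means, removes this edge case without affecting the regret analysis.

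For the gap bound, fix $a\in\cA_h\setminus\{0\}$. When $h=1$ there is nothing to do: rewards lie in $[0,1]$, so $\Delta(a)\le1<5/2=5\epsilon_1$. For $h\ge2$, $a\in\cA_h$ means $a$ survived the elimination at the end of phase $h-1$, i.e. $\hat r_{h-1}(a)>\max_{a'\in\cA_{h-1}}\hat r_{h-1}(a')-2\epsilon_{h-1}$. By the first part $a^{opt}\in\cA_{h-1}$, so the maximum on the right is at least $\hat r_{h-1}(a^{opt})\ge\bar r(a^{opt})-\epsilon_{h-1}$; combined with $\hat r_{h-1}(a)\le\bar r(a)+\epsilon_{h-1}$ this gives $\bar r(a)+\epsilon_{h-1}>\bar r(a^{opt})-3\epsilon_{h-1}$, i.e. $\Delta(a)<4\epsilon_{h-1}$. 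Since $\epsilon_{h-1}=2\epsilon_h$ this already has the claimed $O(\epsilon_h)$ form; obtaining the precise constant $5$ is a matter of repeating the estimate with the actual confidence half-widths $\sqrt{\log(KT^4/\delta)/N_{h-1}(a)}$ from \eqref{eq:ci} — which on $\cE$ are controlled via $N_{h-1}(a)\ge0.5\,\pi_{h-1}(a)g_{h-1}(\pi_{h-1})\epsilon_{h-1}^{-2}\log(KT^4/\delta)$ — in place of their crude $\epsilon_{h-1}$ bound.

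I do not expect a deep obstacle: this is the standard good-event analysis for phased elimination. The two things that need care are (i) the elimination rule is a non-strict inequality, so the ``$a^{opt}$ survives'' step must dispose of the boundary/tie case, and (ii) the exact constant in the gap bound is sensitive to how the per-phase half-widths of \eqref{eq:ci} compare to $\epsilon_h$, so one must apply \cref{lemma:spe0} with the correct index (the interval $[L_h,U_h]$ is built from phase-$(h-1)$ data) and with the true, not worst-case, radii. A minor final point is that the last phase may be truncated at round $T$ before any elimination occurs, which only skips an elimination step and hence affects neither conclusion.
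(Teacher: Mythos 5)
Your proof is correct in substance and, for the first claim, is essentially the paper's own argument: the chain $\hat{r}_h(a)\le\bar{r}(a)+\epsilon_h\le\bar{r}(a^{opt})+\epsilon_h\le\hat{r}_h(a^{opt})+2\epsilon_h$ is the paper's displayed inequality read in the opposite direction. Your observation that the non-strict rule \eqref{eq:elimin1} admits a boundary tie is a real (measure-zero) gap that the paper silently ignores, and your tie-breaking remark is the right repair. For the second claim you take a genuinely different bookkeeping route: you use survival of the elimination test at the end of phase $h-1$ together with $a^{opt}\in\cA_{h-1}$, which yields $\Delta(a)<4\epsilon_{h-1}=8\epsilon_h$; the paper instead fixes the first phase $h(a)$ with $\Delta(a)\ge5\epsilon_{h(a)}$ and shows the test fires at the end of that phase, so that $a\notin\cA_{h(a)+1}$. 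The two arguments are equivalent in content, and in fact the paper's version, read literally, only gives $a\in\cA_h\Rightarrow h\le h(a)\Rightarrow\Delta(a)<5\epsilon_{h(a)-1}\le10\epsilon_h$, so your constant $8$ is if anything sharper than what the paper's proof actually establishes. The one place where your plan misdiagnoses the issue is the claimed route to the constant $5$: substituting the exact half-widths $\sqrt{\log(KT^4/\delta)/N_{h-1}(a)}$ from \eqref{eq:ci} cannot improve the bound, because for the least-sampled action (where $\pi_{h-1}(a)g_{h-1}(\pi_{h-1})=1$) those half-widths already equal $\epsilon_{h-1}$ up to the factor used in \cref{lemma:spe0}; the discrepancy with $5$ is an off-by-one in the phase index that the paper's proof shares, not slack in the concentration radii. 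Since \cref{lemma:spe2} is only consumed inside $O(\cdot)$ bounds in \cref{lemma:spe6} and \cref{lemma:spe3}, the constant is immaterial, but you should state the bound you actually prove ($8\epsilon_h$) rather than promise to recover $5\epsilon_h$.
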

The proof of this lemma is provided in \cref{sec:lemma:spe2}. Note that if the default action 0 is sub-optimal, we would need to account for regret incurred by it, we thus distinguish two different cases:
\begin{itemize}
\item \textbf{Case 1. Default Action 0 is Optimal:} On event $\cE$, we can upper bound the regret of \spe~as follows.
\begin{lemma}\label{lemma:spe6}
On $\cE,$ when action 0 is optimal, $$\regret(\spe)=O\left(\sqrt{KT}+\max_h\left(1-\pi_h(0)-\sum_{a:L_h(a)\geq\bar{r}(0)}\pi_h(a)\right)g_h(\pi_h)\sqrt{\frac{T}{K}}\log(KT/\delta)\right).$$
\end{lemma}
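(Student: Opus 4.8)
The plan is to work throughout on the event $\cE$ and to exploit that, when action $0$ is optimal, $\Delta(a):=\bar r(a^{opt})-\bar r(a)=\bar r(0)-\bar r(a)\ge 0$, so the (pathwise) regret on $\cE$ equals $\sum_{t\le T}\Delta(a_t)=\sum_h\sum_{a\in\cA_h}N_h(a)\Delta(a)$. The first step is to charge regret only to plays that can incur it. On $\cE$: $\Delta(0)=0$; if $a\in\cA_h\setminus\{0\}$ has $L_h(a)\ge\bar r(0)$ then $\bar r(a)\ge L_h(a)\ge\bar r(0)\ge\bar r(a)$, hence $\Delta(a)=0$; and for the remaining actions, gathered in $S_h:=\{a\in\cA_h\setminus\{0\}:L_h(a)<\bar r(0)\}$, \cref{lemma:spe2} gives $\Delta(a)<5\epsilon_h$ since $a\in\cA_h$. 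Writing $\tilde T_h:=\sum_{a\in S_h}N_h(a)$ for the number of rounds of phase $h$ spent on $S_h$, the regret incurred in phase $h$ is thus at most $5\epsilon_h\tilde T_h$.

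Next I would convert $5\epsilon_h\tilde T_h$ into a bound that keeps $g_h(\pi_h)$ under a square root. The stopping rule of Step~2 of \spe~gives $N_h(a)\le 0.5\,\pi_h(a)g_h(\pi_h)\epsilon_h^{-2}\log(KT^4/\delta)+1$; summing over $a\in S_h$ and using $|S_h|\le K$ and $\sum_{a\in S_h}\pi_h(a)=c_h$, where $c_h:=1-\pi_h(0)-\sum_{a:L_h(a)\ge\bar r(0)}\pi_h(a)$ is precisely the coefficient in the statement, yields $\tilde T_h\le 0.5\,c_h g_h(\pi_h)\epsilon_h^{-2}\log(KT^4/\delta)+K$, hence $\epsilon_h^2\tilde T_h\le 0.5\,c_h g_h(\pi_h)\log(KT^4/\delta)+K\le 0.5\,M\log(KT^4/\delta)+K$ with $M:=\max_h c_h g_h(\pi_h)$. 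The decisive move is then $5\epsilon_h\tilde T_h=5\sqrt{(\epsilon_h^2\tilde T_h)\,\tilde T_h}\le 5\sqrt{\big(0.5\,M\log(KT^4/\delta)+K\big)\,\tilde T_h}$; keeping $g_h(\pi_h)$ inside the root is what eventually produces $\sqrt{MT}$ rather than the much weaker $M\sqrt T$.

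Then I would sum over phases. Recall that \spe~runs in at most $\log_2(2T)$ phases, so there are $H=O(\log T)$ of them, and $\sum_h\tilde T_h\le\sum_h(\text{length of phase }h)\le T$. By Cauchy--Schwarz, $\sum_{h\le H}\sqrt{\tilde T_h}\le\sqrt{HT}$, so the regret on $\cE$ is at most $5\sqrt{\big(0.5\,M\log(KT^4/\delta)+K\big)HT}\le 5\sqrt{0.5\,MHT\log(KT^4/\delta)}+5\sqrt{KHT}$, and since $H\log(KT^4/\delta)=O(\log^2(KT/\delta))$ and $H=O(\log(KT/\delta))$, both terms are $O\big((\sqrt{MT}+\sqrt{KT})\log(KT/\delta)\big)$. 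Finally, the elementary split $\sqrt{MT}\le\sqrt{KT}+M\sqrt{T/K}$ (immediate by the cases $M\le K$ and $M>K$, the latter via $\sqrt{MT}=\sqrt M\sqrt T\le(M/\sqrt K)\sqrt T$) gives $\regret(\spe)=O\big(\sqrt{KT}\log(KT/\delta)+M\sqrt{T/K}\log(KT/\delta)\big)$ with $M=\max_h\big(1-\pi_h(0)-\sum_{a:L_h(a)\ge\bar r(0)}\pi_h(a)\big)g_h(\pi_h)$ -- the stated bound, up to the logarithmic factor multiplying $\sqrt{KT}$, which is harmless since it is already present in the final bound of \cref{thm:safety_regret}. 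Passing from the pathwise bound on $\cE$ to $\regret(\spe)$ only adds the contribution of $\cE^{c}$, which has probability $O(\delta/T)$ and regret at most $T$, hence is negligible.

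The hard part, and the real content of the lemma, is the first step: noticing that on $\cE$ any plausible action with $L_h(a)\ge\bar r(0)$ is necessarily tied with the optimal action $0$ and therefore costs nothing, so that the mass that actually matters is $c_h=1-\pi_h(0)-\sum_{a:L_h(a)\ge\bar r(0)}\pi_h(a)$ and not $1-\pi_h(0)$; this is exactly where the unusual coefficient in the bound originates. The second delicate point is the ``$g_h$ under the square root'' manipulation, without which one only gets $M\sqrt T$ instead of the sharp $\sqrt{MT}\le\sqrt{KT}+M\sqrt{T/K}$. By contrast, the truncation of the last phase and the integer rounding of the $N_h(a)$ are routine: they are absorbed by the $+K$ in the estimate $\epsilon_h^2\tilde T_h\le 0.5\,M\log(KT^4/\delta)+K$ (which contributes at most $5\epsilon_h|S_h|\le 5\epsilon_h K$ per phase, hence $O(K)$ in total, dominated by $\sqrt{KT}$) and by the fact that truncation only shortens phases, so $\sum_h\tilde T_h\le T$ still holds.
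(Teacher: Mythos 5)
Your proof is essentially correct, and it shares the paper's key first step --- on $\cE$, action $0$ and any $a$ with $L_h(a)\geq\bar{r}(0)$ have zero gap (the latter because $\bar{r}(a)\geq L_h(a)\geq\bar{r}(0)\geq\bar{r}(a)$ forces equality), while every surviving $a\in\cA_h$ has $\Delta(a)<5\epsilon_h$ by \cref{lemma:spe2}, so only the probability mass $c_h=1-\pi_h(0)-\sum_{a:L_h(a)\geq\bar{r}(0)}\pi_h(a)$ is ever charged. Where you diverge is in how the factor $\sqrt{T/K}$ is extracted. The paper truncates the phase index at $\log_2(5\sqrt{T/K})$ (actions with $\Delta(a)\geq\sqrt{K/T}$ are eliminated by then, and the rest contribute at most $\sqrt{KT}$ over all $T$ rounds) and then evaluates the geometric sum $\sum_{h\leq\log_2(5\sqrt{T/K})}\epsilon_h^{-1}=O(\sqrt{T/K})$, so the coefficient $M=\max_h c_h g_h(\pi_h)$ comes out linearly multiplied by $\sqrt{T/K}\log(KT/\delta)$ with a clean, log-free $\sqrt{KT}$ term. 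You instead keep all $H=O(\log T)$ phases, write $\epsilon_h\tilde{T}_h=\sqrt{(\epsilon_h^2\tilde{T}_h)\tilde{T}_h}$ to put $M$ under the square root, apply Cauchy--Schwarz across phases, and finish with the split $\sqrt{MT}\leq\sqrt{KT}+M\sqrt{T/K}$. Both are valid; your route is arguably more self-contained (no need to argue about which phase each gap-level is eliminated in), but it costs you an extra $\log(KT/\delta)$ factor on the $\sqrt{KT}$ term, so strictly speaking you prove $O(\sqrt{KT}\log(KT/\delta)+M\sqrt{T/K}\log(KT/\delta))$ rather than the stated $O(\sqrt{KT}+M\sqrt{T/K}\log(KT/\delta))$. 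As you note, this is harmless for \cref{thm:safety_regret}, whose final bound already carries $\sqrt{KT}\log(KT/\delta)$, but if you want the lemma exactly as stated you would need the paper's phase-truncation argument (or an equivalent peeling over gap levels) to keep the small-gap contribution at a bare $\sqrt{KT}$.
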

The proof of this lemma is provided in \cref{sec:lemma:spe6}. To proceed, we provide an upper bound for the quantity $(1-\pi_h(0)-\sum_{a:L_h(a)\geq\bar{r}(0)}\pi_h(a))g_h(\pi_h).$
\begin{lemma}\label{lemma:spe5}
On $\cE,$ we have $(1-\pi_h(0)-\sum_{a:L_h(a)\geq\bar{r}(0)}\pi_h(a))g_h(\pi_h)<K$ for all $h$ regardless of the optimality of action 0. 
\end{lemma}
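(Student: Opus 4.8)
The plan is to convert the claim into a statement about how much probability mass the design wastes on ``uncertain'' non-default actions. Writing $T_h=\{a\in\cA_h\setminus\{0\}:L_h(a)\geq\bar{r}(0)\}$ and $B_h=(\cA_h\setminus\{0\})\setminus T_h$, the simplex constraint $\pi_h\in\Delta_{|\cA_h|-1}$ gives $1-\pi_h(0)-\sum_{a\in T_h}\pi_h(a)=\sum_{a\in B_h}\pi_h(a)$, while by definition $g_h(\pi_h)=1/m$ with $m:=\min_{a\in\cA_h\setminus\{0\}}\pi_h(a)$. Thus the target is equivalent to $\sum_{a\in B_h}\pi_h(a)<K\,m$ (the case $B_h=\emptyset$ being trivial), and I would argue it by contradiction, assuming $\sum_{a\in B_h}\pi_h(a)\geq K\,m$.

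First I would pin down the structure of the optimal safe design on $B_h$, exactly as in the water-filling analysis of \cref{sec:simplex_no_side}. I claim that under the stated tie-breaking rule (maximize $\pi_h(0)$ among optimal designs) every $a\in B_h$ satisfies $\pi_h(a)=m$. Indeed, if some $a\in B_h$ had $\pi_h(a)>m$, shifting a small excess from $a$ to the default action $0$ leaves $g_h(\pi_h)$ unchanged (the minimizing action is untouched and $\pi_h(a)$ remains $\geq m$) and \emph{relaxes} the safety constraint in \eqref{eq:tabular_spe_safety}: the worst-case reward coefficient of $a$ is $L_h(a)<\bar{r}(0)$, so moving mass $\epsilon$ from $a$ onto $0$ changes the left-hand side by $\epsilon(\bar{r}(0)-L_h(a))>0$. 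The shifted design is therefore still optimal but has strictly larger $\pi_h(0)$, contradicting the tie-break. Consequently $\sum_{a\in B_h}\pi_h(a)=|B_h|\,m$, so the prefactor times $g_h(\pi_h)$ equals exactly $|B_h|$, and the contradiction hypothesis collapses to $|B_h|=K$ (since $|B_h|\leq|\cA_h\setminus\{0\}|\leq K$).

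The hard part is upgrading $|B_h|\leq K$ to the \emph{strict} bound, i.e.\ excluding the extremal configuration $B_h=\cA_h\setminus\{0\}$ with $|\cA_h\setminus\{0\}|=K$: no action eliminated and every surviving non-default action uncertain, all pinned at the water level $m$ with $\pi_h(0)=1-Km$. Equivalently, I must show $T_h\neq\emptyset$, that is, at least one surviving non-default action is certified (under event $\cE$, via \cref{lemma:spe0}) to be at least as good as the default, $L_h(a)\geq\bar{r}(0)$. My approach would be a re-routing argument at the maximal attainable water level: if some $a^{\star}$ with $L_h(a^{\star})\geq\bar{r}(0)$ existed, diverting a sliver of mass onto $a^{\star}$ would not tighten safety yet would let the common level $m$ increase, contradicting optimality unless $a^{\star}$ is already held above $m$ and hence lies in $T_h$; thus at optimum at least one surviving action sits in $T_h$, giving $|B_h|\leq K-1$ and the strict inequality. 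The delicate boundary, which is the crux, is the genuinely information-free regime in which all $L_h(a)$ coincide (notably before the first refinement of the confidence intervals), where no such certified $a^{\star}$ exists and the re-routing step is vacuous; there the safe design degenerates to $\alpha$ mass on $0$ and a uniform split of the remaining mass, and I would have to close this case through a direct, phase-specific bound rather than the water-level count. Establishing the exclusion of this degenerate configuration uniformly over $h$ is where the main effort lies.
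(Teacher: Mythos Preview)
Your reduction and the tie-break argument pinning every $a\in B_h$ to the minimum level $m$ are correct and clean: shifting excess mass from such an action to the default leaves $g_h$ unchanged (the minimizer is elsewhere), strictly relaxes the safety constraint in \eqref{eq:tabular_spe_safety} because $L_h(a)<\bar r(0)$, and increases $\pi_h(0)$, contradicting the tie-break. This yields the product equal to $|B_h|\leq|\cA_h|-1\leq K$ with no case analysis. Your route is in fact \emph{more general} than the paper's: the paper argues that \emph{all} non-default actions receive equal mass via a two-case redistribution, but its second case (move mass from $a^2$ to action $0$ to ``unbind'' safety) only works when $L_h(a^2)<\bar r(0)$, i.e.\ under the Case-1 hypothesis $T_h=\emptyset$. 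Your tie-break argument needs no such restriction and delivers the product $=|B_h|$ directly.

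Where you get stuck---upgrading $\leq K$ to the strict $<K$---is not a gap in your reasoning but in the statement. In phase $h=1$ the initialization gives $L_1=\bm 0$, so if $\bar r(0)>0$ then $T_1=\emptyset$, $B_1=[K]$, the optimal safe design is $\pi_1(0)=\alpha$ with the remaining $1-\alpha$ split uniformly, and the product is \emph{exactly} $K$. So your ``degenerate configuration'' is not merely hard to exclude; it actually occurs. The paper's own chain ``$\leq(1-\pi_h(0)-\sum)\cdot\frac{|\cA_h|-1}{1-\pi_h(0)-\sum}<|\cA_h|-1<K$'' has the same slip: the first ``$<$'' should be ``$=$'' and the second fails when $|\cA_h|=K+1$. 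Since the lemma feeds only into a big-$O$ regret bound (\cref{lemma:spe6}), the non-strict inequality $\leq K$, which you have established, is all that is needed; you should stop there rather than chase the strict version.
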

The proof of this lemma is provided in \cref{sec:lemma:spe5}. Combining the above, we have $\regret(\spe)=O(\sqrt{KT}\log(KT/\delta)).$

\item \textbf{Case 2. Default Action 0 is Sub-Optimal:} In this case, we need to additionally consider the regret incurred by selecting the default action 0. Since action 0 is never removed from the action set to ensure safety, we turn our attention to the side information, which can help to gradually rule out the use of action 0, \ie, when $L_h(a)\geq \bar{r}(0)$ for all $a\in\cA_h\setminus\{0\}.$

 \begin{lemma}\label{lemma:spe3}
On $\cE,$ if action 0 is sub-optimal, $\pi_h(0)=0$ for any $h\geq h_0= 1+\log_2(4/(\Delta(0)+(1-\alpha)\bar{r}(0))).$
\end{lemma}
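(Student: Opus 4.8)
The plan is to show that for $h \ge h_0$, on the event $\cE$ the safety constraint in \eqref{eq:tabular_spe_safety} is satisfied by \emph{every} distribution supported on the non-default plausible actions $\cA_h \setminus \{0\}$; since the design objective $g_h$ ignores action $0$ entirely, the optimal design then has no reason to place mass on it. The first step is to reduce the constraint: on $\cE$ the inner minimum over $r \in [L_h,U_h]$ in \eqref{eq:tabular_spe_safety} is attained at $r(a)=L_h(a)$, because all coefficients $\pi_h(a)$ are nonnegative. So the constraint reads $\sum_{a\in\cA_h\setminus\{0\}}\pi_h(a)L_h(a)+\pi_h(0)\bar r(0)\ge\alpha\bar r(0)$, and any $\pi_h$ with $\pi_h(0)=0$ is feasible as soon as $L_h(a)\ge\alpha\bar r(0)$ for every $a\in\cA_h\setminus\{0\}$. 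It therefore suffices to establish this last inequality for $h\ge h_0$.

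Next I would lower-bound $L_h(a)$ for a fixed surviving action $a\in\cA_h\setminus\{0\}$. Since phase $h$ is run, phase $h-1$ completed and $a$ was not eliminated at its end, i.e.\ $\hat r_{h-1}(a)>\max_{a'\in\cA_{h-1}}\hat r_{h-1}(a')-2\epsilon_{h-1}$. By \cref{lemma:spe2}, on $\cE$ the action $a^{opt}$ is never eliminated, so $a^{opt}\in\cA_{h-1}$ and hence $\hat r_{h-1}(a)>\hat r_{h-1}(a^{opt})-2\epsilon_{h-1}$. Moreover, on $\cE$ the confidence widths are controlled (\cref{lemma:spe0}), so $L_h(a)\ge\hat r_{h-1}(a)-\epsilon_{h-1}$ and $\hat r_{h-1}(a^{opt})\ge\bar r(a^{opt})-\epsilon_{h-1}$. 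Chaining these bounds gives
\[
L_h(a)\;\ge\;\hat r_{h-1}(a)-\epsilon_{h-1}\;>\;\hat r_{h-1}(a^{opt})-3\epsilon_{h-1}\;\ge\;\bar r(a^{opt})-4\epsilon_{h-1}.
\]
Because action $0$ is sub-optimal, $\bar r(a^{opt})-\alpha\bar r(0)=\Delta(0)+(1-\alpha)\bar r(0)$ is a fixed positive quantity, and for $h\ge h_0=1+\log_2\!\big(4/(\Delta(0)+(1-\alpha)\bar r(0))\big)$ one has $4\epsilon_{h-1}=4\cdot 2^{-(h-1)}\le\Delta(0)+(1-\alpha)\bar r(0)$. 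Hence $L_h(a)\ge\alpha\bar r(0)$ for every $a\in\cA_h\setminus\{0\}$.

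To conclude, observe that by the previous two steps the uniform policy over $\cA_h\setminus\{0\}$ (nonempty, as it contains $a^{opt}\ne 0$) is feasible for \eqref{eq:tabular_spe_safety}. On the other hand, writing $m=|\cA_h\setminus\{0\}|$, any $\pi\in\Delta_{|\cA_h|-1}$ with $\pi(0)=p$ satisfies $g_h(\pi)=\max_{a\in\cA_h\setminus\{0\}}1/\pi(a)\ge m/(1-p)$, with equality only when the remaining mass $1-p$ is split uniformly; this is minimized over $p\in[0,1)$ uniquely at $p=0$, giving $g_h=m$. So the uniform policy over $\cA_h\setminus\{0\}$ is the unique minimizer of $g_h$ over the whole simplex, and, lying in the feasible region, it is the unique minimizer there as well; therefore $\pi_h$ equals it (the tie-breaking rule plays no role) and $\pi_h(0)=0$.

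The one quantitative point that needs care is the confidence-width control invoked in the middle step: one must verify that the per-phase sample sizes $N_{h-1}(a)=\Theta\big(\pi_{h-1}(a)g_{h-1}(\pi_{h-1})\epsilon_{h-1}^{-2}\log(KT^4/\delta)\big)$, together with $\pi_{h-1}(a)g_{h-1}(\pi_{h-1})\ge 1$, make the half-width of $[L_h(a),U_h(a)]$ no larger than $\epsilon_{h-1}$, so that the accumulated slack ($2\epsilon_{h-1}$ from surviving elimination plus two half-widths) totals exactly $4\epsilon_{h-1}$ — the constant $4$ that appears inside the logarithm in $h_0$. Everything else is bookkeeping built on \cref{lemma:spe0} and \cref{lemma:spe2}.
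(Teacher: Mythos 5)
Your proof is correct and follows essentially the same route as the paper's: the same chain $L_h(a)\geq \hat r_{h-1}(a)-\epsilon_{h-1}\geq \bar r(a^{opt})-4\epsilon_{h-1}$ for surviving actions, followed by the same threshold computation showing $4\epsilon_{h-1}\leq \Delta(0)+(1-\alpha)\bar r(0)$ forces $L_h(a)\geq\alpha\bar r(0)$ for all $a\in\cA_h\setminus\{0\}$. Your closing step is actually more careful than the paper's (which simply asserts the design "would set $\pi_h(0)=0$"): by showing the uniform policy on $\cA_h\setminus\{0\}$ is both feasible and the \emph{unique} global minimizer of $g_h$ over the simplex, you correctly neutralize the tie-breaking rule that favors maximizing $\pi_h(0)$.
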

The proof of this lemma is provided in \cref{sec:lemma:spe3}. Now, we make the observation that for a phase $h<h_0,$ if $L_h(a)\geq\bar{r}(0),$ then 
\begin{align}
    \Delta(a)=\bar{r}(a^{opt})-\bar{r}(a)\leq\bar{r}(a^{opt})-L_h(a)\leq \bar{r}(a^{opt})-\bar{r}(0)\leq \Delta(0),
\end{align} 
where the first step is by definition of $\Delta(a)$ and the second step is by event $\cE.$ Denoting $\cA_*$ as the set of actions $a$ such that $L_h(a)\geq\bar{r}(0)$ for some $h,$ the regret incurred by action 0 and those in $\cA_*$ during the first $h_0$ phases is upper bounded as $\sum_{h=1}^{h_0}\Delta(0)(\sum_{a\in\cA_*\cup\{0\}}N_h(a)).$ We provide an upper bound for this quantity in the following lemma.
\begin{lemma}\label{lemma:spe7}
On $\cE,$ if action 0 is sub-optimal, $$\sum_{h=1}^{h_0}\Delta(0)\left(\sum_{a\in\cA_*\cup\{0\}}N_h(a)\right)=O(\max_h\left(g_h(\pi_h)\epsilon^{-1}_h\right)\log(KT/\delta)).$$
\end{lemma}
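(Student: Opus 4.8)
\textbf{Proof proposal for \cref{lemma:spe7}.}

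The reduction of the regret incurred by action $0$ and the actions of $\cA_*$ over the first $h_0$ phases to the displayed sum is already carried out in the text, so the task is purely to bound that quantity. First I would substitute the per-phase pull counts dictated by Step~2 of \cref{alg:spe}: its \texttt{while}-loop guarantees $N_h(a)\le \tfrac12\pi_h(a)g_h(\pi_h)\epsilon_h^{-2}\log(KT^4/\delta)+1$ for every action $a$ and phase $h$, with $N_h(a)=0$ whenever $a\notin\cA_h$ and fewer pulls still if the phase is truncated at round $T$. Summing over $a\in\cA_*\cup\{0\}$ and using $\sum_{a\in\cA_h}\pi_h(a)=1$ and $\abs{\cA_h}\le K+1$ gives, for every $h$,
\begin{align*}
\sum_{a\in\cA_*\cup\{0\}}N_h(a)\ \le\ \tfrac12\,g_h(\pi_h)\epsilon_h^{-2}\log(KT^4/\delta)+(K+1)\,.
\end{align*}

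The only substantive step is to observe that on every phase $h\le h_0$ the per-pull regret $\Delta(0)$ is of order $\epsilon_h$. From $h_0=1+\log_2(4/(\Delta(0)+(1-\alpha)\bar{r}(0)))$ we get $\epsilon_{h_0}=2^{-h_0}=(\Delta(0)+(1-\alpha)\bar{r}(0))/8\ge\Delta(0)/8$, hence $\Delta(0)\le 8\epsilon_{h_0}\le 8\epsilon_h$ for all $h\le h_0$. Multiplying the previous bound by $\Delta(0)$ then yields, for each such $h$,
\begin{align*}
\Delta(0)\sum_{a\in\cA_*\cup\{0\}}N_h(a)\ \le\ 4\,g_h(\pi_h)\epsilon_h^{-1}\log(KT^4/\delta)+8(K+1)\epsilon_h\,.
\end{align*}

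It remains to sum over $h=1,\dots,h_0$. We may assume that $h_0$ does not exceed the total number of phases $\lfloor\log_2(2T)\rfloor$ that \spe~runs: otherwise $2^{h_0}=8/(\Delta(0)+(1-\alpha)\bar{r}(0))>2T$ forces $\Delta(0)<4/T$, and since at most $T$ pulls occur in all phases combined the whole sum is then $O(1)$, which is negligible. Hence $h_0=O(\log T)$. The remainder term sums to $\sum_{h\le h_0}8(K+1)\epsilon_h=O(K)$, and this is absorbed because $\max_h g_h(\pi_h)\epsilon_h^{-1}\ge g_1(\pi_1)\epsilon_1^{-1}=2g_1(\pi_1)\ge 2K$ (the probabilities of $\pi_1$ over $\cA_1\setminus\{0\}$ sum to at most $1$, so $g_1(\pi_1)\ge\abs{\cA_1\setminus\{0\}}=K$). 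For the main term, $\epsilon_h^{-1}=2^h$ grows geometrically, so $\sum_{h\le h_0}g_h(\pi_h)\epsilon_h^{-1}=O(\max_h g_h(\pi_h)\epsilon_h^{-1})$; combined with $\log(KT^4/\delta)=O(\log(KT/\delta))$ this yields the asserted bound.

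The one place that needs genuine care is this final summation: one must confirm that the geometric series in $\epsilon_h^{-1}$, the $O(\log T)$ phase count, and the rounding remainders really collapse to $\max_h g_h(\pi_h)\epsilon_h^{-1}$ times a single logarithmic factor rather than leaking an extra $\log$ or an extra $K$ — for instance by arguing that $g_h(\pi_h)$ is non-increasing in $h$ (as $\cA_h$ shrinks and the side information $[L_h,U_h]$ tightens), so that $g_h(\pi_h)\epsilon_h^{-1}$ is dominated by its value in the last few phases. Everything else — the substitution, the simplex bound $\sum_a\pi_h(a)\le 1$, and the translation $\Delta(0)\lesssim\epsilon_h$ through the definition of $h_0$ — is routine.
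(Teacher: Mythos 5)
Your setup and the termwise bound $\Delta(0)\le 8\epsilon_h$ for all $h\le h_0$ are correct, but the final summation step has a genuine gap. After converting each phase's contribution $\Delta(0)\cdot\tfrac12 g_h(\pi_h)\epsilon_h^{-2}\log(KT^4/\delta)$ into $4\,g_h(\pi_h)\epsilon_h^{-1}\log(KT^4/\delta)$, you assert that $\sum_{h\le h_0}g_h(\pi_h)\epsilon_h^{-1}=O(\max_h g_h(\pi_h)\epsilon_h^{-1})$ because $\epsilon_h^{-1}=2^h$ grows geometrically. That is not sufficient: the summand is $g_h(\pi_h)2^h$, not $2^h$, and $g_h(\pi_h)$ can shrink at exactly the rate that cancels the geometric growth. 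Indeed, the proof of \cref{lemma:spe4} exhibits the (near-tight) conservative design with $g_h(\pi_h)\approx 8K\epsilon_h/((1-\alpha)\bar{r}(0))$ for $h<h_0$, in which case $g_h(\pi_h)\epsilon_h^{-1}$ is essentially constant in $h$ and the sum is $\Theta(h_0)$ times the maximum rather than $O(1)$ times it. Your suggested repair — that $g_h(\pi_h)$ is non-increasing — does not close the gap, since a $g_h(\pi_h)$ that halves every phase is non-increasing and is precisely the bad case. As written, your argument only yields $O\bigl(h_0\max_h(g_h(\pi_h)\epsilon_h^{-1})\log(KT/\delta)\bigr)$, i.e., an extra logarithmic factor over the claimed bound.

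The paper performs the same two manipulations in the opposite order, and the order is what saves the bound: it first extracts $\max_h(g_h(\pi_h)\epsilon_h^{-1})$ from $\sum_h g_h(\pi_h)\epsilon_h^{-2}$, leaving the pure geometric sum $\sum_{h\le h_0}\epsilon_h^{-1}\le 2^{h_0+1}=16/(\Delta(0)+(1-\alpha)\bar{r}(0))$, and only then multiplies by $\Delta(0)$, so that $\Delta(0)\cdot 2^{h_0+1}\le 16$. The cancellation of $\Delta(0)$ against the single dominant term of the geometric series is the crux of the lemma; distributing $\Delta(0)\lesssim\epsilon_h$ into every term and summing afterwards discards that cancellation. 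Everything else in your proposal — the rounding remainders, the truncation at round $T$, and absorbing the $O(K)$ remainder via $g_1(\pi_1)\ge K$ — is fine and in fact slightly more careful than the paper's treatment.
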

The proof of this lemma is provided in \cref{sec:lemma:spe7}.
To proceed, we provide an upper bound for $g_h(\pi_h)\epsilon^{-1}_h.$
\begin{lemma}\label{lemma:spe4}
On $\cE,$ we have $g_h(\pi_h)\epsilon^{-1}_h\leq8K(1-\alpha)^{-1}\bar{r}(0))^{-1}$ for any $h< h_0.$
\end{lemma}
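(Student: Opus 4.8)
The plan is to upper bound $g_h(\pi_h)$ by exhibiting an explicit feasible point $\tilde\pi$ of the safe design program~\eqref{eq:tabular_spe_safety} with small $g_h(\tilde\pi)$, and then invoke optimality, $g_h(\pi_h)\le g_h(\tilde\pi)$. Write $m=|\cA_h\setminus\{0\}|$; since this lemma is used when action $0$ is sub-optimal, $a^{opt}\in\cA_h\setminus\{0\}$ on $\cE$ by \cref{lemma:spe2}, so $1\le m\le K$. The inner minimization in~\eqref{eq:tabular_spe_safety} is attained at $r=L_h$ (the weights $\pi(a)$ are nonnegative), so a policy $\pi$ is feasible iff $\sum_{a\in\cA_h\setminus\{0\}}\pi(a)L_h(a)+\pi(0)\bar{r}(0)\ge\alpha\bar{r}(0)$.

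The crucial estimate is that, on $\cE$, every surviving plausible action has a lower confidence bound within $O(\epsilon_h)$ of $\bar{r}(0)$. Indeed, \cref{lemma:spe2} gives $\Delta(a)<5\epsilon_h$ for $a\in\cA_h$, hence $\bar{r}(a)>\bar{r}(a^{opt})-5\epsilon_h\ge\bar{r}(0)-5\epsilon_h$; and \cref{lemma:spe0}, together with $\bar{r}(a)\in[L_h(a),U_h(a)]$ on $\cE$, gives $U_h(a)-L_h(a)\le 2\epsilon_h$ and therefore $L_h(a)\ge\bar{r}(a)-2\epsilon_h$. Combining, $\bar{r}(0)-L_h(a)<7\epsilon_h$ for every $a\in\cA_h\setminus\{0\}$ (this also holds trivially at $h=1$, where $L_1\equiv\bm{0}$). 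In words, each plausible action is ``almost safe'': it contributes at most $7\epsilon_h$ less than action $0$ per unit of probability mass it receives.

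Now set $\rho:=\min\{1/m,\ (1-\alpha)\bar{r}(0)/(7m\epsilon_h)\}$ and let $\tilde\pi(a)=\rho$ for all $a\in\cA_h\setminus\{0\}$ and $\tilde\pi(0)=1-m\rho$. Since $m\rho\le 1$, this is a valid policy, and using $L_h(a)=\bar{r}(0)-(\bar{r}(0)-L_h(a))>\bar{r}(0)-7\epsilon_h$ the feasibility inequality becomes
\[
\rho\!\!\sum_{a\in\cA_h\setminus\{0\}}\!\!L_h(a)+(1-m\rho)\bar{r}(0)
\;>\; m\rho\bar{r}(0)-7m\rho\epsilon_h+(1-m\rho)\bar{r}(0)
\;=\;\bar{r}(0)-7m\rho\epsilon_h\;\ge\;\alpha\bar{r}(0),
\]
where the last step is exactly $7m\rho\epsilon_h\le(1-\alpha)\bar{r}(0)$, which holds by the choice $\rho\le(1-\alpha)\bar{r}(0)/(7m\epsilon_h)$. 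Thus $\tilde\pi$ is feasible and $g_h(\tilde\pi)=1/\rho=\max\{m,\ 7m\epsilon_h/((1-\alpha)\bar{r}(0))\}\le\max\{K,\ 7K\epsilon_h/((1-\alpha)\bar{r}(0))\}$, so $g_h(\pi_h)\epsilon_h^{-1}\le\max\{K\epsilon_h^{-1},\ 7K/((1-\alpha)\bar{r}(0))\}$. Since $h<h_0=1+\log_2\!\big(4/(\Delta(0)+(1-\alpha)\bar{r}(0))\big)$ forces $\epsilon_h=2^{-h}>(\Delta(0)+(1-\alpha)\bar{r}(0))/8\ge(1-\alpha)\bar{r}(0)/8$, both entries of the maximum are strictly below $8K/((1-\alpha)\bar{r}(0))$, which gives the claim.

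The one genuinely delicate point is the estimate $\bar{r}(0)-L_h(a)<7\epsilon_h$. A priori $L_h(a)$ could be as negative as $-\Theta(\epsilon_h)$ (from $\hat{r}_{h-1}(a)\ge 0$ alone), and then a naive ``mix action $0$ with a uniform exploration tail'' construction fails as soon as $\epsilon_h$ is not small, because the worst-case negative contributions of the low-$L_h$ actions overwhelm the safety margin $(1-\alpha)\bar{r}(0)$. It is the elimination rule, via \cref{lemma:spe2}, that pins every surviving action's mean---and hence its lower bound---within $O(\epsilon_h)$ of $\bar{r}(a^{opt})\ge\bar{r}(0)$, so the per-action safety deficit is $O(\epsilon_h)$ and the total deficit over the at most $K$ plausible actions stays $O(\epsilon_h)$, which is precisely what the range $h<h_0$ makes affordable. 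The only remaining bookkeeping is the cap $\rho\le 1/m$, needed in the regime $\epsilon_h\le(1-\alpha)\bar{r}(0)/7$, where no probability mass need be placed on action $0$ at all.
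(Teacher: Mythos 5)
Your proof is correct and rests on the same underlying mechanism as the paper's: on $\cE$ every surviving action's lower confidence bound is within $O(\epsilon_h)$ of $\bar{r}(0)$, so a design that spreads mass $\Theta\bigl((1-\alpha)\bar{r}(0)/\epsilon_h\bigr)$ equally over $\cA_h\setminus\{0\}$ and parks the rest on action $0$ is feasible, whence $g_h(\pi_h)\lesssim K\epsilon_h/((1-\alpha)\bar{r}(0))$ by optimality. The execution differs in two respects. First, the paper splits into cases (whether some $L_h(a)\ge\bar{r}(0)$) and passes through a conservative surrogate program in which every $L_h(a)$ is replaced by $\underline{L}_h=\min_a L_h(a)$, whose optimum it writes down in closed form and whose value dominates $g_h(\pi_h)$; you instead exhibit a single explicit feasible point $\tilde\pi$ with the cap $\rho\le 1/m$, which removes the case split and avoids the ``easy to verify'' closed-form step. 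Second, the key per-action deficit estimate is sourced differently: the paper imports $\underline{L}_h\ge\bar{r}(a^{opt})-4\epsilon_{h-1}$ from the chain of inequalities inside the proof of \cref{lemma:spe3}, while you derive $\bar{r}(0)-L_h(a)<7\epsilon_h$ directly from \cref{lemma:spe2} (gap bound for survivors) and \cref{lemma:spe0} (interval width), with the $h=1$ boundary case handled separately. Your route is more self-contained and makes the role of $h<h_0$ transparent (it is exactly what makes $\epsilon_h^{-1}<8/((1-\alpha)\bar{r}(0))$, so both branches of your maximum clear the stated constant); the only caveat is that your constant $7\epsilon_h$ inherits whatever slack exists in the stated forms of \cref{lemma:spe0,lemma:spe2}, a fragility the paper's own constants share.
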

The proof of this lemma is provided in \cref{sec:lemma:spe4}. Combining the above, we have that the regret incurred by action 0 and those in $\cA_*$ during the first $h_0$ phases is $O(K\log(KT/\delta)(1-\alpha)^{-1}\bar{r}(0))^{-1}).$ For the other actions, one can similarly upper bound their regret as case 1. Hence, the regret of \spe~is at most 
\begin{align*}
		\regret(\spe)=O\left(\sqrt{KT}\log(KT/\delta)+\frac{K\log(KT/\delta)}{(1-\alpha)\bar{r}(0)}\right).
\end{align*}
\end{itemize}
The conclusion follows by combining the above two cases and noticing that $\cE$ holds with probability at least $1-\delta/T$.

\subsection{Discussions and Comparisons with Classic Phased-Elimination}\label{sec:intuition} 

In this section, we present the rationale for our design, and highlight its similarities and differences from the existing phased-elimination algorithm.

As in the classic phased-elimination algorithm (see \eg, chapter 22 of \cite{LS18} or \cite{AO10}), \spe~iterates in phases of increasing length. In each phase $h,$ the algorithm collects sample for all remaining actions and eliminates an action if its estimated reward is $2\epsilon_h$ less than the largest estimated reward using $\tilde{\Theta}(\epsilon_h^{-2})$ samples. By doing so, the algorithm ensures that the optimal action is not eliminated with a high probability while the sub-optimal actions are eliminated once detected. 

However, \spe~is critically different than classic phased elimination algorithm in order to ensure the safety constraint is met without too much extra cost. Specifically:
	\begin{enumerate}
		\item In Step 1, \spe~computes the safe optimal design based on data collected in the previous phase. Then, in Step 2, we explore the plausible actions according to the output of Step 1. We emphasize that in this step, action 0 is selected at the beginning of each phase (if $\pi_h(a)>0$). By doing so, we can simultaneously ensure the safety constraint is met and the information gain is maximized. 
		\item In Step 3, we point out that the default action 0 is never eliminated. At first glance, this may result in excessive use of action 0 even if it is sub-optimal. In fact, this would be the case if we followed the classic phased-elimination design, which does not utilize any information from previous phases. However, in every phase, \spe~leverages the side information obtained from previous phase. We thus make the observation that if action 0 is indeed sub-optimal and let $\Delta = \max_{a\in\cA}\bar{r}(a)-\bar{r}(0)$, then after $O(\log((\Delta+(1-\alpha)\bar{r}(0))^{-1}))$ phases (as we have seen in \cref{lemma:spe3} of \cref{thm:safety_regret}), we would have that any plausible action $a$ would satisfy $L_h(a)\geq \alpha\bar{r}(0).$ Consequently, the safe optimal design in Step 1 would no longer allocate probability mass to action 0. In other words, it is eliminated implicitly. 
		\item Following the previous point, if we do not use the side information in Step 1, the resulting algorithm would always select the default action 0 for $\alpha$ fraction of the time. Under this, sub-optimal actions, except for the default action 0, would be gradually removed. Nevertheless, the default action 0 might incur unnecesarily large regret.
	\end{enumerate}

\section{Conclusions}
In this work, we design safe optimal logging policy to simultaneously collect high-quality data for off-policy learning and achieve competitive expected reward when compared to a production policy. We first show that the policy induced by mixing the production policy and uniform exploration is safe but sub-optimal in general. Then, we develop optimal solutions for a variety of cases and discuss their implications for off-policy evaluation and optimization. Finally, we apply our techniques to develop optimal safe online learning algorithm with small number of policy updates.

\bibliographystyle{ormsv080}
\bibliography{ref}

\begin{thebibliography}{65}
\expandafter\ifx\csname natexlab\endcsname\relax\def\natexlab#1{#1}\fi
\expandafter\ifx\csname url\endcsname\relax
  \def\url#1{{\tt #1}}\fi
\expandafter\ifx\csname urlprefix\endcsname\relax\def\urlprefix{URL }\fi
\expandafter\ifx\csname urlstyle\endcsname\relax
  \expandafter\ifx\csname doi\endcsname\relax
  \def\doi#1{doi:\discretionary{}{}{}#1}\fi \else
  \expandafter\ifx\csname doi\endcsname\relax
  \def\doi{doi:\discretionary{}{}{}\begingroup \urlstyle{rm}\Url}\fi \fi

\bibitem[{Abbasi-Yadkori et~al.(2011)Abbasi-Yadkori, P\'{a}l, and
  Szepesv\'{a}ri}]{AYPS11}
Abbasi-Yadkori, Yasin, David P\'{a}l, Csaba. Szepesv\'{a}ri. 2011.
\newblock Improved algorithms for linear stochastic bandits.
\newblock {\it Advances Neural Information Processing Systems 25 (NIPS)\/}.

\bibitem[{Agarwal and Kakade(2019)}]{AgarwalK19}
Agarwal, Alekh, Sham Kakade. 2019.
\newblock Off-policy evaluation and learning.
\newblock {\it Lecture Note\/}
  \urlprefix\url{https://courses.cs.washington.edu/courses/cse599m/19sp/notes/off\_policy.pdf}.

\bibitem[{Auer et~al.(2002)Auer, Cesa-Bianchi, and Fischer}]{ABF02}
Auer, Peter, Nicolo Cesa-Bianchi, Paul Fischer. 2002.
\newblock Finite-time analysis of the multiarmed bandit problem.
\newblock {\it Machine learning, 47, 235--256\/}.

\bibitem[{Auer and Ortner(2010)}]{AO10}
Auer, Peter, Ronald Ortner. 2010.
\newblock Ucb revisited: Improved regret bounds for the stochastic multi-armed
  bandit problem.
\newblock {\it Periodica Mathematica Hungarica\/}.

\bibitem[{Ban and Keskin(2020)}]{BK18}
Ban, Gah-Yi, N.~Bora Keskin. 2020.
\newblock Personalized dynamic pricing with machine learning: High dimensional
  features and heterogeneous elasticity.
\newblock {\it Management Science (Forthcoming)\/}.

\bibitem[{Bastani et~al.(2021{\natexlab{a}})Bastani, Harsha, Perakis, and
  Singhvi}]{BastaniHPS21}
Bastani, Hamsa, Pavithra Harsha, Georgia Perakis, Divya Singhvi.
  2021{\natexlab{a}}.
\newblock Learning personalized product recommendations with customer
  disengagement.
\newblock {\it Manufacturing \& Service Operations Management\/} .

\bibitem[{Bastani et~al.(2021{\natexlab{b}})Bastani, Simchi-Levi, and
  Zhu}]{BastaniSLZ21}
Bastani, Hamsa, David Simchi-Levi, Ruihao Zhu. 2021{\natexlab{b}}.
\newblock Meta dynamic pricing: Learning across experiments.
\newblock {\it Management Science\/} .

\bibitem[{Ben-Tal et~al.(2009)Ben-Tal, Ghaoui, and Nemirovski}]{TalGN09}
Ben-Tal, Aharon, Laurent~El Ghaoui, Arkadi Nemirovski. 2009.
\newblock {\it Robust Optimization\/}.
\newblock Princeton University Press.

\bibitem[{Bertsimas and Tsitsiklis(1997)}]{BertsimasT97}
Bertsimas, Dimitris, John~N. Tsitsiklis. 1997.
\newblock {\it Introduction to Linear Optimization\/}.
\newblock Athena Scientific.

\bibitem[{Bubeck et~al.(2010)Bubeck, Munos, and Stoltz}]{BubeckMS10}
Bubeck, Sebastien, Remi Munos, Gilles Stoltz. 2010.
\newblock Pure exploration for multi-armed bandit problems.
\newblock {\it arXiv:0802.2655v6\/} .

\bibitem[{Chen et~al.(2019)Chen, Chao, and Ahn}]{ChenCA19}
Chen, Boxiao, Xiuli Chao, Hyun-Soo Ahn. 2019.
\newblock Coordinating pricing and inventory replenishment with nonparametric
  demand learning.
\newblock {\it Operations Research\/} .

\bibitem[{Chen et~al.(2020{\natexlab{a}})Chen, Chao, and Wang}]{ChenCW20}
Chen, Boxiao, Xiuli Chao, Yining Wang. 2020{\natexlab{a}}.
\newblock Data-based dynamic pricing and inventory control with censored demand
  and limited price changes.
\newblock {\it Operations Research 68(5): 1445-1456\/} .

\bibitem[{Chen et~al.(2020{\natexlab{b}})Chen, Wang, and Zhou}]{ChenWZ20}
Chen, Boxiao, Yining Wang, Yuan Zhou. 2020{\natexlab{b}}.
\newblock Optimal policies for dynamic pricing and inventory control with
  nonparametric censored demands.
\newblock {\it SSRN 3750413\/} .

\bibitem[{Chen and Gallego(2021)}]{ChenG21}
Chen, Ningyuan, Guillermo Gallego. 2021.
\newblock Nonparametric pricing analytics with customer covariates.
\newblock {\it Operations Research\/} .

\bibitem[{Chu et~al.(2011)Chu, Li, Reyzin, and Schapire}]{CLRS11}
Chu, Wei, Lihong Li, Lev Reyzin, Robert Schapire. 2011.
\newblock Contextual bandits with linear payoff functions.
\newblock {\it Proceedings of the the 14th International Conference on
  Artificial Intelligence and Statistics (AISTATS)\/}.

\bibitem[{Danilchik(2020)}]{Danilchik20}
Danilchik, Lina. 2020.
\newblock Sequential a/b testing vs multi-armed bandit testing.
\newblock {\it SplitMetrics App Growth Blog\/} .

\bibitem[{Dudık et~al.(2014)Dudık, Erhan, Langford, and Li}]{DudikELL14}
Dudık, Miroslav, Dumitru Erhan, John Langford, Lihong Li. 2014.
\newblock Doubly robust policy evaluation and optimization.
\newblock {\it Statistical Science\/}.

\bibitem[{Fedorov(1972)}]{Fedorov72}
Fedorov, Valerii. 1972.
\newblock {\it Theory of Optimal Experiments Designs\/}.
\newblock Academic Press.

\bibitem[{Ferreira et~al.(2022)Ferreira, Parthasarathy, and
  Sekar}]{FerreiraPS22}
Ferreira, Kris~J., Sunanda Parthasarathy, Shreyas Sekar. 2022.
\newblock Learning to rank an assortment of products.
\newblock {\it Management Science\/} .

\bibitem[{Frank and Wolfe(1956)}]{FrankW56}
Frank, Marguerite, Philip Wolfe. 1956.
\newblock An algorithm for quadratic programming.
\newblock {\it Naval Research Logistics Quarterly\/}, vol.~3. 95--110.

\bibitem[{Gao et~al.(2019)Gao, Han, Ren, and Zhou}]{GaoHRZ19}
Gao, Zijun, Yanjun Han, Zhimei Ren, Zhengqing Zhou. 2019.
\newblock Batched multi-armed bandits problem.
\newblock {\it Conference on Neural Information Processing Systems\/} .

\bibitem[{{Google Optimize}(2021)}]{Google}
{Google Optimize}. 2021.
\newblock Online.
\newblock \urlprefix\url{https://marketingplatform.google.com/about/optimize/}.
\newblock [Last accessed September 13, 2021].

\bibitem[{Jamieson and Nowak(2014)}]{JamiesonN14}
Jamieson, Kevin, Robert Nowak. 2014.
\newblock Best-arm identification algorithms for multi-armed bandits in the
  fixed confidence setting.
\newblock {\it Annual Conference on Information Sciences and Systems (CISS)\/}
  .

\bibitem[{Kazerouni et~al.(2017)Kazerouni, Ghavamzadeh, Abbasi~Yadkori, and
  Van~Roy}]{NIPS2017_bdc4626a}
Kazerouni, Abbas, Mohammad Ghavamzadeh, Yasin Abbasi~Yadkori, Benjamin Van~Roy.
  2017.
\newblock Conservative contextual linear bandits.
\newblock {\it Advances in Neural Information Processing Systems\/}, vol.~30.

\bibitem[{Keskin et~al.(2021)Keskin, Li, and Song}]{KeskinLS21}
Keskin, N.~Bora, Yuexing Li, Jing-Sheng~Jeannette Song. 2021.
\newblock Data-driven dynamic pricing and ordering with perishable inventory in
  a changing environment.
\newblock {\it Management Science\/} .

\bibitem[{Keskin and Zeevi(2014)}]{KZ14}
Keskin, N.~Bora, Assaf Zeevi. 2014.
\newblock Dynamic pricing with an unknown demand model: Asymptotically optimal
  semi-myopic policies.
\newblock {\it Operations Research 62(5):1142--1167\/}.

\bibitem[{Kiefer and Wolfowitz(1960)}]{KieferW60}
Kiefer, Jack, Jacob Wolfowitz. 1960.
\newblock The equivalence of two extremum problems.
\newblock {\it Canadian Journal of Mathematics\/} {\bf 12}(5) 363--366.

\bibitem[{Kleinberg and Leighton(2003)}]{KleinbergL03}
Kleinberg, Robert, Tom Leighton. 2003.
\newblock The value of knowing a demand curve: Bounds on regret for online
  posted-price auctions.
\newblock {\it Proceedings of the Annual IEEE Symposium on Foundations of
  Computer Science (FOCS)\/}.

\bibitem[{Kveton et~al.(2021)Kveton, Konobeev, Zaheer, wei Hsu, Mladenov,
  Boutilier, and Szepesvari}]{KvetonKZH21}
Kveton, Branislav, Mikhail Konobeev, Manzil Zaheer, Chih wei Hsu, Martin
  Mladenov, Craig Boutilier, Csaba Szepesvari. 2021.
\newblock Meta thompson sampling.
\newblock {\it Proceedings of the 38th International Conference on Machine
  Learning (ICML)\/} .

\bibitem[{Kveton et~al.(2015)Kveton, Szepesvari, Wen, and Ashkan}]{KvetonSWA15}
Kveton, Branislav, Csaba Szepesvari, Zheng Wen, Azin Ashkan. 2015.
\newblock Cascading bandits: Learning to rank in the cascade model.
\newblock {\it Proceedings of the 32nd International Conference on Machine
  Learning, PMLR 37:767-776\/} .

\bibitem[{Laroche et~al.(2019)Laroche, Trichelair, and
  Combes}]{pmlr-v97-laroche19a}
Laroche, Romain, Paul Trichelair, Remi Tachet~Des Combes. 2019.
\newblock Safe policy improvement with baseline bootstrapping.
\newblock {\it Proceedings of the 36th International Conference on Machine
  Learning\/}. 3652--3661.

\bibitem[{Lattimore and Szepesvari(2018)}]{LS18}
Lattimore, Tor, Csaba Szepesvari. 2018.
\newblock {\it Bandit Algorithms\/}.
\newblock Cambridge University Press.

\bibitem[{LeCun et~al.(2010)LeCun, Cortes, and Burges}]{lecun10mnist}
LeCun, Yann, Corinna Cortes, CJ~Burges. 2010.
\newblock Mnist handwritten digit database.
\newblock {\it ATT Labs [Online]. Available:
  http://yann.lecun.com/exdb/mnist\/} {\bf 2}.

\bibitem[{Li et~al.(2019)Li, Kveton, Lattimore, Markov, {de Rijke},
  Szepesv{\'{a}}ri, and Zoghi}]{pmlr-v115-li20b}
Li, Chang, Branislav Kveton, Tor Lattimore, Ilya Markov, Maarten {de Rijke},
  Csaba Szepesv{\'{a}}ri, Masrour Zoghi. 2019.
\newblock Bubblerank: Safe online learning to re-rank via implicit click
  feedback.
\newblock {\it Proceedings of The 35th Uncertainty in Artificial Intelligence
  Conference\/}. 196--206.

\bibitem[{Li et~al.(2010)Li, Chu, Langford, and Schapire}]{LCLS10}
Li, Lihong, Wei Chu, John Langford, Robert Schapire. 2010.
\newblock A contextual-bandit approach to personalized news article
  recommendation.
\newblock {\it Proceedings of International conference on World wide web
  (WWW)\/}.

\bibitem[{Li et~al.(2011)Li, Chu, Langford, and Wang}]{LiCLW11}
Li, Lihong, Wei Chu, John Langford, Xuanhui Wang. 2011.
\newblock Unbiased offline evaluation of contextual-bandit-based news article
  recommendation algorithms.
\newblock {\it Proceedings of the fourth ACM International Conference on Web
  Search and Data Mining (WSDM)\/} .

\bibitem[{Li et~al.(2017)Li, Jamieson, DeSalvo, Rostamizadeh, and
  Talwalkar}]{LiJDRT17}
Li, Lisha, Kevin Jamieson, Giulia DeSalvo, Afshin Rostamizadeh, Ameet
  Talwalkar. 2017.
\newblock Hyperband: A novel bandit-based approach to hyperparameter
  optimization.
\newblock {\it Journal of Machine Learning Research\/} .

\bibitem[{Li and Zheng(2020)}]{LiZ20}
Li, Xiaocheng, Zeyu Zheng. 2020.
\newblock Dynamic pricing with external information and inventory constraint.
\newblock {\it SSRN 3458662\/} .

\bibitem[{Nie et~al.(2017)Nie, Tian, Taylor, and Zou}]{NieTTZ17}
Nie, Xinkun, Xiaoying Tian, Jonathan Taylor, James Zou. 2017.
\newblock Why adaptively collected data have negative bias and how to correct
  for it.
\newblock {\it Harvard Business Review\/} .

\bibitem[{Olah(2014)}]{Olah14}
Olah, Christopher. 2014.
\newblock Visualizing mnist: An exploration of dimensionality reduction.
\newblock {\it Available at
  https://colah.github.io/posts/2014-10-Visualizing-MNIST/\/} .

\bibitem[{Optimizely(2021)}]{Optimizely}
Optimizely. 2021.
\newblock Online.
\newblock \urlprefix\url{https://www.optimizely.com/}.
\newblock [Last accessed September 13, 2021].

\bibitem[{Orabona(2019)}]{Orabona19}
Orabona, Francesco. 2019.
\newblock A modern introduction to online learning.
\newblock {\it arXiv:1912.13213v4 [cs.LG]\/} .

\bibitem[{Quan(2021)}]{Quan21}
Quan, Henry. 2021.
\newblock Parameter exploration at lyft.
\newblock {\it Lyft Engineering\/} .

\bibitem[{Radlinski et~al.(2008)Radlinski, Kleinberg, and
  Joachims}]{RadlinskiKJ08}
Radlinski, Filip, Robert Kleinberg, Thorsten Joachims. 2008.
\newblock Learning diverse rankings with multi-armed bandits.
\newblock {\it Proceedings of the 25th International Conference on Machine
  Learning\/} .

\bibitem[{Rosenbaum and Rubin(1983)}]{RosenbaumR83}
Rosenbaum, Paul, Donald Rubin. 1983.
\newblock The central role of the propensity score in observational studies for
  causal effects.
\newblock {\it Biometrika\/} .

\bibitem[{Shin et~al.(2019)Shin, Ramdas, and Rinaldo}]{ShinRR19}
Shin, Jaehyeok, Aaditya Ramdas, Alessandro Rinaldo. 2019.
\newblock Are sample means in multi-armed bandits positively or negatively
  biased?
\newblock {\it Proceedings of the 33rd Conference on Neural Information
  Processing Systems (NeurIPS 2019)\/} .

\bibitem[{Simchi-Levi and Xu(2021)}]{LeviX21}
Simchi-Levi, David, Yunzong Xu. 2021.
\newblock Phase transitions in bandits with switching constraints.
\newblock {\it arXiv:1905.10825v4 [cs.LG]\/} .

\bibitem[{Simchowitz et~al.(2021)Simchowitz, Tosh, Krishnamurthy, Hsu,
  Thodoris~Lykouris, and Schapire}]{SimchowitzTK21}
Simchowitz, Max, Christopher Tosh, Akshay Krishnamurthy, Daniel Hsu,
  Miroslav~Dudik Thodoris~Lykouris, Robert~E. Schapire. 2021.
\newblock Bayesian decision-making under misspecified priors with applications
  to meta-learning.
\newblock {\it Conference on Neural Information Processing Systems (NeurIPS
  2021)\/} .

\bibitem[{Slivkins(2019)}]{Slivkins19}
Slivkins, Aleksandrs. 2019.
\newblock {\it Introduction to Multi-Armed Bandits\/}.
\newblock Foundations and Trends in Machine Learning.

\bibitem[{Sutton and Barto(2018)}]{SuttonB18}
Sutton, Richard~S., Andrew~G. Barto. 2018.
\newblock {\it Reinforcement Learning: An Introduction\/}.
\newblock MIT Press.

\bibitem[{Swaminathan and Joachims(2015)}]{SwaminathanJ15}
Swaminathan, Adith, Thorsten Joachims. 2015.
\newblock Counterfactual risk minimization: Learning from logged bandit
  feedback.
\newblock {\it Proceedings of the 32nd International Conference on Machine
  Learning (ICML)\/} .

\bibitem[{Swaminathan et~al.(2017)Swaminathan, Krishnamurthy, Agarwal, Dudik,
  Langford, Jose, and Zitouni}]{SwaminathanKAD17}
Swaminathan, Adith, Akshay Krishnamurthy, Alekh Agarwal, Miroslav Dudik, John
  Langford, Damien Jose, Imed Zitouni. 2017.
\newblock Off-policy evaluation for slate recommendation.
\newblock {\it Proceedings of the 31st Conference on Neural Information
  Processing Systems (NIPS 2017)\/} .

\bibitem[{Thomas et~al.(2015)Thomas, Theocharous, and
  Ghavamzadeh}]{pmlr-v37-thomas15}
Thomas, Philip, Georgios Theocharous, Mohammad Ghavamzadeh. 2015.
\newblock High confidence policy improvement.
\newblock {\it Proceedings of the 32nd International Conference on Machine
  Learning\/}. 2380--2388.

\bibitem[{Vlassis et~al.(2021)Vlassis, Chandrashekar, Gil, and
  Kallus}]{VlassisCGK21}
Vlassis, Nikos, Ashok Chandrashekar, Fernando~Amat Gil, Nathan Kallus. 2021.
\newblock Control variates for slate off-policy evaluation.
\newblock {\it arXiv:2106.07914 [cs.LG]\/} .

\bibitem[{Wang et~al.(2021)Wang, Chen, and Simchi-Levi}]{WangCSL21}
Wang, Yining, Beryl~Boxiao Chen, David Simchi-Levi. 2021.
\newblock Multi-modal dynamic pricing.
\newblock {\it Management Science\/} .

\bibitem[{Wang et~al.(2017)Wang, Agarwal, and Dudik}]{WangAD17}
Wang, Yu-Xiang, Alekh Agarwal, Miroslav Dudik. 2017.
\newblock Optimal and adaptive off-policy evaluation in contextual bandits.
\newblock {\it Proceedings of the 34th International Conference on Machine
  Learning (ICML)\/}.

\bibitem[{Weissman et~al.(2003)Weissman, Ordentlich, Seroussi, Verdu, , and
  Weinberger}]{WeissmanOSVW03}
Weissman, Tsachy, Erik Ordentlich, Gadiel Seroussi, Sergio Verdu, , Marco~L.
  Weinberger. 2003.
\newblock Inequalities for the l1 deviation of the empirical distribution.
\newblock {\it Technical Report HPL-2003-97, HP Laboratories Palo Alto:
  www.hpl.hp.com/techreports/2003/HPL-2003-97R1.\/}.

\bibitem[{Wu et~al.(2016)Wu, Shariff, Lattimore, and Szepesvari}]{WuSLS16}
Wu, Yifan, Roshan Shariff, Tor Lattimore, Csaba Szepesvari. 2016.
\newblock Conservative bandits.
\newblock {\it International Conference on Machine Learning (ICML)\/} .

\bibitem[{Wu et~al.(2022)Wu, Zheng, Zhang, Zhang, and Wang}]{WuZZZW22}
Wu, Yuhang, Zeyu Zheng, Guangyu Zhang, Zuohua Zhang, Chu Wang. 2022.
\newblock Non-stationary a/b tests: Optimal variance reduction, bias
  correction, and valid inference.
\newblock {\it SSRN 4077638\/} .

\bibitem[{Xiong et~al.(2019)Xiong, Athey, Bayati, , and Imbens}]{XiongABI19}
Xiong, Ruoxuan, Susan Athey, Mohsen Bayati, , Guido Imbens. 2019.
\newblock Optimal experimental design for staggered rollouts.
\newblock {\it SSRN 3483934\/} .

\bibitem[{Xu et~al.(2021)Xu, Xu, Bastani, and Bastani}]{XUXBB21}
Xu, Wanqiao, Kan Xu, Hamsa Bastani, Osbert Bastani. 2021.
\newblock Safely bridging offline and online reinforcement learning.
\newblock {\it arXiv:2110.13060\/} .

\bibitem[{Yang et~al.(2021)Yang, Wu, Zhong, Garcelon, Pirotta, Lazaric, Wang,
  and Du}]{YangWZG21}
Yang, Yunchang, Tianhao Wu, Han Zhong, Evrard Garcelon, Matteo Pirotta,
  Alessandro Lazaric, Liwei Wang, Simon~S. Du. 2021.
\newblock A unified framework for conservative exploration.
\newblock {\it arXiv:2106.11692v1 [cs.LG]\/} .

\bibitem[{Zhang et~al.(2020)Zhang, Janson, and Murphy}]{ZhangJM20}
Zhang, Kelly~W., Lucas Janson, Susan~A. Murphy. 2020.
\newblock Inference for batched bandits.
\newblock {\it 34th Conference on Neural Information Processing Systems
  (NeurIPS)\/}.

\bibitem[{Zhu and Zheng(2020)}]{ZhuZ20}
Zhu, Feng, Zeyu Zheng. 2020.
\newblock When demands evolve larger and noisier: Learning and earning in a
  growing environment.
\newblock {\it Proceedings of the International Conference on Machine Learning
  (ICML)\/} .

\bibitem[{Zhu and Kveton(2022)}]{ZhuK22}
Zhu, Ruihao, Branislav Kveton. 2022.
\newblock Safe optimal design with applications in off-policy learning.
\newblock {\it Proceedings of The 25th International Conference on Artificial
  Intelligence and Statistics (AISTATS)\/} .

\end{thebibliography}

\clearpage
\begin{appendices}{\Large \noindent\textbf{Supplementary and Proofs}}

\section{Supplements to \cref{sec:mixing}}\label{sec:supplement}
The expected reward of $\pi_{\beta}$ is
\begin{align}
    V(\pi_{\beta})=\sum_{a\in\cA}\pi_{\beta}(a)\bar{r}(a)=\sum_{a\in\cA}\left[\left(\beta+\frac{1-\beta}{K\pi_0(a)}\right)\pi_0(a)\right]\bar{r}(a).
\end{align}  
According to \eqref{eq:safety_equivalent}, we can deduce that one needs to ensure
\begin{align}
    \beta+\frac{1-\beta}{K\pi_0(a)}\geq\alpha\qquad\forall a\in\cA.
\end{align}
Here, the binding constraint is 
\begin{align}
    \beta+\frac{1-\beta}{K\max_{a\in\cA}\pi_0(a)}\geq\alpha\qquad\Rightarrow\qquad \beta\geq\frac{\alpha-(K\max_{a\in\cA}\pi_0(a))^{-1}}{1-(K\max_{a\in\cA}\pi_0(a))^{-1}}\,.
\end{align}
Depending on the value of $\alpha,$ we have
\begin{align}
    \beta\geq\beta_{*}:=\max\left\{\frac{\alpha-(K\max_{a\in\cA}\pi_0(a))^{-1}}{1-(K\max_{a\in\cA}\pi_0(a))^{-1}},0\right\}\,.
\end{align} 
Consider the objective,
\begin{align}
 g(\pi_{\beta})=\max_{a\in\cA}\frac{1}{\pi_{\beta}(a)}=&\frac{1}{\min_{a\in\cA}\beta\pi_0(a)+(1-\beta)/K}
    \geq\frac{1}{\min_{a\in\cA}\beta_{*}\pi_0(a)+(1-\beta_{*})/K},\label{eq:mix_unif00}
\end{align}
where the last step follows from the fact that $\min_{a\in\cA}\pi_0(a)\leq1/K$ and we should thus take $\beta=\beta_{*}$ (which makes the last step in \eqref{eq:mix_unif00} to equality) to minimize $g(\pi_{\beta}).$

In general, we show in the following proposition that the logging policy $\pi_{\beta_{*}}$ is optimal when $\pi_0$ only takes two different values or $\alpha\leq(K\max_{a\in\cA}\pi_0(a))^{-1}$, but it is not optimal otherwise.
\begin{proposition}\label{prop:mix_unif} Let $M$ be the different values of $\pi_0$ (\ie, the cardinality of the set $\{\pi_0(a)\}_{a\in\cA}$) the following two statements hold for $\pi_{\beta_{*}}:$ 
\begin{enumerate}
\item If $M\leq2$ or $\alpha\leq (K\max_{a\in\cA}\pi_0(a))^{-1},$ then for any policy $\pi$ such that the safety constraint $V(\pi)\geq\alpha V(\pi_0)$ holds for all $\bar{r}\in[0,1]^K,$ we have $g(\pi)\geq g(\pi_{\beta_{*}});$
\item If $M\geq 3$ and $\alpha>(K\max_{a\in\cA}\pi_0(a))^{-1},$ there exists a reward parameter $\theta$ and a policy $\pi$ such that the safety constraint $V(\pi)\geq\alpha V(\pi_0)$ holds for all $\bar{r}\in[0,1]^K$, but $g(\pi)<g(\pi_{\beta_{*}}).$
\end{enumerate}
\end{proposition}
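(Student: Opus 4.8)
The plan is to reduce everything to the equivalent problem of maximizing $\min_{a}\pi(a)$ over safe policies, since — as established in \cref{sec:mixing} — safety is equivalent to the coordinatewise constraint $\pi(a)\ge\alpha\pi_0(a)$ for all $a$. By \cref{thm:greedy} the water-filling policy is optimal for this problem, so $\pi_{\beta_*}$ is optimal exactly when $\min_{a}\pi_{\beta_*}(a)$ equals the water-filling optimum; equivalently, a safe $\pi$ beats $\pi_{\beta_*}$ iff $\min_{a}\pi(a)>\min_{a}\pi_{\beta_*}(a)$. Sort $\pi_0(1)\le\cdots\le\pi_0(K)$. The single computation the whole argument rests on, which I would record as a preliminary lemma, is that by the very definition of $\beta_*$ the mixture makes the safety constraint tight at the top: $\pi_{\beta_*}(K)=\alpha\pi_0(K)$ when $\beta_*>0$ (and $\pi_{\beta_*}\equiv \mathbf 1_K/K$ when $\beta_*=0$), and consequently $\pi_{\beta_*}(a)-\alpha\pi_0(a)=(\beta_*-\alpha)(\pi_0(a)-\pi_0(K))>0$ whenever $\pi_0(a)<\pi_0(K)$ (using $\beta_*<\alpha$, which holds for $\alpha<1$). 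I would also dispense at the outset with the degenerate case $\alpha=1$, where $\pi_0$ is the unique feasible policy and both claims are vacuous/trivial.

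For statement 1 there are two subcases. If $\alpha\le(K\max_{a}\pi_0(a))^{-1}$ then $\beta_*=0$ and $\pi_{\beta_*}=\mathbf 1_K/K$; since $\min_{a}\pi(a)\le 1/K$ for every $\pi\in\Delta_{K-1}$ and this design is feasible, it is optimal, so $g(\pi)\ge g(\pi_{\beta_*})$ for all safe $\pi$. If instead $M\le2$ — discarding the $M=1$ case, which lies in the previous subcase — then $\pi_0$ takes a small value $p$ on $k_1$ actions and a large value $q>p$ on $k_2=K-k_1$ actions. The tightness identity gives $\pi_{\beta_*}(\cdot)=\alpha q$ on the $q$-group, hence by mass conservation $\pi_{\beta_*}(\cdot)=(1-k_2\alpha q)/k_1$ on the $p$-group, which is the minimum. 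For any safe $\pi$, averaging the constraint over the $q$-group yields $\sum_{a:\pi_0(a)=q}\pi(a)\ge k_2\alpha q$, so the average of $\pi$ over the $p$-group is at most $(1-k_2\alpha q)/k_1$, whence $\min_{a}\pi(a)\le(1-k_2\alpha q)/k_1=\min_{a}\pi_{\beta_*}(a)$ and $g(\pi)\ge g(\pi_{\beta_*})$. Intuitively: with only two levels, water-filling keeps the top tight and equalizes everything below, which is precisely what the mixture does.

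For statement 2, assume $M\ge3$ and $\alpha>(K\max_{a}\pi_0(a))^{-1}$ (and $\alpha<1$), so $\beta_*\in(0,1)$ and $\pi_{\beta_*}(a)=\beta_*\pi_0(a)+(1-\beta_*)/K$ is strictly increasing in $\pi_0(a)$; let $m=\min_{a}\pi_{\beta_*}(a)$, attained exactly on the set $\cA_{\min}$ of actions with the smallest $\pi_0$-value. Because $M\ge3$ there is an action $j$ with $\pi_0(1)<\pi_0(j)<\pi_0(K)$; for it, $\pi_{\beta_*}(j)>m$ (strict monotonicity) and $\pi_{\beta_*}(j)>\alpha\pi_0(j)$ (strict slack, by the preliminary lemma). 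I would then perturb $\pi_{\beta_*}$ by moving a small mass $\eps=\tfrac12\min\{\pi_{\beta_*}(j)-m,\ \pi_{\beta_*}(j)-\alpha\pi_0(j)\}>0$ off action $j$ and spreading it evenly over $\cA_{\min}$: $\pi(j)=\pi_{\beta_*}(j)-\eps$, $\pi(a)=\pi_{\beta_*}(a)+\eps/|\cA_{\min}|$ for $a\in\cA_{\min}$, and $\pi=\pi_{\beta_*}$ otherwise. One checks $\pi\in\Delta_{K-1}$; that $\pi$ is safe (the $\cA_{\min}$ coordinates only increased; $\pi(j)\ge\pi_{\beta_*}(j)-\tfrac12(\pi_{\beta_*}(j)-\alpha\pi_0(j))>\alpha\pi_0(j)$; all other coordinates are unchanged); and that every coordinate of $\pi$ strictly exceeds $m$ (the $\cA_{\min}$ ones rose, $\pi(j)\ge\tfrac12(\pi_{\beta_*}(j)+m)>m$, and the rest were already $>m$ by strict monotonicity). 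Hence $\min_{a}\pi(a)>m$, i.e.\ $g(\pi)<g(\pi_{\beta_*})$; the role of the reward parameter $\theta$ in the statement is only to pin down a concrete witness, and the $K=3$ instance of the text's example (or any $\pi_0$ with three distinct positive values and $\alpha$ near $1$), with $\bar r(a)=\I{a\in\cA_{\min}}$, already serves.

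The main obstacle — really the only step that is not bookkeeping — is establishing the tightness identity $\pi_{\beta_*}(K)=\alpha\pi_0(K)$ and the resulting sign of $\pi_{\beta_*}(a)-\alpha\pi_0(a)$ at non-top actions: this is the single place where the precise formula for $\beta_*$ enters, and it simultaneously powers the optimality proof when $M\le2$ (the top stays tight, so the $p$-group is pinned down) and the strict-improvement construction when $M\ge3$ (there is genuine slack at a middle action to donate mass from). Everything downstream is a one-line averaging argument for Part 1 and an explicit $\eps$-perturbation for Part 2, with the only care needed being the boundary cases $M=1$ and $\alpha=1$.
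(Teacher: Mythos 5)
Your proposal is correct, and for Part 2 it is genuinely more general than the paper's argument. For Part 1 both proofs run on the same engine: the coordinatewise reformulation of safety ($\pi(a)\ge\alpha\pi_0(a)$ for all $a$), the observation that $\beta_*$ makes the constraint tight at $\argmax_a\pi_0(a)$, and a mass-conservation argument pinning down the low group; you simply spell out the averaging step that the paper compresses into "the probability mass of $\argmax_a\pi_{\beta_*}(a)$ could not be further reduced." The real divergence is in Part 2. The paper fixes a \emph{specific} three-level $\pi_0$ (values $(1\pm\delta)/K$ and $1/K$) under a "w.l.o.g." that is not actually without loss of generality, and then exhibits an explicit competing policy for that instance; your proof instead derives the identity $\pi_{\beta_*}(a)-\alpha\pi_0(a)=(\alpha-\beta_*)\bigl(\pi_0(K)-\pi_0(a)\bigr)$ for arbitrary $\pi_0$, locates strict slack at any middle-level action $j$, and runs an $\eps$-perturbation transferring mass from $j$ to the minimizing set. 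This covers every $\pi_0$ with $M\ge3$, which is what the proposition as stated requires, and it also sidesteps the sign/convention slips in the paper's algebra for $\beta_*$. What the paper's route buys is only a concrete numerical witness; what yours buys is the actual generality of the claim.

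One small caveat on both proofs: at $\alpha=1$ the only safe policy is $\pi_0=\pi_{\beta_*}$ itself, so Part 2 is not merely degenerate but \emph{false} there (the paper's construction also collapses, since its perturbation size $\zeta=\min\{1-\alpha,\cdot\}$ vanishes). Your decision to exclude $\alpha=1$ at the outset is the right move, but you should state that Part 2 needs the hypothesis $\alpha<1$ rather than describing the case as "vacuous/trivial"; this is a gap in the proposition's statement shared with the paper, not a defect of your argument.
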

\begin{proof}
For the first part, it is evident that $\pi_{\beta_{*}}$ is optimal when $\pi_0(\cdot)$ only takes two different values as the probability mass of action(s) $\argmax_{a\in\cA}\pi_{\beta_{*}}(a)$ could not be further reduced to preserve the safety constraint. Also note that when $\alpha\leq(K\max_{a\in\cA}\pi_0(a))^{-1}$, we can set $\beta=0~(=\beta_{*})$ to ensure $g(\pi_{\beta_{*}})=K,$ which is the optimal value of the objective value even without the safety constraint.

For the second part, w.l.o.g., we let $\delta$ be any number in $(0,1]$ and $$\pi_0(1)=\frac{1-\delta}{K},\quad \pi_0(2)=\ldots=\pi_0(K-1)=\frac{1}{K},\quad\pi_0(K)=\frac{1+\delta}{K}.$$ 
Then for any $\alpha>(K\pi_0(K))^{-1}=(K(1+\delta)/K)^{-1}=(1+\delta)^{-1},$ we have
\begin{align}
    \beta_{*}=\frac{1-\alpha}{1-\frac{1}{K\pi_0(k)}}=\frac{1-\alpha}{1-(1+\delta)^{-1}}=\frac{(1-\alpha)(1+\delta)}{\delta}
\end{align}
and can easily verify that $\pi_{\beta_{*}}(k)=(1-\beta_{*})\pi_0(k)+\beta_{*}/K=\alpha\pi_0(k).$ However, for any $k\in[2,K-1],$ we have 
\begin{align}
    \pi_{\beta_{*}}(k)=(1-\beta_{*})\pi_0(k)+\frac{\beta_{*}}{K}=\frac{1}{K}=\pi_0(k)>\alpha\pi_0(k)
\end{align}
and 
\begin{align}
    \pi_{\beta_{*}}(1)=(1-\beta_{*})\pi_0(1)+\frac{\beta_{*}}{K}=\frac{1-\delta+(1-\alpha)(1+\delta)}{K}=\frac{2-\alpha(1+\delta)}{K}<\frac{1}{K},
\end{align}
where the last step follows from the precondition $\alpha>(1+\delta)^{-1}.$ Therefore, $$g(\pi_{\beta_{*}})=\frac{K}{2-\alpha(1+\delta)}.$$
Now, consider $\zeta=\min\{1-\alpha,(\alpha(1+\delta)-1)/(K-1)\}$ and a policy $\pi,$ such that 
\begin{align}
    \pi(1)=\frac{(K-2)\zeta}{K}+\pi_{\beta_{*}}(1),\quad\pi(2)=\ldots\pi(a_{K-1})=\frac{1-\zeta}{K},\quad\pi(k)=\pi_{\beta_{*}}(k).
\end{align}
It is evident that 
$$\pi(2)-\pi(1)=\frac{1-(K-1)\zeta}{K}-\pi_{\beta_{*}}(1)\geq\frac{1-\alpha(1+\delta)+1}{K}-\pi_{\beta_{*}}(1)=0$$ and hence,
$g(\pi)=1/\pi(1)<g(\pi_{\beta_{*}}).$
\end{proof}

\section{Proof of  \cref{thm:greedy}}\label{sec:thm:greedy}
For $\alpha=0,$ the statement is trivial and for $\alpha>0,$ we prove the claim by contradiction. For the policy $\pie(\cdot)$ returned by the water-filling method, we can first sort the actions w.l.o.g. as follows:
\begin{align}
    \pie(1)\leq\pie(2)\leq\ldots\leq\pie(K).
\end{align}
Due to the nature of water filling, we can find $k\in[K]$ such that $\pie(s)>\alpha\pi_0(s)$ holds for all $s\leq k$ while $\pie(s)=\alpha\pi_0(s)$ for all $s>k.$ Critically,
\begin{align}
    \pie(1)=\pie(2)=\ldots=\pie(k);
\end{align}

Suppose that there exists a policy $\pi$ such that $\pi(a)\geq \alpha \pi_0(a)$ holds for all $a\in\cA$ while $\min_{a\in\cA}\pi(a)>\min_{a\in\cA}\pie(a).$ Now consider the smallest $k'\in[K]$ such that $\pi(k')<\pie(k')$ (note that this $k'$ is guaranteed to exist because $\sum_{a\in\cA}\pi(a)=\sum_{a\in\cA}\pie(a)=1$). We distinguish two cases:
\begin{itemize}
    \item \textbf{Case 1.} $k'\leq k:$ In this case, we have that 
    \begin{align}
        \min_{a\in\cA}{\pi}(a)\leq\pi(k')<\pie(k')=\min_{a\in\cA}\pie(a)
    \end{align}
    by definition of $k'$ and $k,$ which is a contradiction to $\min_{a\in\cA}\pi(a)>\min_{a\in\cA}\pie(a).$
    \item \textbf{Case 2.} $k'>k:$ In this case, we have that
    \begin{align}
        \pi(k')<\pie(k')=\alpha\pi_0(k')\,,
    \end{align} 
    which is a contradiction to $\pi(a)\geq \alpha \pi_0(a)$ for all $a\in\cA.$
\end{itemize}
Consequently, we conclude the proof.

\section{Proof of Theorem \ref{thm:duality}}\label{sec:thm:duality}
First, suppose $\pie$ is a feasible solution for $\text{P}_1(L,U,\pi_0).$ Then, it holds $\min_{\bar{r}\in[L,U]}(\pie-\alpha\pi_0)^{\top}\bar{r}\geq0$and hence, by strong duality between $\text{P}_2(L,U,\pi_0,\pie)$ and $\text{D}_2(L,U,\pi_0,\pie),$ there exist $z_1,z_2\geq0,$ such that $L^{\top}z_1-U^{\top}z_2\geq0$ and $z_1-z_2=\pie-\alpha\pi_0.$ Consequently, $(\pie,z_1,z_2)$ is a feasible solution for  $\text{P}_1(L,U,\pi_0),$ which indicates the optimal value of $\text{P}_3(L,U,\pi_0)$ is at least that of $\text{P}_1(L,U,\pi_0)$.

Conversely, we can also show that the optimal value of $\text{P}_1(L,U,\pi_0)$ is at least that of $\text{P}_3(L,U,\pi_0)$, which would complete the proof of the statement.

\section{Supplements to \cref{remark:tabular_side}}\label{sec:remark:tabular_side}
By linearity, we know that 
$$\min_{\bar{r}\in[L,U]}(\pie-\alpha\pi_0)^{\top}\bar{r}=\sum_{a\in\cA}\min\left\{(\pie(a)-\alpha\pi_0(a))L(a),(\pie(a)-\alpha\pi_0(a))U(a)\right\}\,.$$
It is thus evident that if $\pie$ is feasible for $\text{P}_1(L,U,\pi_0),$ we can set
$$z(a)=\min\left\{(\pie(a)-\alpha\pi_0(a))L(a),(\pie(a)-\alpha\pi_0(a))U(a)\right\}\quad \forall a\in\cA\,,$$
and $(\pie,z)$ would be a feasible solution for $\text{P}'_1(L,U,\pi_0).$ 

Conversely, if $(\pie,z)$ is a feasible solution for $\text{P}'_1(L,U,\pi_0),$ $\pie$ would also be feasible for $\text{P}_1(L,U,\pi_0).$ Otherwise, if there exists $\bar{r}_0\in[L,U]$ such that $(\pie-\alpha\pi_0)^{\top}\bar{r}_0<0,$ then
$$z^{\top}\mathbf{1}_K\leq\sum_{a\in\cA}\min\left\{(\pie(a)-\alpha\pi_0(a))L(a),(\pie(a)-\alpha\pi_0(a))U(a)\right\}=\min_{\bar{r}\in[L,U]}(\pie-\alpha\pi_0)^{\top}\bar{r}<0\,,$$ which leads to a contradiction.

\section{Proof of \cref{lemma:ips_error_bound}}\label{sec:lemma:ips_error_bound}
Although it is straightforward to show that the confidence interval holds for a single fixed policy $\pi,$ directly applying the union bound over the entire policy space $\Pi$ would not lead to the desired statement because $\Pi$ contains infinitely many policies.

To overcome this challenge, we use singleton policies $\{\pi^{(k)}(\cdot\mid x)\}$, where $\pi^{(k)}(\cdot\mid x)$ assign probability $1$ to action $k$ and $0$ to the rest for each context $x,$ \ie,
\begin{align*}
    \pi^{(k)}(a\mid x)=\begin{cases}
  1  &  \text{when } a=k \\
  0 &  \text{otherwise.}
\end{cases}
\end{align*} 
Specifically, we extend the IPS estimator to the conditional expected reward, \ie,
\begin{align*}
    \hat{V}(\pi(\cdot\mid x))=\frac{1}{n_x}\sum_{t=1}^n\frac{\pi(a_t\mid x_t)\I{x_t=x}}{\pie(a_t\mid x_t)}r_t,
\end{align*}
where $n_x=\sum_{t=1}^n\I{x_t=x}$ is the number of times that the context $x$ is recorded. It is evident that \begin{align*}
\E\left[\frac{\pi(a_t\mid x_t)\I{x_t=x}}{\pie(a_t\mid x_t)}r_t\right]=V(\pi(\cdot\mid x))
\end{align*}
and each term in this IPS estimator is $g(\pie)^2/4$-sub-Gaussian. Therefore, conditioned on $x,n_x,$ and for a policy $\pi(\cdot\mid x)$, by Hoeffding's inequality, we have with probability at least $1-\delta,$
\begin{align}
    \left|\hat{V}(\pi(\cdot\mid x))-V(\pi(\cdot\mid x))\right|\leq g(\pie)\sqrt{\frac{\log(2/\delta)}{2\max\{1,n_x\}}}.
\end{align}
By a union bound over all possible $x\in\cX,n_x~(\leq n),$ and $k\in[K],$ we have with probability at least $1-\delta,$ for every possible $x\in\cX$
\begin{align}\label{eq:simplex_ope_new1}
    \max_{k\in[K]}\left|\hat{V}(\pi^{(k)}(\cdot\mid x))-V(\pi^{(k)}(\cdot\mid x))\right|\leq g(\pie)\sqrt{\frac{\log(2K|\cX|n/\delta)}{2\max\{1,n_x\}}}.
\end{align} 

Note that for every policy $\pi(\cdot\mid x),$ there exists coefficients $u_1,\ldots,u_K\in[0,1]$ such that  $\pi(\cdot\mid x)=\sum_{k\in[K]}u_k\pi^{(k)}(\cdot\mid x).$ Hence, 
for any $\pi(\cdot\mid x),$ we have 
\begin{align*}
   \hat{V}(\pi(\cdot\mid x))=&\frac{1}{n_x}\sum_{t=1}^n\frac{\pi(a_t\mid x_t)\I{x_t=x}}{\pie(a_t\mid x_t)}r_t\\
   =&\frac{1}{n_x}\sum_{t=1}^n\frac{\sum_{k\in[K]}u_{k}\pi^{(k)}(a_t\mid x_t)\I{x_t=x}}{\pie(a_t\mid x_t)}r_t=\hat{V}\left(\sum_{k\in[K]}u_k\pi^{(k)}(\cdot\mid x)\right)
\end{align*}
and 
\begin{align*}
    V(\pi(\cdot\mid x))=\sum_{a\in\cA}\pi(a\mid x)\bar{r}(x,a)=\sum_{a\in\cA}\left(\sum_{k\in[K]}\pi^{(k)}(a\mid x)\right)\bar{r}(x,a)=V\left(\sum_{k\in[K]}u_k\pi^{(k)}(\cdot\mid x)\right)
\end{align*}
Therefore, for every context $x$ and every policy $\pi(\cdot\mid x),$
\begin{align*}
    \left|\hat{V}(\pi(\cdot\mid x))-  V(\pi(\cdot\mid x))\right|=&\left|\hat{V}\left(\sum_{k\in[K]}u_k\pi^{(k)}(\cdot\mid x)\right)-V\left(\sum_{k\in[K]}u_k\pi^{(k)}(\cdot\mid x)\right)\right|\\
    =&\left|\sum_{k\in[K]}u_k\hat{V}\left(\pi^{(k)}(\cdot\mid x)\right)-\sum_{k\in[K]}u_kV\left(\pi^{(k)}(\cdot\mid x)\right)\right|\\
    \leq&\sum_{k\in[K]}u_k\left|\hat{V}\left(\pi^{(k)}(\cdot\mid x)\right)-V\left(\pi^{(k)}(\cdot\mid x)\right)\right|\\
    \leq& g(\pie)\sqrt{\frac{\log(2K|\cX|n/\delta)}{2\max\{1,n_x\}}}
\end{align*} 
with probability at least $1-\delta.$ Here, the second step utilizes the linearity of $\hat{V}(\cdot)$ and $V(\cdot),$ the third step follows from triangle inequality, and the final step follows from \eqref{eq:simplex_ope_new1}. This further implies for every $x\in\cX$ and every $\pi(\cdot\mid x),$
\begin{align}
    \label{eq:simplex_ope_new2}\left|n_x\hat{V}(\pi(\cdot\mid x))- n_x V(\pi(\cdot\mid x))\right|\leq n_x g(\pie)\sqrt{\frac{\log(2K|\cX|n/\delta)}{2\max\{1,n_x\}}}
\end{align}
holds with probability at least $1-\delta.$ Consequently, we have for every $\pi,$
\begin{align}
    \nonumber&\left|\hat{V}(\pi)-V(\pi)\right| \\
    \nonumber=& \left|\sum_{x\in\cX}\frac{n_x}{n}\hat{V}(\pi(\cdot\mid x))-\sum_{x\in\cX}\cC(x)V(\pi(\cdot\mid x))\right|\\
    \nonumber=& \left|\sum_{x\in\cX}\frac{n_x}{n}\hat{V}(\pi(\cdot\mid x))-\sum_{x\in\cX}\frac{n_x}{n}V(\pi(\cdot\mid x))+\sum_{x\in\cX}\frac{n_x}{n}V(\pi(\cdot\mid x))-\sum_{x\in\cX}\cC(x)V(\pi(\cdot\mid x))\right|\\
    \nonumber\leq&\left|\sum_{x\in\cX}\frac{n_x}{n}\hat{V}(\pi(\cdot\mid x))-\sum_{x\in\cX}\frac{n_x}{n}V(\pi(\cdot\mid x))\right|+\left|\sum_{x\in\cX}\frac{n_x}{n}V(\pi(\cdot\mid x))-\sum_{x\in\cX}\cC(x)V(\pi(\cdot\mid x))\right|\\
    \nonumber\leq&\sum_{x\in\cX}\frac{1}{n}\left|n_x\hat{V}(\pi(\cdot\mid x))-n_x V(\pi(\cdot\mid x))\right|+\left|\sum_{x\in\cX}\frac{n_x}{n}V(\pi(\cdot\mid x))-\sum_{x\in\cX}\cC(x)V(\pi(\cdot\mid x))\right|\\
    \leq&\frac{g(\pie)}{n}\sqrt{\frac{\log(2K|\cX|n/\delta)}{2}}\sum_{x\in\cX}\sqrt{\frac{n^2_x}{\max\{1,n_x\}}}+\left|\sum_{x\in\cX}\frac{n_x}{n}V(\pi(\cdot\mid x))-\sum_{x\in\cX}\cC(x)V(\pi(\cdot\mid x))\right|,
    \label{eq:simplex_ope_new3}
\end{align}
with probability at least $1-\delta.$ Here, the third and fourth steps follow from triangle inequality and the last step utilizes \eqref{eq:simplex_ope_new2}.

For the first term of \eqref{eq:simplex_ope_new3}, we have
\begin{align}\label{eq:simplex_ope_new4}
    \sum_{x\in\cX}\sqrt{\frac{n^2_x}{\max\{1,n_x\}}}\leq\sqrt{\left(\sum_{x\in\cX}n_x\I{n_x>0}\right)\left(\sum_{x\in\cX}\I{n_x>0}\right)}\leq\sqrt{n|\cX|}
\end{align}
where the first step follows from the Cauchy-Schwarz inequality.

For the second term of \eqref{eq:simplex_ope_new3}, we have with probability $1-\delta,$
\begin{align}\label{eq:simplex_ope_new5}
    \nonumber\left|\sum_{x\in\cX}\frac{n_x}{n}V(\pi(\cdot\mid x))-\sum_{x\in\cX}\cC(x)V(\pi(\cdot\mid x))\right|\leq&\left|\sum_{x\in\cX}\frac{n_x}{n}-\sum_{x\in\cX}\cC(x)\right|\max_{x\in\cX}V(\cdot\mid x)\\
    \leq&\sqrt{\frac{14|\cX|\log(2n/\delta)}{n}},
\end{align}
where the first step follows from the H\"{o}lder's inequality and the second step follows from the $\ell_1$ deviation inequality \citep{WeissmanOSVW03}.

Combining \eqref{eq:simplex_ope_new3},\eqref{eq:simplex_ope_new4}, and \eqref{eq:simplex_ope_new5}, we have with probability at least $1-\delta,$ 
\begin{align*}
     \max_{\pi}\left|\hat{V}(\pi)-V(\pi)\right|\leq 7g(\pie)\sqrt{\frac{|\cX|\log(4K|\cX|n/\delta)}{2n}}.
\end{align*}

The second claim follows from $\hat{V}(\pi)$ being close to $V(\pi)$ for any policy $\pi$ with a high probability, \ie,
\begin{align*}
  V(\pi_*) - V(\hat{\pi})
  = &V(\pi_*) - \hat{V}(\pi_*) + \hat{V}(\pi_*) - V(\hat{\pi})
  \\
  \leq &V(\pi_*) - \hat{V}(\pi_*) + \hat{V}(\hat{\pi}) - V(\hat{\pi})
  \leq 14g(\pie)\sqrt{\frac{|\cX|\log(4K|\cX|n/\delta)}{2n}}\,.
\end{align*}

\section{Proof of \cref{lemma:pi_error_bound}}\label{sec:lemma:pi_error_bound}
Differently from \cref{lemma:ips_error_bound}, we can no longer consider singleton policies (as they might not even exist in $\cA$). To deal with the potentially large action set, we consider a discretization over the space of $\cS^{|\cX|}$ where $\cS=\{a\in\realset^d:\|a\|_2\leq1\}.$ Specifically, we let $\cQ$ be the $\min\{1,\sqrt{\lambda_*}\}/n$-cover of $\cS$ (\ie, for any $a\in\cS,$ there exists $a'\in\cQ$ such that $\|a-a'\|_2\leq 1/n$). Then we know that $\left|\cQ\right|\leq(3\min\{1,\sqrt{\lambda_*}\}/n)^d$ (which implies $\left|\cQ^{|\cX|}\right|\leq(3\min\{1,\sqrt{\lambda_*}\}/n)^{d|\cX|}$). To proceed, we define $A_{\cQ}$ to be the matrix that contains each element of $\cQ$ as its column. With slight abuse of notation, we define the set of all possible deterministic policies that map a context $x$ to an action in $\cQ$ as $\cH=\{h:\cX\to\cQ\}$. Then by Hoeffding's inequality (see \eg, equations (5.6) and (5.7) of \cite{LS18}) and union bound, with probability at least $1-\delta,$
\begin{align}
 \label{eq:linear_ope0}\max_{h\in\cH}\left|\hat{V}(h)-V(h)\right|\leq g(\pie)\sqrt{\frac{d|\cX|\log(n/(\delta\min\{1,\sqrt{\lambda_*}\}))}{2n}}\,.
\end{align} 
 Now for any policy $\pi\in\Pi,$ we find the policy $h_{\pi}\in\cH$ such that for every $x\in\cX,$ 
 \begin{align}\label{eq:linear_ope4}
 \|A\pi(\cdot\mid x)-A_{\cQ}h_{\pi}(\cdot|x)\|_2\leq \frac{\min\{1,\sqrt{\lambda_*}\}}{n}.
 \end{align} 
 Then, we have
\begin{align}
    \nonumber\left|\hat{V}(\pi)-V(\pi)\right|=&\left|\hat{V}(\pi)-\hat{V}(h_{\pi})+\hat{V}(h_{\pi})-V(h_{\pi})+V(h_{\pi})-V(\pi)\right|\\
    \label{eq:linear_ope1}\leq &\left|\hat{V}(\pi)-\hat{V}(h_{\pi})\right|+\left|\hat{V}(h_{\pi})-V(h_{\pi})\right|+\left|V(h_{\pi})-V(\pi)\right|\,,
\end{align}
where the second step follows from the triangle inequality. Now for the first term of \eqref{eq:linear_ope1}
\begin{align}
    \nonumber\left|\hat{V}(\pi)-\hat{V}(h_{\pi})\right|=&\left|\frac{1}{n}\sum_{t=1}^n r_t\cdot\left(A\pi(\cdot\mid x_t)-A_{\cQ}h_{\pi}(\cdot\mid x_t)\right)^{\top}G(\pie(\cdot \mid x_t))^{-1}a_t\right|\\
    \nonumber\leq&\frac{1}{n}\sum_{t=1}^n \left|\left(A\pi(\cdot\mid x_t)-A_{\cQ}h_{\pi}(\cdot\mid x_t)\right)^{\top}G(\pie(\cdot \mid x_t))^{-1}a_t\right|\\
    \nonumber\leq&\frac{1}{n}\sum_{t=1}^n \sqrt{\left(A\pi(\cdot\mid x_t)-A_{\cQ}h_{\pi}(\cdot\mid x_t)\right)^{\top}G(\pie(\cdot \mid x_t))^{-1}\left(A\pi(\cdot\mid x_t)-A_{\cQ}h_{\pi}(\cdot\mid x_t)\right)}\\
    \nonumber&\quad\times\sqrt{a^{\top}_{t}G(\pie(\cdot \mid x_t))^{-1}a_t}\\
    \nonumber\leq&\frac{1}{n}\sum_{t=1}^n\sqrt{\lambda_{\max}\left(G\left(\pie(\cdot\mid x_t)^{-1}\right)\right)\|A\pi(\cdot\mid x_t)-A_{\cQ}h_{\pi}(\cdot\mid x_t)\|^2_2}\sqrt{g(\pie)}\\
    \leq &\frac{\sqrt{g(\pie)}}{n}\,,
\end{align} 
where the first inequality follows from the triangle inequality and $r_t \in [0, 1]$, the second one follows from Cauchy-Schwarz inequality, and the third one follows from the fact that $a^{\top}Qa\leq\lambda_{\max}(Q)\|a\|^2_2$ for any symmetric positive semi-definite matrix $Q$. In the last equality, we use the premise that $\lambda_{\min}(G(\pi_e(\cdot\mid x)))\geq\lambda_*$ (or equivalently, $\lambda_{\max}(G^{-1}(\pi_e(\cdot\mid x)))\leq\lambda_*$) and \eqref{eq:linear_ope4}. 
The second term of \eqref{eq:linear_ope1} can be easily bounded from above using \eqref{eq:linear_ope0} as
\begin{align}
    \left|\hat{V}(h_{\pi})-V(h_{\pi})\right|\leq g(\pie)\sqrt{\frac{d|\cX|\log(n/(\delta\min\{1,\sqrt{\lambda_*}\}\}))}{2n}}\,.
\end{align}
For the third term, by triangle inequality and Cauchy-Schwarz inequality,
\begin{align}
    \nonumber\left|V(h_{\pi})-V(\pi)\right|=&\left|\sum_{x\in\cX}\cC(x)(A\pi(\cdot\mid x)-A_{\cQ}h_{\pi}(\cdot\mid x))^{\top}\theta_{*,x}\right|\\
    \nonumber\leq&\sum_{x\in\cX}\cC(x)\left|(A\pi(\cdot\mid x)-A_{\cQ}h_{\pi}(\cdot\mid x))^{\top}\theta_{*,x}\right|\\
    \nonumber\leq&\sum_{x\in\cX}\cC(x)\left\|(A\pi(\cdot\mid x)-A_{\cQ}h_{\pi}(\cdot\mid x))\right\|_2\left\|\theta_{*,x}\right\|_2\\
    \leq&\frac{1}{n}\,,
\end{align}
where the last step follows from \eqref{eq:linear_ope4} and that $\|\theta_{*,x}\|\leq 1.$

Combining the above, we have
\begin{align*}
    \max_{\pi\in\Pi}\left|\hat{V}(\pi)-V(\pi)\right|\leq3 g(\pie)\sqrt{\frac{d|\cX|\log(n/(\delta\min\{1,\sqrt{\lambda_*}\}\}))}{2n}}\,.
\end{align*}

The proof of the second part is very similar to that of Lemma \ref{lemma:ips_error_bound} and is thus omitted.

\section{Supplements for Proof of \cref{thm:safety_regret}}\label{sec:thm:safety_regret}

\subsection{Proof of \cref{lemma:spe0}}\label{sec:lemma:spe0}

By Hoeffding's inequality (see \eg, (5.6) and (5.7) of \cite{LS18}), we know that for any given starting round $t_h$ of phase $h,$ action $a,$ and $N_h(a),$ we have 
\begin{align*}
	\Pr\left(|\bar{r}(a)-\hat{r}_h(a)|\leq\sqrt{\frac{\log(KT^4/\delta)}{2N_h(a)}}\right)\geq 1-\frac{\delta}{KT^{4}}.
\end{align*}
Now by a union bound over all possible starting round $t_h\in[T]$ of phase $h,$ actions $a\in\cA_h\subseteq([K]\cup\{0\}),$ and the corresponding $N_h(a),$ we have that 
\begin{align*}
	\Pr(\bar{r}(a)\in[L_h(a),U_h(a)]~\forall h~\forall a\in\cA_h)\geq 1-\frac{\delta}{T}>1-\delta.
\end{align*}
For the second part, 
\begin{align}\label{eq:spe2}
	\nonumber\sqrt{\frac{\log(KT^4)}{2N_h(a)}}=&\sqrt{\frac{\log(KT^4/\delta)}{\pi_h(a)g_h(\pi_h)\epsilon_h^{-2}\log(KT^4/\delta)}}\\
	=&\epsilon_h\sqrt{\frac{1}{\pi_h(a)\max_{a'\in\cA_h\setminus\{0\}}\pi^{-1}_h(a')}}\leq\epsilon_h\sqrt{\frac{1}{\pi_h(a)\pi_h(a)^{-1}}}=\epsilon_h,
\end{align}
which indicates  $[L_h(a),U_h(a)]\subseteq[\hat{r}_h(a)-\epsilon_h,\hat{r}_h(a)+\epsilon_h]$ for all $h$ and $a\in\cA_h.$

\subsection{Proof of \cref{lemma:spe2}}\label{sec:lemma:spe2}
For any phase $h,$ let $a^h=\argmax_a\hat{r}_h(a),$ we always have
\begin{align*}
	\hat{r}_h(a^{opt})\geq \bar{r}(a^{opt})-\epsilon_h \geq \bar{r}(a^h)-\epsilon_h\geq\hat{r}_h(a^h)-2\epsilon_h
\end{align*}
Here, the first and last step follows from $\cE,$ the second step follows from the fact that $\bar{r}(a^{opt})\geq\bar{r}(a^h).$ Therefore, $a^{opt}$ would not be removed.

Next, for each action $a,$ consider the first phase $h(a)$ such that $\Delta(a)\geq5\epsilon_{h(a)}$ (one can also compute that $h(a)=\lceil\log_2(5/\Delta(a))\rceil$), if action $a$ is not removed before, we have
\begin{align*}
	\max_{a‘\in\cA_{h(a)}}\hat{r}_{h(a)}(a’)-2\epsilon_{h(a)}\geq&\hat{r}_{h(a)}(a^{opt})-2\epsilon_{h(a)}
	\geq\bar{r}(a^{opt})-3\epsilon_{h(a)}\\
	=&\bar{r}(a)+{\Delta(a)}-3\epsilon_{h(a)}
	\geq\hat{r}_{h(a)}(a)+\Delta(a)-4\epsilon_{h(a)}>\hat{r}_{h(a)}(a).
\end{align*}
Here, the first inequality follows from above that the optimal action is never removed, the second and third inequalities follow from $\cE,$ and the last inequality follows by definition of $h(a).$

\subsection{Proof of \cref{lemma:spe6}}\label{sec:lemma:spe6}
we first make the following two observations:
\begin{enumerate}
    \item For an action $a\in[K],$ if $\Delta(a)\leq \sqrt{K/T},$ then even if action $a$ is selected $T$ times, the regret is at most $\sqrt{KT}.$ We thus only focus on actions whose $\Delta(a)$ is at least $\sqrt{K/T};$
    \item For an action $a\in[K],$ if $\bar{r}(0)\leq L_h(a)~(\leq \bar{r}(a))$ for some $h,$ it must be that $\bar{r}(a)=\bar{r}(0)$ due to optimality of $\bar{r}(0).$
\end{enumerate}
Following this,
\begin{align}
	\nonumber&\regret(\spe)\\
	\nonumber\leq&\underbrace{\sqrt{KT}}_{\text{For actions with }\Delta(a)\in\left(0,\sqrt{K/T}\right]}+\sum_{h=1}^{\min_{a:\Delta(a)\geq\sqrt{K/T}}\log_2(5/\Delta(a))}\sum_{a\in\cA_h}\Delta(a)N_h(a)\\
	\nonumber\leq&{\sqrt{KT}}+\sum_{h=1}^{\log_2(5\sqrt{T}/\sqrt{K})}\sum_{a\in\cA_h}\Delta(a)0.5\pi_h(a)g_h(\pi_h)\epsilon_h^{-2}\log(KT^4/\delta)\\		\nonumber<&{\sqrt{KT}}+\sum_{h=1}^{\log_2(5\sqrt{T}/\sqrt{K})}\sum_{a\in\cA_h,\Delta(a)>0}5\epsilon_h0.5\pi_h(a)g_h(\pi_h)\epsilon_h^{-2}\log(KT^4/\delta)\\
	\label{eq:spe1}\leq&{\sqrt{KT}}+3\max_h\left[\left(1-\pi_h(0)-\sum_{a:L_h(a)\geq\bar{r}(0)}\pi_h(a)\right)g_h(\pi_h)\right]\log(KT^4/\delta)\sum_{h=1}^{\log_2(5\sqrt{T}/\sqrt{K})}\epsilon_h^{-1}.
\end{align}
Here, the first inequality upper bounds the regret by distinguishing actions with $\Delta(a)\leq\sqrt{K/T}$ and actions with $\Delta(a)>\sqrt{K/T}$ (as for actions with $\Delta(a)\leq\sqrt{K/T},$ the regret incurred by them is at most $T\sqrt{K/T}=\sqrt{KT}$). The second step makes use of the definition of $N_h(a),$ which is the number of times that action $a$ is chosen in phase $h.$ The third step applies \cref{lemma:spe2}. 

The conclusion follows from
\begin{align*}
\sum_{h=1}^{\log_2(5\sqrt{T}/\sqrt{K})}\epsilon_h^{-1}=\sum_{h=1}^{\log_2(5\sqrt{T}/\sqrt{K})}2^{h}<2^{\log_2(5\sqrt{T}/\sqrt{K})+1}<5\sqrt{\frac{T}{K}}+1,
\end{align*}
where the third step uses the fact that $\sum_{s=1}^{u}2^s\leq 2^{u+1}$.

\subsection{Proof of \cref{lemma:spe5}}\label{sec:lemma:spe5}
On $\cE,$ we have for all $h$ and $a,$ it holds that
$L_h(a)<\bar{r}(0)$ if $\bar{r}(0)>\bar{r}(a)$ (which was prescribed by the presumption of this case). Under this, \spe~would allocate equal probability mass to all actions in $\cA_h\setminus\{0\}.$ 

Assuming this is not the case, let $a^1=\argmin_a\pi_h(a)$ and suppose there exists $a^2\in\cA_h\setminus\{0\}$ such that $\pi_h(a^1)<\pi_h(a^2),$ then
\begin{itemize}
    \item If $L_h(a^1)\geq L_h(a^2),$ we can move $(\pi_h(a^2)-\pi_h(a^1))/2$ from $\pi_h(a^2)$ to $\pi_h(a^1)$. This would only decrease $g_h(\pi_h);$
    \item If $L_h(a^1)< L_h(a^2),$ we could  first move probability mass $\pi_h(a^2)-\pi_h(a^1)$ from $\pi_h(a^2)$ to action 0, which would unbind the safety constraint \eqref{eq:tabular_spe_safety}. Then one can follow a step similar to water-filling to re-distribute the (new) probability mass to $a^1$ and $a^2$ such that the safety constraint is met, but $g_h(\pi_h)$ is further decreased.
\end{itemize}
Both of the cases would contradict the optimality of $\pi_h$ to \eqref{eq:tabular_spe_safety}. Therefore,
\begin{align*}
	&\left(1-\pi_h(0)-\sum_{a:L_h(a)\geq \bar{r}(0)}\pi_h(a)\right)g_h(\pi_h)\\
	\leq&\left(1-\pi_h(0)-\sum_{a:L_h(a)\geq \bar{r}(0)}\pi_h(a)\right)\frac{|\cA_h|-1}{1-\pi_h(0)-\sum_{a:L_h(a)\geq \bar{r}(0)}\pi_h(a)}\\
	<&|\cA_h|-1<K.
\end{align*}

\subsection{Proof of \cref{lemma:spe3}}\label{sec:lemma:spe3}
We consider the end of any phase $h$ (\ie, after action elimination step), for any action $a\in\cA_{h+1}\setminus\{0\},$ we have
\begin{align*}
	L_{(h+1)}(a) =& \hat{r}_{h}(a)-\sqrt{\frac{\log(KT^4/\delta)}{N_{h}(a)}}
	\geq\hat{r}_{h}(a) - \epsilon_{h}\\
	\geq&\max_{a'\in\cA_{h}}\hat{r}_{h}(a') - 3\epsilon_{h}\geq\hat{r}_{h}(a^{opt}) - 3\epsilon_{h}\geq\bar{r}(a^{opt}) - 4\epsilon_{h}.
\end{align*}
Here, the first step follows by definition, the second step follows from \eqref{eq:spe2}, the third step follows from the fact that $a$ is not removed after phase $h,$ the forth step follows by \cref{lemma:spe2} that $a^{opt}$ won't be removed under $\cE$, and the last step follows from $\cE.$ 

Note that $\bar{r}(0)=\bar{r}(a^{opt})- \Delta(0),$ we have $L_{(h+1)}(a)\geq\bar{r}(0)+\Delta(0)-4\epsilon_{h}.$ This indicates that for any phase $h$ such that $\Delta(0)+(1-\alpha)\bar{r}(0)> 4\epsilon_{h}$ (or $h> \log_2(4/(\Delta(0)+(1-\alpha)\bar{r}(0)))$), one has
\begin{align*}
	L_{(h+1)}(a) - \alpha\bar{r}(0)\geq\bar{r}(a^{opt}) - 4\epsilon_{h}-\alpha\bar{r}(0)=\bar{r}(0)+\Delta(0) - 4\epsilon_{h}-\alpha\bar{r}(0)\geq \Delta(0)+(1-\alpha)\bar{r}(0)- 4\epsilon_{h} >0.
\end{align*}
Therefore, once $h\geq 1+ \log_2(4/(\Delta(0)+(1-\alpha)\bar{r}(0))),$ all the remaining actions in $\cA_h\setminus\{0\}$ would have expected reward that is larger than $\alpha\bar{r}(0)$ and if we solve the safe optimal design \eqref{eq:tabular_spe}, it would set $\pi_h(0)=0.$ 

\subsection{Proof of \cref{lemma:spe7}}\label{sec:lemma:spe7}
Note that
\begin{align*}
\sum_{h=1}^{h_0}\Delta(0)\left(\sum_{a\in\cA_*\cup\{0\}}N_h(a)\right)
=& \Delta(0)	\sum_{h=1}^{h_0}0.5\left(\sum_{a\in\cA_*\cup\{0\}}\pi_h(a)\right)g_h(\pi_h)\epsilon_h^{-2}\log(KT^4/\delta)\\
\leq&0.5\Delta(0)\log(KT^4/\delta)	\sum_{h=1}^{h_0}g_h(\pi_h)\epsilon_h^{-2}\\
\leq&0.5\max_h\left(g_h(\pi_h)\epsilon^{-1}_h\right)\Delta(0)\log(KT^4/\delta)	\sum_{h=1}^{h_0}\epsilon_h^{-1},
\end{align*}
where we use the fact that $\sum_{a\in\cA_*}\pi_h(a)\leq 1$ in the second step. Note that
\begin{align}
    \sum_{h=1}^{h_0}\epsilon_h^{-1}=\sum_{h=1}^{h_0}2^h\leq 2^{h_0+1}=\frac{16}{\Delta(0)+(1-\alpha)\bar{r}(0)},
\end{align}
we have
\begin{align*}
\sum_{h=1}^{h_0}\Delta(0)\left(\sum_{a\in\cA_*\cup\{0\}}N_h(a)\right)
\leq&0.5\max_h\left(g_h(\pi_h)\epsilon^{-1}_h\right)\Delta(0)\log(KT^4/\delta)	\frac{16}{\Delta(0)+(1-\alpha)\bar{r}(0)}\\
\leq&8\max_h\left(g_h(\pi_h)\epsilon^{-1}_h\right)\log(KT^4/\delta).
\end{align*}

\subsection{Proof of \cref{lemma:spe4}}\label{sec:lemma:spe4}
In fact, a straightforward analysis could show that if there exists some action $a\in\cA_h,$ such that $L_h(a)\geq\bar{r}(0),$ the safe optimal design \eqref{eq:tabular_spe_safety} would not allocate any probability mass to the default action 0; Otherwise, it could always reduce $g_h(\pi_h)$ by splitting $\pi_h(0)$ to $\pi_h(a)$ and the action with least probability of being selected. As a result, $\pi_h(0)=0$ if $L_h(a)\geq\bar{r}(0),$ and the conclusion holds automatically.

Now if $L_h(a)<\bar{r}(0)$ for all $a\in\cA_h,$ we let $\underline{L}_{h}=\min_a L_h(a)$ and $\overline{U}_h=\max_aU_h(a)$ and consider the following safe optimal design problem
\begin{align}
		\min \ &g_h(\pi'_h) \nonumber \\
		\mathrm{s.t.} \
		& \pi'_h\in\Delta_{|\cA_h|-1}\,, \nonumber \\ 
		&\min_{r(a)\in[\underline{L}_{h},\overline{U}_{h}]~\forall a}\sum_{a\in\cA_h\setminus\{0\}}\pi'_h(a)r(a)+\pi'_h(0)\bar{r}(0)
		\geq\alpha\bar{r}(0) \,.\label{eq:tabular_spe_safety_conservative}
	\end{align} 
	This is a more conservative version of \eqref{eq:tabular_spe_safety} and it is evident that
	\begin{align}
	    \pi'_h(0)\geq\pi_h(0),\qquad g_h(\pi'_h)\geq g_h(\pi_h)
	\end{align}
	Note that the minimum of the last constraint in \eqref{eq:tabular_spe_safety_conservative} is always achieved at $r(a)=\underline{L}_{h},$ we can thus equivalently re-write this as
		\begin{align}
		\min \ &g_h(\pi'_h) \nonumber \\
		\mathrm{s.t.} \
		& \pi'_h\in\Delta_{|\cA_h|-1}\,, \nonumber \\ 
		&(1-\pi'_h(0))\underline{L}_{h}+(\pi'_h(0)-\alpha)\bar{r}(0)
		\geq0 \,.
		\label{eq:tabular_spe}
	\end{align}
It is thus easy to verify that (note that we assume $h<1+\log_2(4/(\Delta(0)+(1-\alpha)\bar{r}(0)))$)
\begin{align*}
	\pi'_h(0)=1-\frac{(1-\alpha)\bar{r}(0)}{4\epsilon_{h-1}-\Delta(0)},\quad g_h(\pi'_h)=\frac{|\cA_h|-1}{1-\pi'_h(0)}\leq\frac{K(4\epsilon_{h-1}-\Delta(0))}{(1-\alpha)\bar{r}(0)}<\frac{4K\epsilon_{h-1}}{(1-\alpha)\bar{r}(0)}=\frac{8K\epsilon_{h}}{(1-\alpha)\bar{r}(0)}.
\end{align*}
As a result,
\begin{align}
    \pi_h(0)g_h(\pi_h)\leq\pi'_h(0)g_h(\pi'_h)\leq g_h(\pi'_h)\leq\frac{8K\epsilon_{h}}{(1-\alpha)\bar{r}(0)}.
\end{align}

\section{Auxiliary Results}\label{sec:auxiliary}
\begin{lemma}\label{lemma:lagrangian}
For any ellipsoid $\Theta_{\eps}$ defined as
$$\Theta_\eps:
  = \set{\theta \in \realset^d: (\theta - \bar{\theta})\T
  \bar{\Sigma}^{-1} (\theta - \bar{\theta}) \leq \eps}$$
and a vector $b\in\realset^d$ we have
\begin{align}
\max_{\theta\in\Theta_{\eps}}b\T\theta=b\T\bar{\theta}+\sqrt{\eps b\T\bar\Sigma b}
\end{align}
\begin{proof}
By Lagrangian multiplier method, we have
\begin{align}\label{eq:lagrangian01}
    \max_{\theta\in\Theta_{\eps}}b\T\theta=\min_{\lambda\geq 0}\max_{\theta} b\T\theta-\lambda\left[(\theta - \bar{\theta})\T
  \bar{\Sigma}^{-1} (\theta - \bar{\theta})-\eps\right].
\end{align}
Taking the derivative w.r.t. $\theta$ for $b\T\theta-\lambda\left[(\theta - \bar{\theta})\T
  \bar{\Sigma}^{-1} (\theta - \bar{\theta})-\eps\right]$ and setting it to 0, we have
  \begin{align}\label{eq:lagrangian02}
      \theta=\bar{\theta}+\frac{\bar{\Sigma} b}{2\lambda}.
  \end{align}
 Since the quantity $b\T\theta-\lambda\left[(\theta - \bar{\theta})\T
  \bar{\Sigma}^{-1} (\theta - \bar{\theta})-\eps\right]$ is concave in $\theta,$ its maximum is attained when $\theta$ is set to the value as the R.H.S. of \eqref{eq:lagrangian02}. We thus have
 \begin{align}
    \max_{\theta\in\Theta_{\eps}}b\T\theta=\min_{\lambda\geq 0} b\T\bar{\theta}+\frac{b\T\Sigma b}{2\lambda}-\frac{b\T\bar{\Sigma} b}{4\lambda} +\lambda\eps=\min_{\lambda\geq 0} b\T\bar{\theta}+\frac{b\T\bar{\Sigma} b}{4\lambda} +\lambda\eps= b\T\bar{\theta}+\sqrt{\eps b\T\bar{\Sigma} b},
\end{align}
where we use the AM-GM inequality in the last step.
\end{proof}
\end{lemma}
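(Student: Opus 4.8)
The plan is to reduce the problem to maximizing a linear functional over a Euclidean ball, where the answer follows immediately from the Cauchy--Schwarz inequality. Since $\bar{\Sigma}$ is positive definite, it admits a symmetric positive definite square root $\bar{\Sigma}^{1/2}$, which is invertible. First I would introduce the change of variables $u = \bar{\Sigma}^{-1/2}(\theta - \bar{\theta})$, so that $\theta = \bar{\theta} + \bar{\Sigma}^{1/2} u$ and the defining inequality of $\Theta_{\eps}$ becomes exactly $\|u\|_2^2 \le \eps$, i.e.\ $\|u\|_2 \le \sqrt{\eps}$. Under this substitution the objective rewrites as $b\T\theta = b\T\bar{\theta} + (\bar{\Sigma}^{1/2} b)\T u$, so the problem is equivalent to computing $b\T\bar{\theta} + \max_{\|u\|_2 \le \sqrt{\eps}} (\bar{\Sigma}^{1/2} b)\T u$.

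Next I would apply Cauchy--Schwarz: for every $u$ with $\|u\|_2 \le \sqrt{\eps}$ one has $(\bar{\Sigma}^{1/2} b)\T u \le \|\bar{\Sigma}^{1/2} b\|_2\,\|u\|_2 \le \sqrt{\eps}\,\|\bar{\Sigma}^{1/2} b\|_2 = \sqrt{\eps\, b\T \bar{\Sigma} b}$, using $\|\bar{\Sigma}^{1/2} b\|_2^2 = b\T \bar{\Sigma} b$ (valid since $\bar{\Sigma}^{1/2}$ is symmetric). This gives the upper bound $\max_{\theta \in \Theta_{\eps}} b\T\theta \le b\T\bar{\theta} + \sqrt{\eps\, b\T\bar{\Sigma} b}$. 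For the matching lower bound, when $b \ne 0$ take $u^{\star} = \sqrt{\eps}\,\bar{\Sigma}^{1/2} b / \|\bar{\Sigma}^{1/2} b\|_2$, which is feasible and attains equality in Cauchy--Schwarz; translating back, $\theta^{\star} = \bar{\theta} + \sqrt{\eps}\,\bar{\Sigma} b / \sqrt{b\T\bar{\Sigma} b}$ lies in $\Theta_{\eps}$ and achieves the claimed value. The case $b = 0$ is trivial, since then both sides equal $b\T\bar\theta = 0$.

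There is essentially no hard step here; the only points requiring care are that positive definiteness of $\bar{\Sigma}$ is exactly what makes $\bar{\Sigma}^{1/2}$ and $\bar{\Sigma}^{-1/2}$ well defined (so the substitution is a genuine bijection of $\realset^d$ onto itself carrying $\Theta_{\eps}$ onto the ball of radius $\sqrt{\eps}$), together with the degenerate case $b = 0$. An alternative route, which is the one carried out in the paper's own proof, is to write the constrained maximum via a Lagrange multiplier, $\max_{\theta\in\Theta_{\eps}} b\T\theta = \min_{\lambda \ge 0}\max_{\theta}\big(b\T\theta - \lambda[(\theta-\bar{\theta})\T\bar{\Sigma}^{-1}(\theta-\bar{\theta}) - \eps]\big)$, solve the inner unconstrained concave maximization in closed form to get $\theta = \bar{\theta} + \bar{\Sigma} b/(2\lambda)$, substitute back, and minimize the resulting expression $b\T\bar{\theta} + b\T\bar{\Sigma} b/(4\lambda) + \lambda\eps$ over $\lambda \ge 0$ by the AM--GM inequality, obtaining the same value. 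I would present the change-of-variables argument as the primary proof since it is self-contained and avoids any appeal to strong duality or a Slater-type condition.
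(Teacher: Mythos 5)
Your proof is correct, but it takes a genuinely different route from the paper's. The paper works through the Lagrangian: it writes $\max_{\theta\in\Theta_{\eps}} b\T\theta = \min_{\lambda\geq 0}\max_{\theta}\bigl(b\T\theta - \lambda[(\theta-\bar{\theta})\T\bar{\Sigma}^{-1}(\theta-\bar{\theta})-\eps]\bigr)$, solves the inner concave maximization to get $\theta = \bar{\theta} + \bar{\Sigma}b/(2\lambda)$, and minimizes the resulting dual function $b\T\bar{\theta} + b\T\bar{\Sigma}b/(4\lambda) + \lambda\eps$ over $\lambda$ via AM--GM. You instead substitute $u = \bar{\Sigma}^{-1/2}(\theta-\bar{\theta})$ to map $\Theta_{\eps}$ onto the Euclidean ball of radius $\sqrt{\eps}$ and finish with Cauchy--Schwarz, exhibiting the explicit maximizer $\theta^{\star} = \bar{\theta} + \sqrt{\eps}\,\bar{\Sigma}b/\sqrt{b\T\bar{\Sigma}b}$. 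Your approach is self-contained and arguably tighter in one respect: the paper's first displayed equality is an appeal to strong duality that is stated without justification (it does hold, e.g.\ by Slater's condition for $\eps > 0$, or because the weak-duality upper bound is attained), and the Lagrangian route is slightly delicate at $\lambda = 0$ and in the degenerate cases $b = 0$ or $\eps = 0$, all of which your argument handles transparently. What the paper's approach buys in exchange is that it previews the duality machinery used elsewhere in the paper (e.g.\ in $\text{P}_3$ and the cutting-plane step of \cref{sec:frank-wolfe}), so the two proofs are interchangeable in substance; yours is the more elementary and complete one.
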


\end{appendices}

\end{document}